\documentclass{article} 
\usepackage{iclr2022_conference,times}


\usepackage{amsmath,amsfonts,bm}









\def\eqref#1{equation~\ref{#1}}









\def\1{\bm{1}}










\DeclareMathAlphabet{\mathsfit}{\encodingdefault}{\sfdefault}{m}{sl}
\SetMathAlphabet{\mathsfit}{bold}{\encodingdefault}{\sfdefault}{bx}{n}
\newcommand{\tens}[1]{\bm{\mathsfit{#1}}}

\def\tB{{\tens{B}}}

\def\tR{{\tens{R}}}













\usepackage{hyperref}
\usepackage{url}
\usepackage[utf8]{inputenc} 
\usepackage[T1]{fontenc}    
\usepackage{booktabs}       
\usepackage{amsfonts}       
\usepackage{nicefrac}       
\usepackage{microtype}      
\usepackage{xcolor}         
\usepackage{threeparttable}
\usepackage{musicography}

\usepackage{caption}
\usepackage{subcaption}

\usepackage[ruled,vlined]{algorithm2e}

\usepackage{graphicx}

\usepackage{amssymb}
\usepackage{amsmath}
\usepackage{amsthm}
\usepackage{bbm}
\usepackage{multirow}

\usepackage{wrapfig}
\usepackage[export]{adjustbox}

\newtheorem{theorem}{Theorem}[section]

\newtheorem{corollary}[theorem]{Corollary}

\newcommand{\nll}{\textrm{NLL}}

\newcommand{\dkl}{{D_{KL}}}
\newcommand{\de}{\mathcal{H}}

\newcommand{\rint}{r^{\textrm{int}}}

\DeclareMathOperator{\pdiv}{Div}

\newcommand{\fuwei}[1]{\textcolor{cyan}{#1}}

\newcommand{\name}{{RSPO}}
\newcommand{\starcrafttwo}{{StarCraft\uppercase\expandafter{\romannumeral2}}}

\newcommand{\ac}[1]{#1}

\newtheorem{assumption}{Assumption}

\title{Continuously Discovering Novel Strategies via Reward-Switching Policy Optimization }


\author{Zihan Zhou\thanks{Equal Contribution.}\,\,\thanks{Work done as a residence researcher at Shanghai Qi Zhi  Institute.}  \, $ ^{1\textrm{\fl}}$, Wei Fu\footnote[1]{} \, $ ^{2\textrm{\sh}}$, Bingliang Zhang$^{2}$, Yi Wu$^{23\textrm{\na}}$ \\
$^{1}$ CS Department, University of Toronto, $^{2}$ IIIS, Tsinghua University, $^{3}$ Shanghai Qi Zhi  Institute\\
$^\textrm{\fl}$ \texttt{footoredo@gmail.com}, $^\textrm{\sh}$ \texttt{fuwth17@gmail.com}, $^\textrm{\na}$ \texttt{jxwuyi@gmail.com} \\
}


%

\iclrfinalcopy 
\begin{document}

\maketitle

\begin{abstract}
We present Reward-Switching Policy Optimization ({\name}), a  paradigm to discover diverse strategies in complex RL environments by iteratively finding novel policies that are both locally optimal and sufficiently different from existing ones. 
To encourage the learning policy to consistently converge towards a previously undiscovered local optimum, {\name} switches between extrinsic and intrinsic rewards via a trajectory-based novelty measurement during the optimization process. 
When a sampled trajectory is sufficiently distinct, {\name} performs standard policy optimization with extrinsic rewards. For trajectories with high likelihood under existing policies, {\name} utilizes an intrinsic diversity reward to promote exploration. Experiments show that {\name} is able to discover a wide spectrum of strategies in a variety of domains, ranging from single-agent particle-world tasks and MuJoCo continuous control to multi-agent stag-hunt games and {\starcrafttwo} challenges.


\end{abstract}

\vspace{-10pt}
\section{Introduction}
\vspace{-4pt}
The foundation of deep learning successes is the use of stochastic gradient descent methods to obtain a local minimum for a highly non-convex learning objective. It has been a popular consensus with theoretical justifications that most local optima are very close to the global optimum~\citep{ma2020local}.  Consequently, algorithms for most classical deep learning applications only focus on the final performance of the learned local solution rather than \emph{which} local minimum is discovered. 

However, this assumption can be problematic in reinforcement learning (RL), where different local optima in the policy space can correspond to substantially different strategies. 
Therefore, discovering a diverse set of policies can be critical for many RL applications, such as producing natural dialogues in chatbot~\citep{li2016deep}, improving the chance of finding a targeted molecule~\citep{Pereira2021}, generating novel designs~\citep{wang2019poet} or training a specialist robot for fast adaptation~\citep{cully2015robots}. Moreover, in the multi-agent setting, a collection of diverse local optima could further result in interesting emergent behaviors~\citep{Liu2019EmergentCT,zheng2020ai, Baker2020Emergent} and discovery of multiple Nash equilibria~\citep{Tang2021DiscoveringDM}, which further help build strong policies that can adapt to unseen participating agents in a zero-shot manner in competitive~\citep{jaderberg2019human,vinyals2019grandmaster} and cooperative games~\citep{lupu2021trajectory}.

%
In order to obtain diverse strategies in RL, most existing works train a large population of policies in parallel~\citep{Pugh2016QualityDA, cully2015robots, ParkerHolder2020EffectiveDI}. 
These methods often adopt a soft learning objective by introducing additional diversity intrinsic rewards or auxiliary losses.
However, when the underlying reward landscape in the RL problem is particularly non-uniform, policies obtained by population-based methods often lead to visually identical strategies~\citep{omidshafiei2020navigating, Tang2021DiscoveringDM}. Therefore, population-based methods may require a substantially large population size in order to fully explore the policy space, which can be computationally infeasible. 
Moreover, the use of soft objective also results in non-trivial and subtle hyper-parameter tuning to balance diversity and the actual performance in the environment, which largely prevents these existing methods from discovering \emph{both diverse and high-quality} policies in practice~\citep{ParkerHolder2020EffectiveDI,lupu2021trajectory,Masood2019DiversityInducingPG}.
Another type of methods directly explores diverse strategies in the reward space by performing multi-objective optimization over human-designed behavior characterizations~\citep{Pugh2016QualityDA,cully2015robots} or random search over linear combinations of the predefined objectives~\citep{Tang2021DiscoveringDM,zheng2020ai,ma2020efficient}. 
Although these multi-objective methods are particularly successful, a set of well-defined and informative behavior objectives may not be accessible in most scenarios.

We propose a simple, generic and effective \emph{iterative} learning algorithm,  \emph{Reward-Switching Policy Optimization ({\name})}, for continuously discovering novel strategies under a single reward function without the need of any environment-specific inductive bias.
{\name} discovers novel strategies by solving a filtering-based objective, which restricts the RL policy to converge to a solution that is sufficiently different from a set of locally optimal reference policies. After a novel strategy is obtained, it becomes another reference policy for future RL optimization. Therefore, by repeatedly running {\name}, we can quickly derive diverse strategies in just a few iterations. 
In order to strictly enforce the novelty constraints in policy optimization, we adopt rejection sampling instead of optimizing a soft objective, which is adopted by many existing methods by converting the constraints as Lagrangian penalties or intrinsic rewards.
Specifically, {\name} \emph{only} optimizes extrinsic rewards over trajectories that have sufficiently low likelihood w.r.t. the reference policies.
Meanwhile, to further utilize those rejected trajectories that are not distinct enough, {\name} ignores the environment rewards on these trajectories and only optimizes diversity rewards to promote effective exploration. Intuitively, this process adaptively switches the training objective between extrinsic rewards and diversity rewards w.r.t. the novelty of each sampled trajectory, so we call it the \emph{Reward Switching} technique. 

We empirically validate {\name} on a collection of highly multi-modal RL problems, ranging from particle-world multi-target navigation~\citep{Mordatch2018EmergenceOG} and MuJoCo control~\citep{Todorov2012MuJoCoAP} in the single-agent domain, to the stag-hunt games~\citep{Tang2021DiscoveringDM} and {\starcrafttwo} challenges~\citep{starcraft} in the multi-agent domain. 
Experiments demonstrate that {\name} can reliably and efficiently discover surprisingly diverse strategies in all these challenging scenarios and substantially outperform existing baselines. 
The contributions can be summarized as follows:
\vspace{-5pt}
\begin{enumerate}
\item
We propose a novel algorithm, \emph{Reward-Switching Policy Optimization}, for continuously discovering diverse policies. The iterative learning scheme and reward-switching technique both significantly benefit the efficiency of discovering strategically different policies.
\item
We propose to use cross-entropy-based diversity metric for policy optimization and two additional diversity-driven intrinsic rewards for promoting diversity-driven exploration.
\item
Our algorithm is both general and effective across a variety of single-agent and multi-agent domains. Specifically, our algorithm is the first to learn the optimal policy in the stag-hunt games without any domain knowledge, and successfully discovers 6 visually distinct winning strategies via merely 6 iterations on a hard map in SMAC.
\end{enumerate}

\vspace{-11pt}
\section{Related work}
\vspace{-4pt}
Searching for diverse solutions in a highly multi-modal optimization problem has a long history and various block-box methods have been proposed~\citep{Miller1996GeneticAW, Deb2010FindingMS, Kroese2006TheCM}.
In reinforcement learning, one of the most popular paradigms is population-based training with multi-objective optimization. Representative works include the family of qualitative diversity (QD)~\citep{Pugh2016QualityDA} algorithms, such as MAP-Elites~\citep{cully2015robots}, which are based on genetic methods and assume a set of human-defined behavior characterizations, and policy-gradient methods~\citep{ma2020efficient, Tang2021DiscoveringDM}, which typically assume a distribution of reward function is accessible.
There are also some recent works that combine QD algorithms and policy gradient algorithms~\citep{cideron2020qd,nilsson2021policy}.
The DvD algorithm~\citep{ParkerHolder2020EffectiveDI} improves QD by optimizing \emph{population diversity (PD)}, a KL-divergence-based diversity metric, without the need of hand-designed behavior characterizations. Similarly, \citet{lupu2021trajectory} proposes to maximize \emph{trajectory diversity}, i.e., the approximated Jensen-Shannon divergence with action-discounting kernel, to train a diversified population.

There are also works aiming to learn policies iteratively.
PSRO~\citep{Lanctot2017AUG} focuses on learning Nash equilibrium strategies in zero-sum games by maintaining a strategy oracle and repeatedly adding best responses to it.
Various improvements have been made upon PSRO by using different metrics to promote diverse oracle strategies~\citep{liu2021unifying,Nieves2021ModellingBD}.
\citet{Hong2018DiversityDrivenES} utilizes the KL-divergence between the current policy and a past policy version as an exploration bonus, while we are maximizing the diversity w.r.t a fixed set of reference policies, which is more stable and will not incur a cyclic training process. 
Diversity-Inducing Policy Gradient (DIPG)~\citep{Masood2019DiversityInducingPG} utilizes maximum mean discrepancy (MMD) between policies as a soft learning objective to iteratively find novel policies. 
By contrast, our method utilizes a filtering-based objective via reward switching to strictly enforce all the diversity constraints.
\citet{sun2020novel} adopts a conceptually similar objective by early terminating episodes that do not incur sufficient novelty. However, \citet{sun2020novel} does not leverage any exploration technique for those rejected samples and may easily suffer from low sample efficiency in challenging RL tasks we consider in this paper.
There is another concurrent work with an orthogonal focus, which directly optimizes diversity with reward constraints~\citep{zahavy2021discovering}. We remark that enforcing a reward constraint can be problematic in multi-agent scenarios where different Nash Equilibrium can have substantially different pay-offs. 
In addition, the ridge rider algorithm~\citep{ParkerHolder2020RidgeRF} proposes to follow the eigenvectors of the Hessian matrix to discover diverse local optima with theoretical guarantees, but Hessian estimates can be extremely inaccurate in complex RL problems.

Another stream of work uses unsupervised RL to discover diverse skills without the use of environment rewards, such as DIYAN~\citep{Eysenbach2019DiversityIA} and DDLUS~\citep{Hartikainen2020DynamicalDL}. However, ignoring the reward signal can substantially limit the capability of discovering strategic behaviors. 
SMERL~\citep{kumar2020one} augments DIYAN with extrinsic rewards to induce diverse solutions for robust generalization.
These methods primarily focus on learning low-level locomotion while we tackle a much harder problem of discovering strategically and visually different policies.

Finally, our algorithm is also conceptually related to exploration methods~\citep{zheng2018learning,Burda2019ExplorationBR,simmons2019qxplore}, since it can even bypass inescapable local optima in challenging RL environments.
Empirical comparisons can be found in Section~\ref{sec:expr}. 
However, we emphasize that our paper tackles a much more challenging problem than standard RL exploration: we aim to discover as many \emph{distinct local optima} as possible.
That is, even if the global optimal solution is discovered, we still want to continuously seek for sufficiently distinct local-optimum strategies. We remark that such an objective is particularly important for multi-agent games where finding all the Nash equilibria can be necessary for analyzing rational multi-agent behaviors~\citep{Tang2021DiscoveringDM}.

\vspace{-10pt}
\section{Method}
\vspace{-4pt}
\subsection{Preliminary}

We consider environments that can be modeled as a Markov decision process (MDP)~\citep{Puterman1994MarkovDP} $M=(\mathcal{S}, \mathcal{A}, R, P, \gamma)$ where $\mathcal{S}$ and $\mathcal{A}$ are the state and action space respectively, $R(s,a)$ is the reward function, $P(s'|s,a)$ is the transition dynamics and $\gamma$ is the discount factor. 
We consider a stochastic policy $\pi_\theta$ paramterized by $\theta$.
Reinforcement learning optimizes the policy w.r.t. the expected return $J(\pi)=\mathbb{E}_{\tau\sim\pi}[\sum_t\gamma^tr_t]$ over the sampled trajectories from $\pi$, where a trajectory $\tau$ denotes a sequence of state-action-reward triplets, i.e., $\tau=\{(s_t,a_t,r_t)\}$.
Note that this formulation can be naturally applied to multi-agent scenarios with homogeneous agents with shared state and action space, where learning a shared policy for all the agents will be sufficient. 

Rather than learning a single solution for $J(\theta)$, we aim to discover a diverse set of $M$ policies, i.e, $\{\pi_{\theta^k}|1\le k\le M\}$, such that all of these polices are locally optimized under $J(\theta)$ and mutually distinct w.r.t. some distance measure $D(\pi_{\theta^i},\pi_{\theta^j})$, i.e.,
\vspace{-3pt}
\begin{align}\label{eq:original-full}
    \max_{\theta^k} J(\theta^k) \;\;\forall 1\le k\le M, \quad \textrm{subject to }\, D(\pi_{\theta^i},\pi_{\theta^j})\ge\delta, \;\;\;\forall 1\le i< j\le M.
\end{align}

\vspace{-12pt}
Here $D(\cdot,\cdot)$ measures how different two policies are and $\delta$ is the novelty threshold. For conciseness, in the following content, we omit $\theta$ and use $\pi_k$ to denote the policy with parameter $\theta^k$.

\vspace{-3pt}
\subsection{Iterative constrained policy optimization}
Directly solving Eq.~(\ref{eq:original-full})  suggests a  population-based training paradigm, which requires a non-trivial optimization technique for the pairwise constraints and typically needs a large population size $M$. 
Herein, we adopt an iterative process to discover novel policies: in the $k$-th iteration, we optimize a single policy $\pi_k$ with the constraint that $\pi_k$ is sufficiently distinct from previously discovered policies $\pi_1,\ldots,\pi_{k-1}$. Here, the term ``iteration'' is used to denote the process of learning a new policy.
Formally, we solve the following iterative constrained optimization problem for iteration $1\le k\le M$:
\vspace{-12pt}
\begin{align}\label{eq:cpo}
    \theta_k=\arg\max_{\theta} J(\theta) , \quad \textrm{subject to }\, D(\pi_\theta,\pi_j)\ge\delta,\;\;\;\forall 1\le j<k.
\end{align}

\vspace{-12pt}
Eq.~(\ref{eq:cpo}) reduces the population-based objective to a standard constrained optimization problem for a single policy, which is much easier to solve. Such an iterative procedure does not require a large population size $M$ as is typically necessary in population-based methods. And, in practice, only a few iterations could result in a sufficiently diverse collection of policies.
We remark that, in theory, directly solving the constraint problem in Eq.~(\ref{eq:cpo}) may lead to a solution that is not a local optimum w.r.t. the unconstrained objective $J(\theta)$. It is because a solution in Eq.~(\ref{eq:cpo}) can be located on the boundary of the constraint space (i.e., $D(\pi_{\theta},\pi_j)=\delta$), which is undesirable according to our original goal. However, this issue can be often alleviated by properly setting the novelty threshold $\delta$. 

The natural choice for measuring the policy difference is KL divergence, as done in the trust-region constraint~\citep{Schulman2015TrustRP, Schulman2017ProximalPO}. 
However, in our setting where the difference between policies should be maximized, using KL as the diversity measure would inherently encourage learning a policy with small entropy, which is typically undesirable in RL problems (see App.~\ref{sec:motivation-ce} for a detailed derivation). Therefore, we adopt the accumulative cross-entropy as our diversity measure, i.e.,
\vspace{-3pt}
\begin{align}
    D(\pi_i,\pi_j):=&\de(\pi_i,\pi_j)=\mathbb{E}_{\tau\sim\pi_i}\left[-\sum_t\log\pi_j(a_t\mid s_t)\right]\label{eq:ce}
\end{align}
\vspace{-14pt}
\subsection{Trajectory Filtering for enforcing diversity constraints}
A popular approach to solve the constrained optimization problem in Eq.~(\ref{eq:cpo}) is to use Lagrangian multipliers to convert the constraints to penalties in the learning objective. Formally, let $\beta_1,\dots,\beta_{k-1}$ be a set of hyperparameters, the soft objective for Eq.~(\ref{eq:cpo}) is defined by
\vspace{-3pt}
\begin{equation}
    J_{\textrm{soft}}(\theta):=J(\pi_\theta)+\sum_{j=1}^{k-1} \beta_j D(\pi_\theta,\pi_j).\label{eq:soft-cpo}
\end{equation}

\vspace{-7pt}
Such a soft objective substantially simplifies optimization and is widely adopted in RL applications.
However, in our setting, since cross-entropy is a particularly dense function, including the diversity bonus as part of the objective may largely change the reward landscape of the original RL problem, which could make the final solution diverge from a locally optimal solution w.r.t $J(\theta)$.
Therefore, it is often necessary to anneal the Lagrangian multipliers $\beta_j$, which is particularly challenging in our setting with a large number of reference policies.
Moreover, since $D(\pi_\theta,\pi_j)$ is estimated over the trajectory samples, it introduces substantially high variance to the learning objective, which becomes even more severe as more policies are discovered. 

Consequently, we propose 
a \textit{Trajectory Filtering} objective
to alleviate the issues of the soft objective.
Let's use $\textrm{NLL}(\tau;\pi)$ to denote the negative log-likelihood of a trajectory $\tau$ w.r.t. a policy $\pi$, i.e., $\nll(\tau;\pi)=-\sum_{(s_t, a_t)\sim \tau} \log \pi(a_t|s_t)$.
We apply rejection sampling over the sampled \emph{trajectories} $\tau\sim\pi_\theta$ such that we train on those trajectories satisfying \emph{all} the constraints, i.e., $\textrm{NLL}(\tau;\pi_j)\ge\delta$ for each reference policy $\pi_j$. 
Formally, for each sampled trajectory $\tau$, we define a filtering function $\phi(\tau)$, which indicates whether we want to reject the sample $\tau$, and use $\mathbb{I}[\cdot]$ to denote the indicator function, and then the trajectory filtering objective $J_{\textrm{filter}}(\theta)$ can be expressed as 
\vspace{-2pt}
\begin{align}
    J_{\textrm{filter}}(\theta)= \mathbb{E}_{\tau\sim\pi_\theta}\left[\ac{\phi(\tau)}\sum_t \gamma^t r_t\right],\quad\textrm{where}\;\;\phi(\tau):=\prod_{j=1}^{k-1}\mathbb{I}[\nll(\tau;\pi_j)\ge\delta].\label{eq:filter-cpo}
\end{align}

\vspace{-7pt}
We call the objective in Eq.~(\ref{eq:filter-cpo}) a 
\textit{filtering}
objective.
We show in App.~\ref{appendix:barrier} that solving Eq.~(\ref{eq:filter-cpo}) is equivalent to solving Eq.~(\ref{eq:cpo}) with an even stronger diversity constraint.
In addition, we also remark that trajectory filtering shares a conceptually similar motivation with the clipping term in Proximal Policy Optimization~\citep{Schulman2017ProximalPO}.


\subsection{Intrinsic rewards for diversity exploration}
The main issue in Eq.~(\ref{eq:filter-cpo}) is that trajectory filtering may reject a significant number of trajectories, especially in the early stage of policy learning since the policy is typically initialized to the a random policy. Hence, it is often the case that most of the data in a batch are abandoned, which leads to a severe wasting of samples and may even break learning due to the lack of feasible trajectories. 

\emph{Can we make use of those rejected trajectories?}
We propose to additionally apply a novelty-driven objective on those rejected samples.
Formally, 
we use $\phi_j(\tau)$ to denote whether $\tau$ violates the constraint of $\pi_j$, i.e.,  $\phi_j(\tau)=\mathbb{I}[\nll(\tau,\pi_j)\ge\delta ]$.
Then we have the following \textit{switching} objective:
\vspace{-2pt}
\begin{align}
    J_{\textrm{switch}}=\mathbb{E}_{\tau\sim\pi_\theta}\left[\phi(\tau)\sum_t \gamma^t r_t
    +\ac{\lambda}\sum_j\left(1-\phi_j(\tau)\right)\nll(\tau,\pi_j)\right]
    \label{eq:switch-objective}
\end{align}

\vspace{-9pt}
The above objective simultaneously maximizes the extrinsic return on accepted trajectories and the cross-entropy on rejected trajectories. 
It can be proved that solving Eq.~(\ref{eq:switch-objective}) is also equivalent to solving Eq.~(\ref{eq:cpo}) with a stronger diversity constraint (see App.~\ref{appendix:barrier}).

Furthermore, Eq.~(\ref{eq:switch-objective}) can be also interpreted as introducing additional cross-entropy intrinsic rewards on rejected trajectories (i.e., $\phi_j(\tau)=0$). More specifically, given  $\nll(\tau;\pi)=-\sum_{(s_t,a_t)\in\tau} \log \pi(a_t|s_t)$, an intrinsic reward $\rint(s_t,a_t;\pi_j)=-\log\pi_j(a_t\mid s_t)$ is applied to each state-action pair $(s_t, a_t)$ from every rejected trajectory $\tau$.
Conceptually, this suggests an even more general paradigm for diversity exploration: 
we can optimize extrinsic rewards on accepted trajectories while utilizing novelty-driven intrinsic rewards on rejected trajectories for more effective exploration, i.e., by encouraging the learning policy $\pi_\theta$ to be distinct from a reference policy $\pi_j$.

Hence, we propose two different types of intrinsic rewards to promote diversity exploration: one is \emph{likelihood-based}, which directly follows Eq.~(\ref{eq:switch-objective}) and focuses more on behavior novelty, and the other is \emph{reward-prediction-based}, which focuses more on achieving novel states and reward signals. 

\vspace{-6pt}
\paragraph{Behavior-driven exploration. } 
The behavior-driven intrinsic reward $\rint_{\tB}$ is defined by
\vspace{-5pt}
\begin{equation}
    \rint_{\tB}(a,s;\pi_j)=-\log\pi_j(a\mid s).
\end{equation}

\vspace{-9pt}
$\rint_{\tB}$ encourages the learning policy to output different actions from those reference policies and therefore to be more likely to be accepted.
Note that $\rint_{\tB}$ can be directly interpreted as the Lagrangian penalty utilized in the soft objective $J_{\textrm{soft}}(\theta)$. 

\vspace{-6pt}
\paragraph{Reward-driven exploration.}
A possible limitation of behavior-driven exploration is that it may overly focus on visually indistinguishable action changes rather than high-level strategies.
Note that in RL problems with diverse reward signals, it is usually preferred to discover policies that can achieve different types of rewards~\citep{Riley2020Reward, Wang*2020Influence-Based}.
Inspired by the curiosity-driven exploration method~\citep{pathak2017curiosity}, 
we adopt a model-based approach for predicting novel reward signals. 
In particular, after obtaining each reference policy $\pi_j$, we learn a reward prediction function $f(s,a;\psi_j)$ trained by minimizing the expected MSE loss $\mathcal{L}(\psi_j)=\mathbb{E}_{\tau\sim\pi_j,t}\left[|f(s_t, a_t;\psi_j)-r_t|^2\right]$ over the trajectories generated by $\pi_j$. 
The reward prediction function $f(s,a;\psi_j)$ is expected to predict the extrinsic environment reward more accurately on state-action pairs that are more frequently visited by $\pi_j$ and less accurately on rarely visited pairs.
To encourage policy exploration, we adopt the reward prediction error as our reward-driven intrinsic reward $\rint_{\tR}(a,s;\pi_j)$. Formally, given the transition triplet $(s_t,a_t,r_t)$, $\rint_{\tR}(a,s;\pi_j)$ is defined by
\vspace{-5pt}
\begin{equation}
    \rint_{\tR}(s_t,a_t;\pi_j)=|f(s_t,a_t;\psi_j)-r_t|^2.
\end{equation}

\vspace{-9pt}
We remark that 
reward-driven exploration can be also interpreted as approximately maximizing the $f$-divergence of joint state occupancy measure between policies~\citep{liu2021unifying}. 
By combining these two intrinsic rewards together, we approximately maximize the divergence of \emph{both actions and states} between policies to effectively promote diversity. By default, we use behavior-driven intrinsic reward for computational simplicity and optionally augment it with reward-driven intrinsic reward in more challenging scenarios (see examples in Section~\ref{sec:stag-hunt}).


\vspace{-4pt}
\subsection{Reward-Switching Policy Optimization}
\vspace{-2pt}
\label{sec:rspo}

We define the {\name} function $r^{\textrm{\name}}_t$ by
\vspace{-4pt}
\begin{equation}
     r^{\textrm{\name}}_t=\phi(\tau)r_t+\lambda \sum_{j}(1-\phi_j(\tau))\rint(a_t,s_t;\pi_j), \label{eqn:combined-reward}
\end{equation}

\vspace{-12pt}
where $\lambda$ is a scaling hyper-parameter. 
Note that extrinsic rewards and intrinsic rewards are mutually exclusive, i.e., a trajectory $\tau$ may be either included in $J_{\textrm{filtering}}$ or be rejected to produce exploration bonuses. Conceptually, our method is adaptively ``switching''  between extrinsic and intrinsic rewards during policy gradients, which is so-called \emph{Reward-Switching Policy Optimization (\name)}.
We also remark that the intrinsic reward will constantly push the learning policy towards the feasible policy space and the optimization objective will eventually converge to $J(\theta)$ when no trajectory is rejected. 

In addition to the aforementioned {\name} algorithm, we also introduce two implementation enhancements for better empirical performances, especially in some performance-sensitive scenarios. 

\vspace{-10pt}
\paragraph{Automatic threshold selection.} 
We provide an empirical way of adjusting $\delta$.
In some environments, $\delta$ is sensitive to each reference policy. Instead of tuning $\delta$ for each reference policy, we choose its corresponding threshold by $\delta_j=\alpha\cdot D(\pi^{\textrm{rnd}}, \pi_j)$, where $\pi^{\textrm{rnd}}$ is a fully random policy and $\alpha$ is a task-specific hyperparameter. We remark that $\alpha$ is a constant parameter across training iterations and is much easier to choose than manually tuning $\delta$, which requires subtle variation throughout multiple training iterations. We use automatic threshold selection by default. Detailed values of $\alpha$ and the methodology of tuning $\alpha$ can be found in App.~\ref{sec:app-impl} and App.~\ref{appendix:sensitivity} respectively.

\vspace{-9pt}
\paragraph{Smoothed-switching for intrinsic rewards.} 
Intrinsic rewards have multiple switching indicators, i.e., $\phi_1,\ldots,\phi_{k-1}$. Moreover, for different trajectories, different subsets of indicators will be turned on and off, which may result in a varying scale of intrinsic rewards and hurt training stability.
Therefore, in some constraint-sensitive cases, we propose a smoothed switching mechanism which could further improve practical performance. Specifically, we maintain a running average $\tilde{\phi_j}$ over all the sampled trajectories for each indicator $\phi_j(\tau)$, and use these smoothed indicators to compute intrinsic rewards defined in Eq.~(\ref{eqn:combined-reward}). Smoothed-switching empirically improves training stability when a large number of reference policies exist, such as in stag-hunt games (see Section~\ref{sec:stag-hunt}).




\vspace{-6pt}
\section{Experiments}
\vspace{-4pt}
\label{sec:expr}

To illustrate that our method can be applied to \emph{general} RL applications, we experiment on 4 domains that feature multi-modality of solutions, 
including a single-agent navigation problem in the particle-world~\citep{Mordatch2018EmergenceOG}, 2-agent Markov stag-hunt games~\citep{Tang2021DiscoveringDM}, continuous control in MuJoCo~\citep{Todorov2012MuJoCoAP}, and the {\starcrafttwo} Multi-Agent Challenge (SMAC)~\citep{vinyals2017starcraft,starcraft}. 
In particle world and stag-hunt games, all the local optima can be precisely calculated, so we can quantitatively evaluate the effectiveness of different algorithms by
measuring how many distinct strategy modes are discovered.
In MuJoCo control and SMAC, we qualitatively demonstrate that our method can discover a large collection of visually distinguishable strategies.
Notably, we primarily present results from purely RL-based methods which do not require prior knowledge over possible local optima for a fair comparison. We also remark that when a precise feature descriptor of local optima is feasible, it is also possible to apply evolutionary methods~\citep{nilsson2021policy} to a subset of the scenarios we considered. For readers of further interest, a thorough study with discussions can be found in App.~\ref{appendix:pga-map-elites}.

Our implementation is based on PPO~\citep{Schulman2017ProximalPO} on a desktop machine with one CPU and one NVIDIA RTX3090 GPU. 
All the algorithms are run for the same number of total environment steps and the same number of iterations (or population size). More details can be found in appendix.

\vspace{-3pt}
\subsection{Single-agent particle-world environment}
\label{sec:4-goals}
\vspace{-1pt}

\begin{wrapfigure}{r}{0.41\textwidth}
    \vspace{-4mm}
    \begin{subfigure}[b]{0.13\textwidth}
            \centering
            \includegraphics[width=\textwidth]{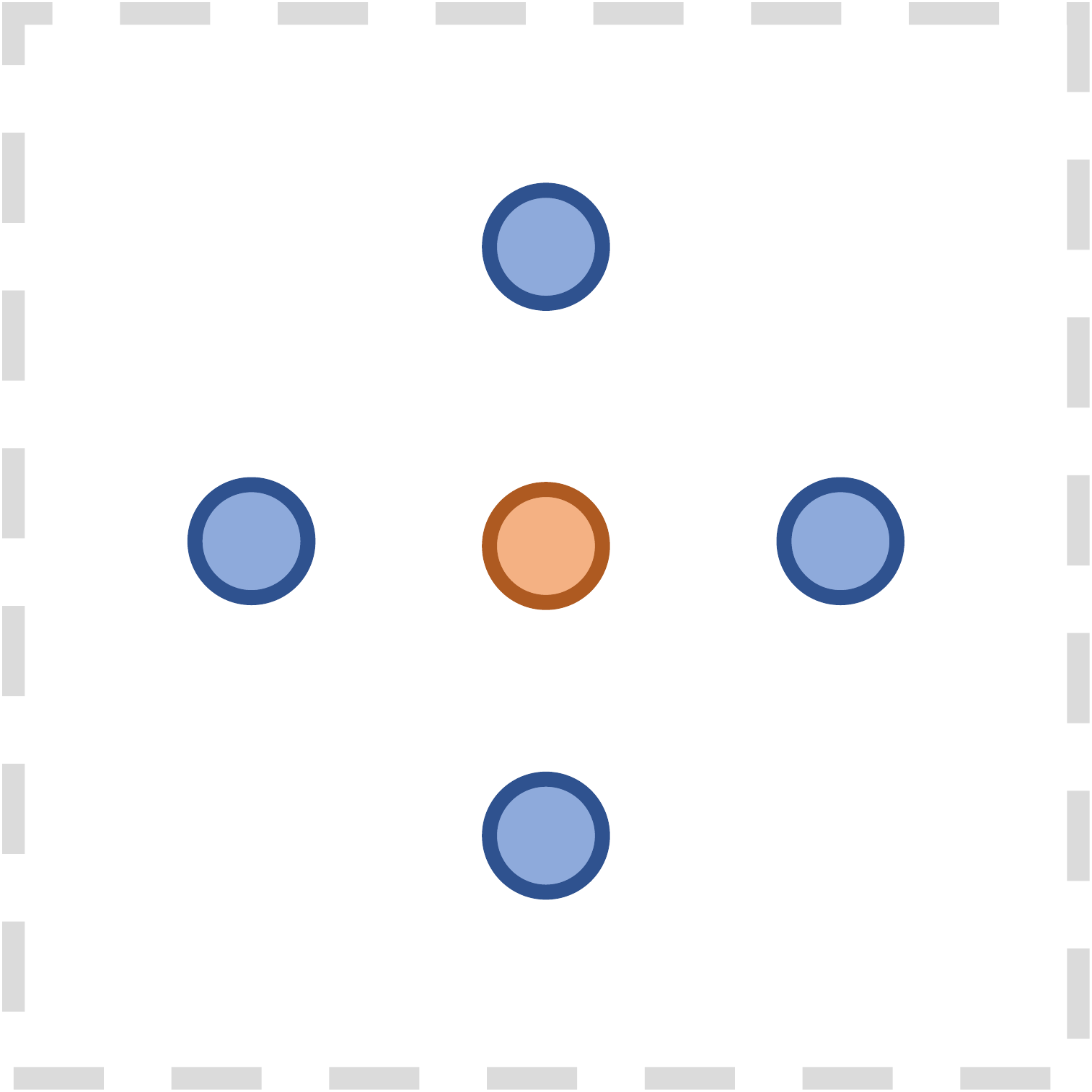}
            \caption{\small{\textit{Easy}}}
            \label{fig:4-goals-easy}
        \end{subfigure}
        \begin{subfigure}[b]{0.13\textwidth}
            \centering
            \includegraphics[width=\textwidth]{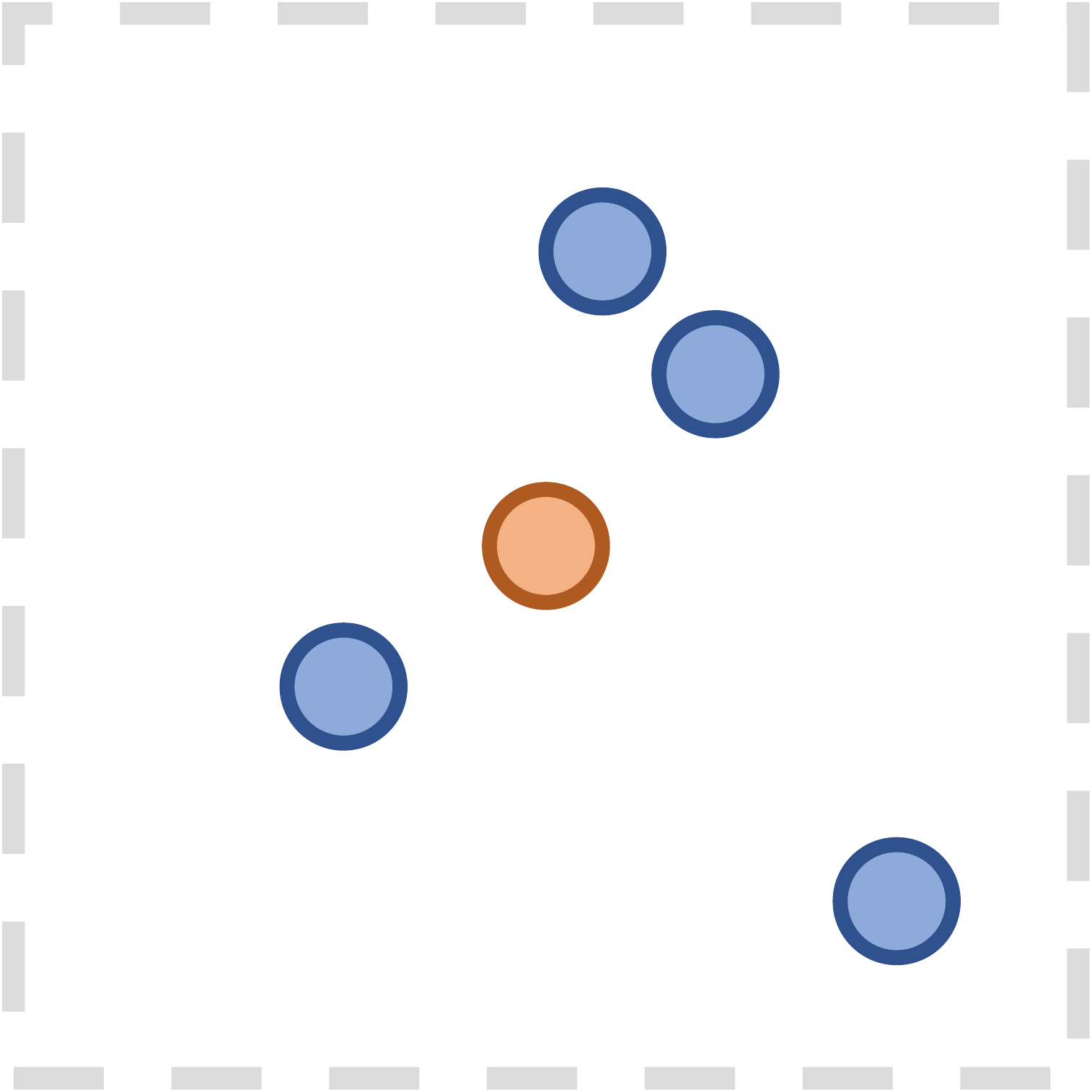}
            \caption{\small{\textit{Medium}}}
            \label{fig:4-goals-medium}
        \end{subfigure}
        \begin{subfigure}[b]{0.13\textwidth}
            \centering
            \includegraphics[width=\textwidth]{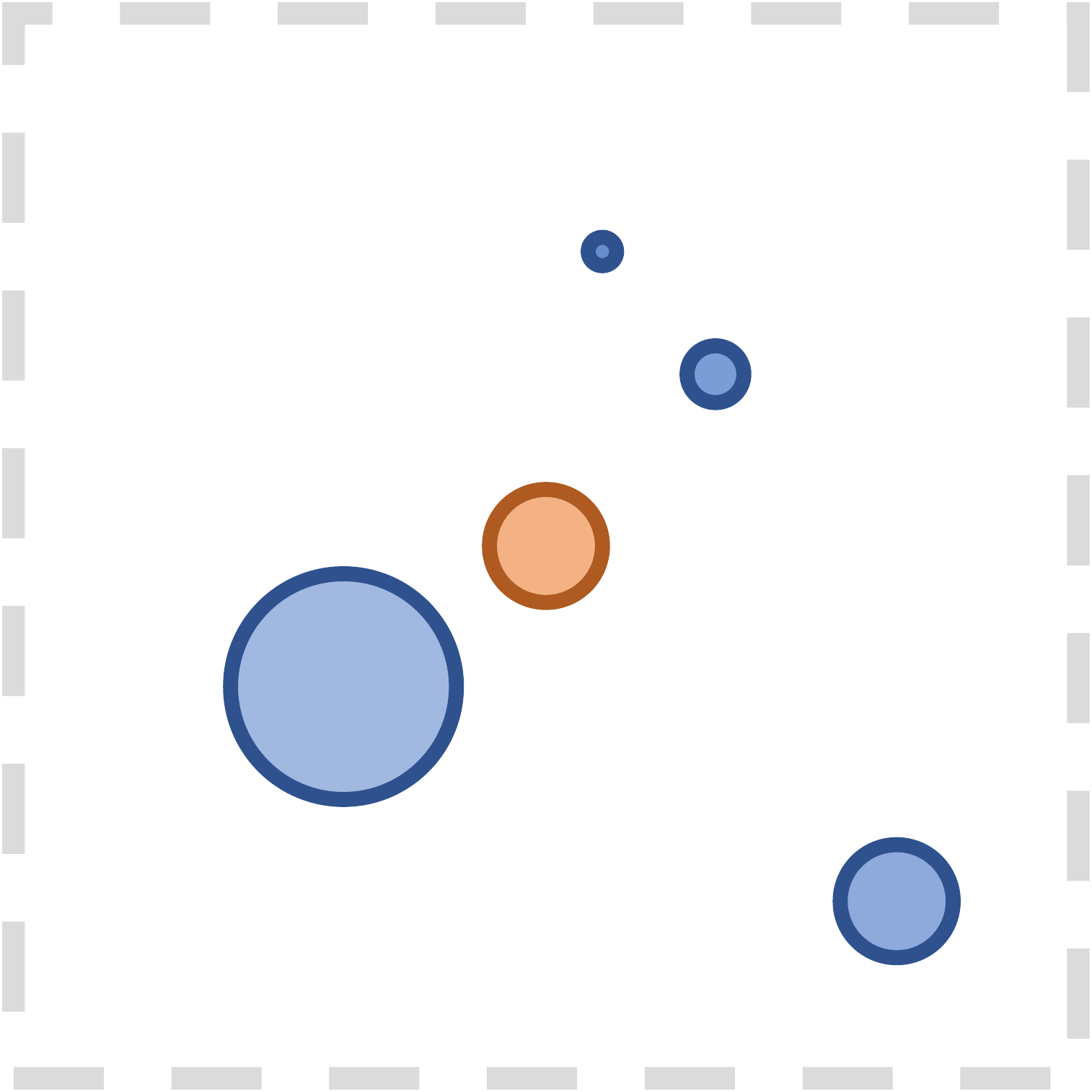}
            \caption{\small{\textit{Hard}}}
            \label{fig:4-goals-hard}
        \end{subfigure}
        \vspace{-2mm}
        \caption{The agent (orange) and landmarks (blue) in \textit{4-Goals}.}
        \label{fig:4-goals-env}
        \vspace{-4mm}
\end{wrapfigure}
We consider a sparse-reward navigation scenario called \textit{4-Goals} (Fig.~\ref{fig:4-goals-env}). The agent starts from the center and will receive a  reward when reaching a landmark. We set up 3 difficulty levels.
In the easy mode, the landmark locations are fixed. In the medium mode, the landmarks are randomly placed. In the hard mode, landmarks are not only placed randomly but also have different sizes and rewards. Specifically, the sizes and rewards of each landmark are $2\times$, $1\times$, $0.5\times$, $0.25\times$ and $1\times$, $1.1\times$, $1.2\times$, $1.3\times$ of the normal one respectively. 
We remark that in the hard mode, the landmark size decreases at an exponential rate while the reward gain is only marginal, making it exponentially harder to discover policies towards those smaller landmarks.
We compare {\name} with several baselines, including PPO with restarts (PG),
Diversity-Inducing Policy Gradient~(DIPG)~\citep{Masood2019DiversityInducingPG}, population-based training with cross-entropy objective (PBT-CE), DvD~\citep{ParkerHolder2020EffectiveDI}, SMERL~\citep{kumar2020one}, and Random Network Distillation~(RND)~\citep{Burda2019ExplorationBR}.
RND is designed to explore the policy with the highest reward, so we only evaluate RND in the hard mode.

\begin{figure}[t]
    \centering
    \begin{subfigure}[t]{0.73\textwidth}
        \centering
        \includegraphics[width=\textwidth]{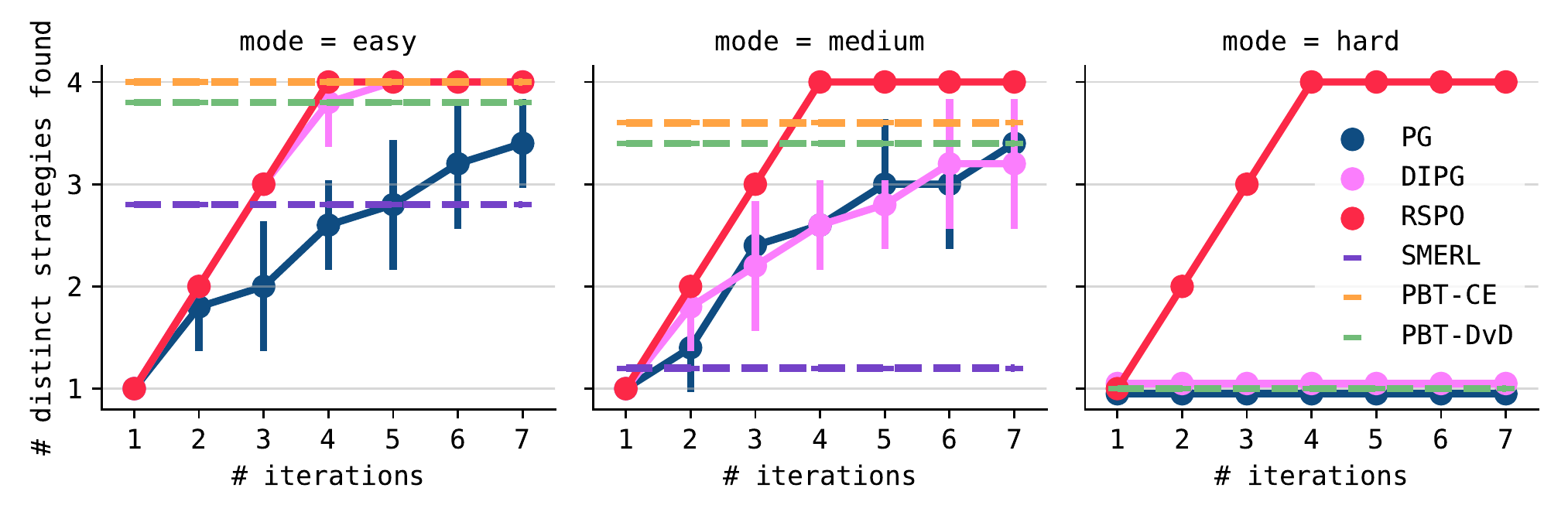}
        \vspace{-7mm}
        \caption{Mean number of distinct strategies found on \textit{4-Goals}.}
        \label{fig:4-balls-num}
    \end{subfigure}
    \hfill
    \begin{subfigure}[t]{0.25\textwidth}
        \centering
        \includegraphics[width=\textwidth]{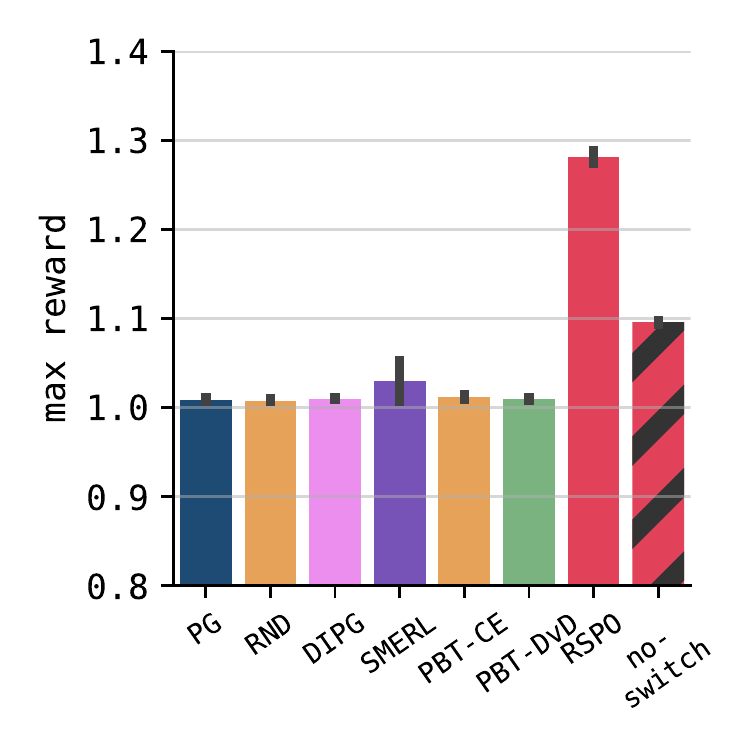}
        \vspace{-7mm}
        \caption{Rewards on \textit{hard} mode.}
        \label{fig:4-balls-hard-rew}
    \end{subfigure}
    \vspace{-1mm}
    \caption{Experiment results on \textit{4-Goals} for $M=7$ iterations averaged over 5 random seeds. Error bars are $95\%$ confidence intervals. }
    \label{fig:4-balls-modes}
    \vspace{-5mm}
\end{figure}
The number of distinct local optima discovered by different methods is presented in Fig.~\ref{fig:4-balls-num}. {\name} consistently discovers all the 4 modes within 4 iterations even without the use of any intrinsic rewards over rejected trajectories (i.e., $\rint_t=0$). 
DIPG finds 4 strategies in 4 out of the 5 runs in the easy mode but performs no better than PG in the two harder modes. 
Fig.~\ref{fig:4-balls-hard-rew} shows the highest expected return achieved over the policy population in the hard mode. 
{\name} is the only algorithm that successfully learns the optimal policy towards the smallest ball.
We also report the performance of {\name} optimizing the soft objective in Eq.~(\ref{eq:soft-cpo}) with the default behavior-driven intrinsic reward $\rint_{\tB}$ (\emph{no-switch}), which was able to discover the policy towards the second largest landmark. 
Comparing \emph{no-switch} with \emph{$\rint_t=0$}, we could conclude that the filtering-based objective can be critical for {\name} to discover sufficiently different modes.
We also remark that the \emph{no-switch} variant only differs from DIPG by the used diversity metric.
This suggests that in environments with high state variance (e.g. landmarks with random sizes and positions), state-based diversity metric may be less effective.

\vspace{-3pt}
\subsection{2-agent Markov stag-hunt games}
\label{sec:stag-hunt}
\vspace{-1pt}

\begin{wrapfigure}{r}{0.25\textwidth}
\vspace{-5mm}
\includegraphics[width=0.22\textwidth]{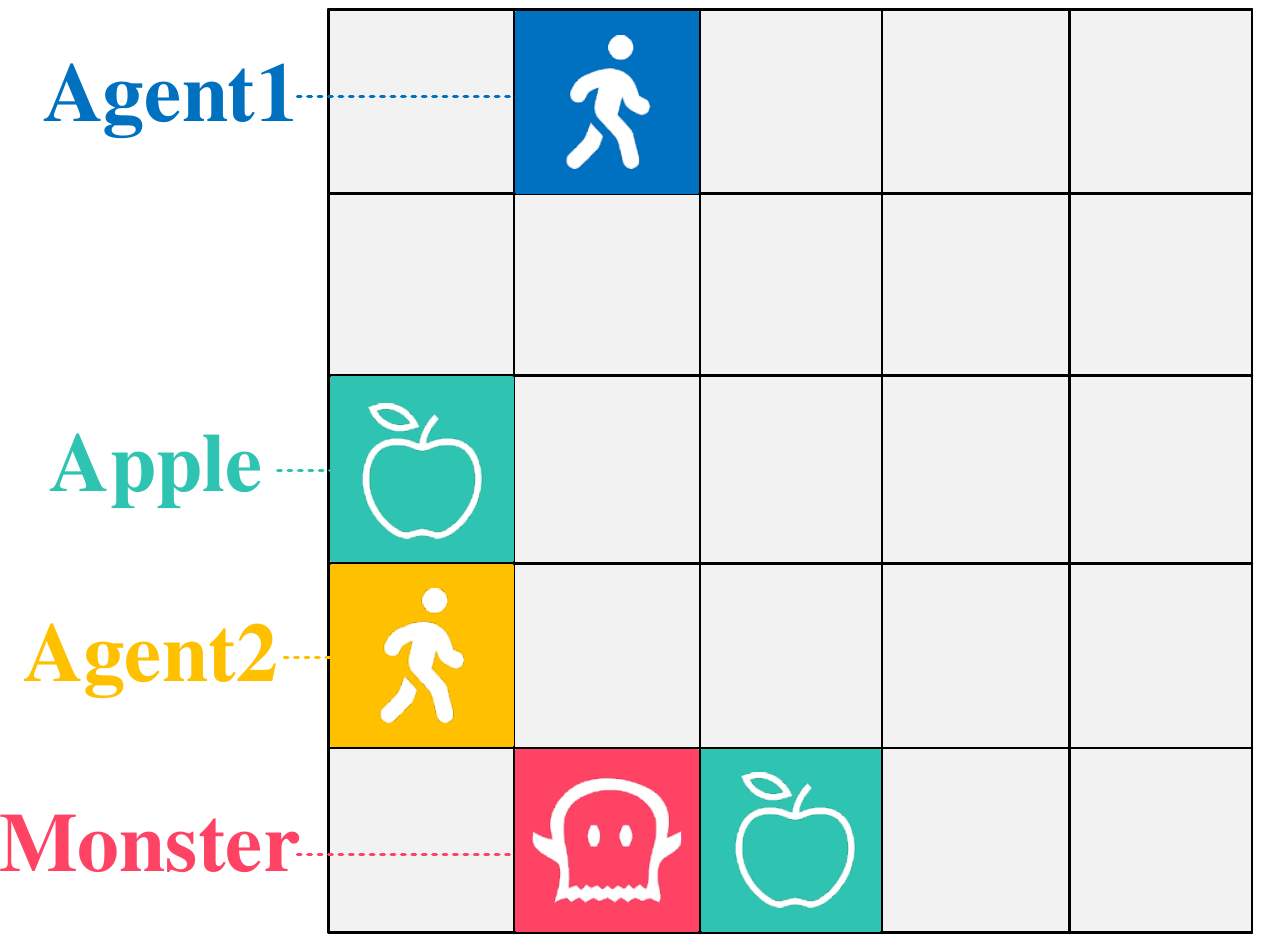}
\vspace{-1mm}
\caption{\textit{Monster-Hunt}}
\label{fig:monster-hunt-env}
\vspace{-3mm}
\end{wrapfigure}
We further show the effectiveness of {\name} on two grid-world stag-hunt games developed in \citet{Tang2021DiscoveringDM}, \textit{Monster-Hunt} and \textit{Escalation}, both of which have very distinct Nash Equilibria (NEs) for self-play RL methods to converge to. 
Moreover, the optimal NE with the highest rewards for both agents in these games are \emph{risky cooperation}, i.e., a big penalty will be given to an agent if the \emph{other} agent stops cooperation. 
This makes most self-play RL algorithms converge to the safe non-cooperative NE strategies with lower rewards. It has been shown that \emph{none} of the state-of-the-art exploration methods can discover the global optimal solution without knowing the underlying reward structure~\citep{Tang2021DiscoveringDM}.
We remark that since there are enormous NEs in these environments as shown in Fig.~\ref{fig:monster-hunt-passive} and~\ref{fig:escalation-num}, population-based methods (PBT) require a significantly large population size for meaningful performances, which is computationally too expensive to run. 
Therefore, we do not include the results of PBT baselines.
We also apply the \textit{smoothed-switching} heuristic for {\name} in this domain.
Environment details can be found in App.~\ref{appendix:stag-hunt-details}.

\vspace{-8pt}
\paragraph{The \emph{Monster-Hunt} game.}

The \emph{Monster-Hunt} game (Fig.~\ref{fig:monster-hunt-env}) contains a monster and two apples.
When a single agent meets the monster, it gets a penalty of $-2$. When both agents meet the monster at the same time, they ``catch'' the monster and both get a bonus of $5$. 
When a player meets an apple, it gets a bonus of $2$. 
The optimal strategy, i.e., both agents move towards the monster, is a risky cooperative NE since an agent will receive a penalty if the other agent deceives. The non-cooperative  NE for eating apples is a safe NE and easy to discover but has lower rewards.

\begin{figure}[t]
    \centering
    \includegraphics[width=\textwidth]{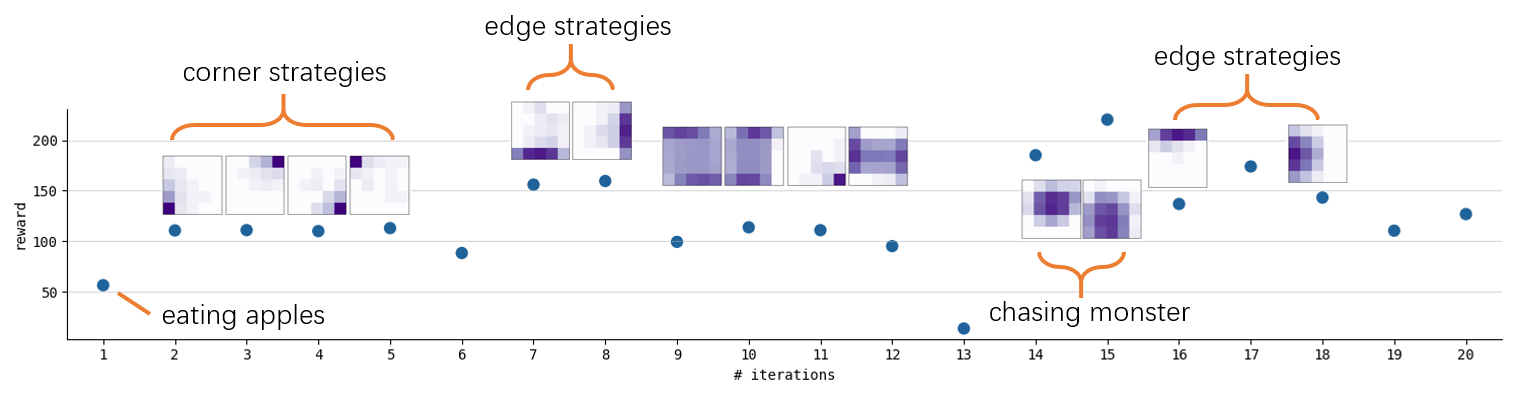}
    \caption{Different strategies found in a run of 20 iterations diversity setting \name{} in \textit{Monster-Hunt}. We plot the heatmap of the two agents' meeting point to indicate the type of the found strategies.}
    \label{fig:monster-hunt-passive}
    \vspace{-4mm}
\end{figure}

\begin{figure}[t]
    \centering
    \begin{minipage}[c]{0.65\textwidth}
        \centering
        \captionof{table}{Types of strategies discovered by each methods in \textit{Monster-Hunt} over 20 iterations.}
        \label{tab:monster-hunt-strategies}
        \vspace{-3mm}
        \adjustbox{valign=c}{\begin{tabular}{c|lcccc}
            \toprule
             & & Apple & Corner & Edge & Chase \\
            \midrule
            \multirow{4}{*}{\small{Ablation}}&\small{RSPO} & \checkmark & \checkmark & \checkmark & \checkmark \\
            &\small{- No switch} & \checkmark & \checkmark &  &  \\
            &\small{- No $\rint$} & \checkmark & &  &  \\
            &\small{- $\rint_{\tB}$ only} & \checkmark & \checkmark & \checkmark &  \\
            \midrule
            \multirow{3}{*}{\small{Baseline}}&\small{RPG} & \checkmark & \checkmark &  & \checkmark \\
            &\small{MAVEN} & \checkmark & \checkmark &  & \\
            &\small{PG/DIPG/RND} & \checkmark & &  &\\
            \bottomrule
        \end{tabular}}
    \end{minipage}
    \hspace{3mm}
    \begin{scriptsize}
    \begin{minipage}[c]{0.3\textwidth}
        \centering
        \includegraphics[width=0.9\textwidth]{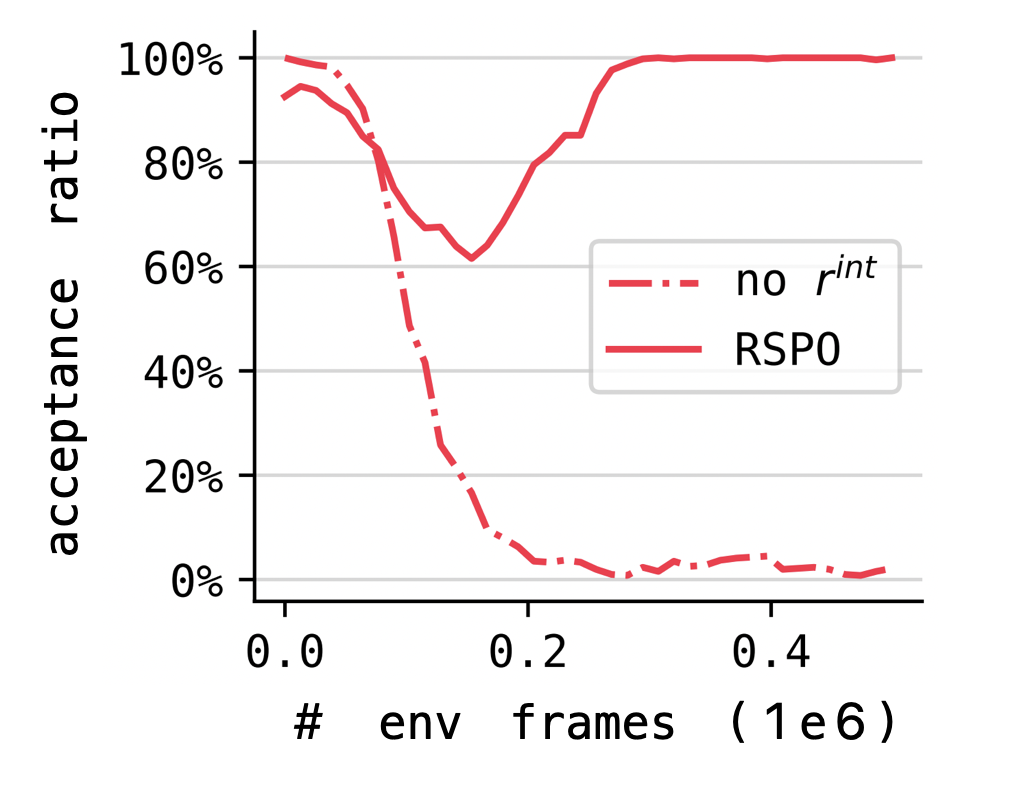}
        \vspace{-3mm}
        \caption{Sample acceptance ratio when learning a policy distinct from \emph{Apple} NE. The intrinsic reward is critical.}
        \label{fig:monster-hunt-efficiency}
    \end{minipage}
    \end{scriptsize}
    \vspace{-4.5mm}
\end{figure}

\begin{figure}[htp]
    \centering
    \begin{minipage}[b]{0.25\textwidth}
        \centering
        \includegraphics[width=0.9\textwidth]{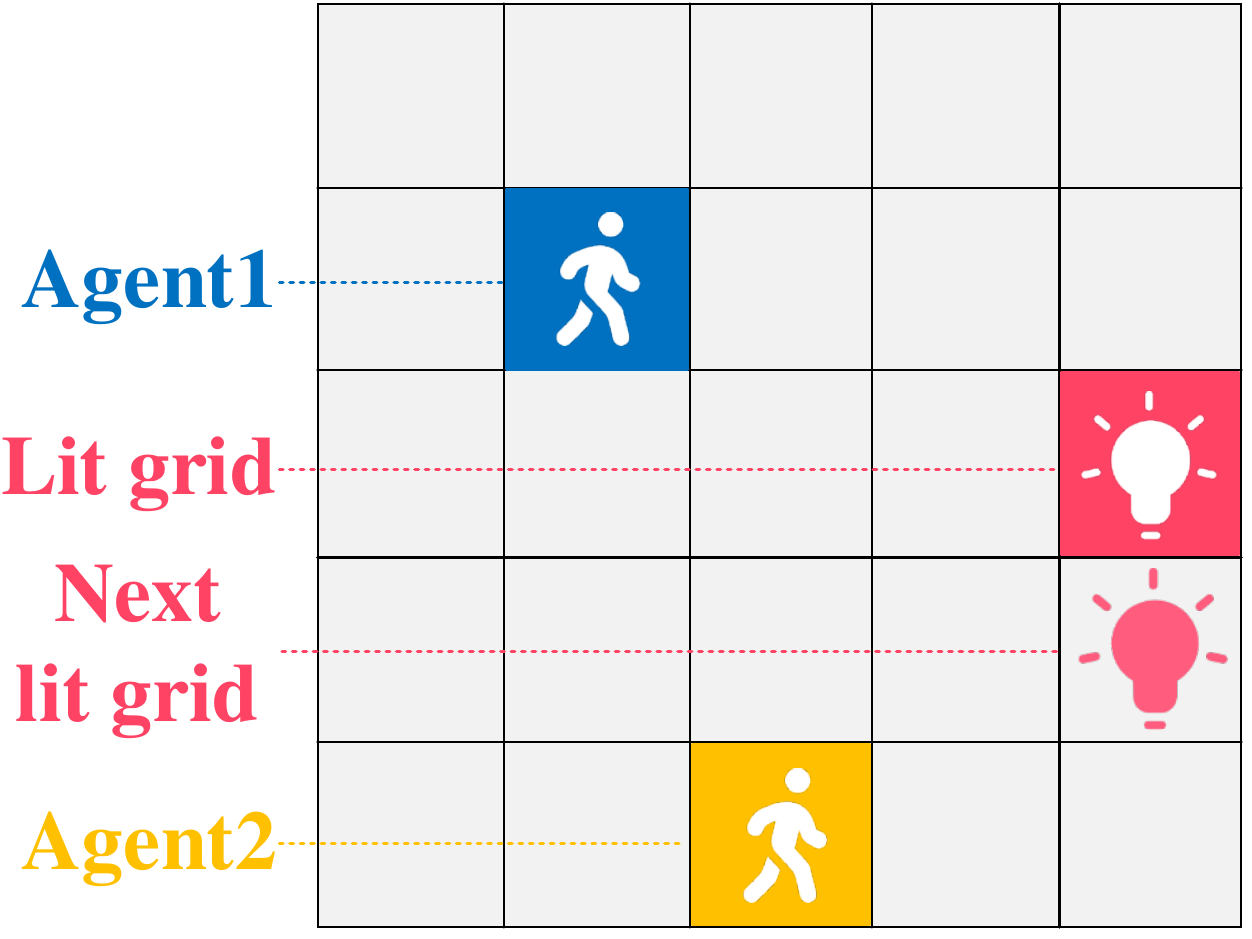}
        \caption{\textit{Escalation}: two agents need to keep stepping on the light simultaneously. }
        \label{fig:escalation-env}
    \end{minipage}%
    \hspace{3mm}
    \begin{minipage}[b]{0.7\textwidth}
        \begin{subfigure}[t]{0.43\textwidth}
            \centering
            \includegraphics[width=\textwidth]{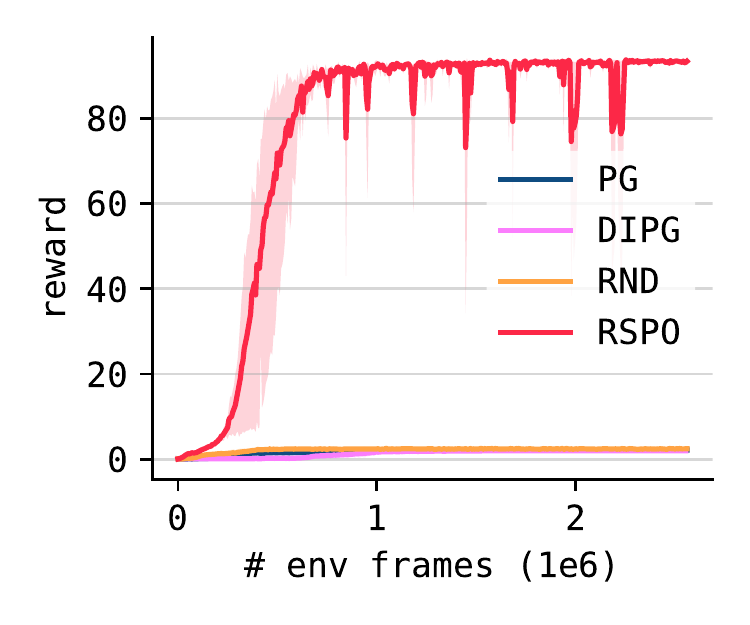}
            \vspace{-7mm}
            \caption{Reward in iteration 2. }
            \label{fig:escalation-rew}
        \end{subfigure}
        \begin{subfigure}[t]{0.55\textwidth}
            \centering
            \includegraphics[width=\textwidth]{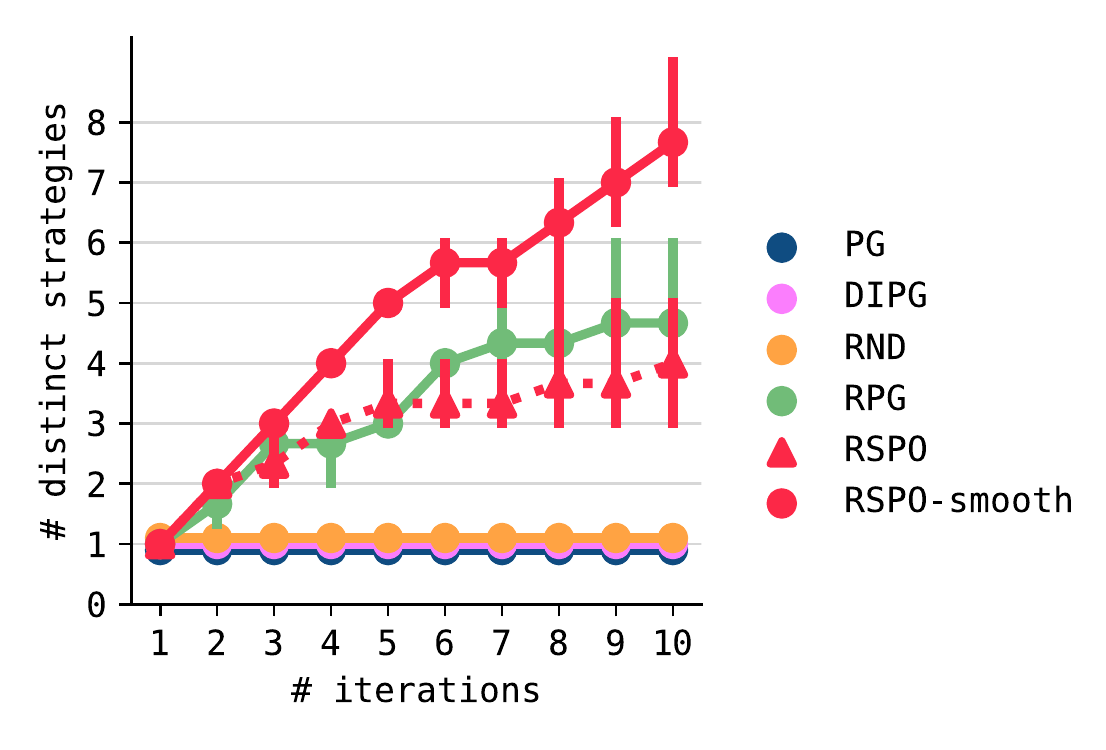}
            \vspace{-7mm}
            \caption{Number of distinct strategies found.}
            \label{fig:escalation-num}
        \end{subfigure}
        \vspace{-1mm}
        \caption{Results on \textit{Escalation} averaged over 3 random seeds. Shaded area and error bars are $95\%$ confident intervals.}
    \end{minipage}
    \vspace{-5mm}
\end{figure}

We adopt both behavior-driven and reward-driven intrinsic rewards in {\name} to tackle \emph{Monster-Hunt}. 
Fig.~\ref{fig:monster-hunt-passive} illustrates all the discovered strategies by {\name} over 20 iterations, which covers a wide range of human-interpretable strategies, including the non-cooperative apple-eating strategy as well as the optimal strategy, where both two agents stay together and chase the monster actively. 
By visualizing the heatmap of where both agents meet, we observe a surprisingly diverse sub-optimal cooperation NEs, where both agents move to a corner or an edge simultaneously, keep staying there, and wait for the monster coming.
We remark that due to the existence of such a great number of passive waiting strategies, which all have similar accumulative environment rewards and states, it becomes critical to include the reward-driven intrinsic reward to quickly bypass them and discover the optimal solution.

We perform ablation studies on {\name} by turning off reward switching (\emph{No Switch}) or intrinsic reward (\emph{No $\rint$}) or only using the behavior-driven intrinsic reward (\emph{$\rint_{\tB}$ only}), and evaluate the performances of many baseline methods, including vanilla PG with restarts (PG), DIPG, RND, a popular multi-agent exploration method MAVEN~\citep{Mahajan2019MAVENMV} and reward-randomized policy gradient (RPG)~\citep{Tang2021DiscoveringDM}. We summarize the categories of discovered strategies by all these baselines in Table~\ref{tab:monster-hunt-strategies}. \emph{Apple} denotes the non-cooperative apple-eating NE; \emph{Chase} denotes the optimal NE where both agents actively chase the monster; \emph{Corner} and \emph{Edge} denote the sub-optimal cooperative NE where both agents passively wait for the monster at a corner or an edge respectively. 
Regarding the baselines, PG, DIPG, and RND never discover any strategy beyond the non-cooperative \emph{Apple} NE. For RPG, even using the domain knowledge to change the reward structure of the game, it never discovers the \emph{Edge} NE. 
Regarding the {\name} variants, both reward switching and intrinsic rewards are necessary. 
Fig.~\ref{fig:monster-hunt-efficiency} shows that when the intrinsic reward is turned off, the proportion of accepted trajectories per batch stays low throughout training. This implies that the learning policy failed to escape the infeasible subspace. 
Besides, as shown in Table~\ref{tab:monster-hunt-strategies}, using behavior-driven exploration alone fails to discover the optimal NE, which suggests the necessity of reward-driven exploration to maximize the divergence of both states and actions in problems with massive equivalent local optima. 
\vspace{-7mm}
\paragraph{The \emph{Escalation} game.}
\emph{Escalation}  (Fig.~\ref{fig:escalation-env}) requires the two players to interact with a static \textit{light}. When \emph{both} players step on the light simultaneously, they both receive a bonus of $1$. Then the light moves to a random adjacent grid. The game continues only if both players choose to follow the light. If only one player steps on the light, it receives a penalty of $-0.9L$, where $L$ is the number of previous cooperation steps.
For each integer $L$, there is a corresponding NE where both players follow the light for $L$ steps then simultaneously stop cooperation. 
We run {\name} with both diversity-driven intrinsic rewards and compare it with PG, DIPG, RND and RPG. Except for RPG, none of the baseline methods  discover any cooperative NEs while {\name} directly learns the optimal cooperative NE (i.e., always cooperate) in the second iteration as shown in Fig.~\ref{fig:escalation-rew}.
We also measure the total number of discovered NEs by different methods over 10 iterations in Fig.~\ref{fig:escalation-num}.
Due to the existence of many spiky local optima, the \emph{smoothed-switching} technique can be crucial here to stabilize the training process. We remark that even without the \emph{smoothed-switching} technique, {\name} achieves comparable performance with RPG --- note that RPG requires a known reward function while {\name} does not assume any environment-specific domain knowledge.
\vspace{-3mm}

\begin{figure}[t]
    \centering
    \begin{minipage}[c]{0.70\textwidth}
        \centering
        \vspace{5pt}
        \captionof{table}{\textit{Population Diversity} scores in \textit{MuJoCo}.}
          \label{tab:pd-all}
          \begin{tabular}{lcccc}
            \toprule
             & {\small{\emph{H.-Cheetah}}}  & {\small{\emph{Hopper}}} & {\small{\emph{Walker2d}}} &{\small{\emph{Humanoid}}}\\
            \midrule
            \small{PG}  & \footnotesize{0.033 (0.013)} & \footnotesize{0.418 (0.125)} & \footnotesize{0.188 (0.079)}&\footnotesize{0.965 (0.006)}\\
            \small{DIPG}& \footnotesize{0.051 (0.009)} & \footnotesize{0.468 (0.054)} & \footnotesize{0.179 (0.056)} &\footnotesize{0.996 (0.000)}\\
            \small{PBT-CE}&\footnotesize{0.160 (0.078)}&	\footnotesize{0.620 (0.294)}&	\footnotesize{0.512 (0.032)}&\footnotesize{\textbf{0.999 (0.000)}}\\
            \small{DvD}& \footnotesize{0.275 (0.164)}&	\footnotesize{0.656 (0.523)}&	\footnotesize{0.542 (0.103)} &\footnotesize{\textbf{1.000 (0.000)}}\\
            \small{SMERL}& \footnotesize{0.003 (0.002)}&	\footnotesize{0.674 (0.389)}&	\footnotesize{0.669 (0.152)}&N/A\\
            \small{{\name}} & \footnotesize{\textbf{0.359 (0.058)}} & \footnotesize{\textbf{0.989 (0.009)}} & \footnotesize{\textbf{0.955 (0.039)}} &\footnotesize{\textbf{0.999 (0.000)}}\\
            \bottomrule
          \end{tabular}
    \end{minipage}%
    \hspace{2mm}
    \begin{minipage}[c]{0.25\textwidth}
        \centering
        \vspace{5pt}
          \captionof{table}{Number of visually distinct policies over 4 iterations in SMAC.}
          \vspace{-3mm}
          \label{tab:smac-results}
          \begin{tabular}{lcc}
            \toprule
             & {\scriptsize{\emph{2c64zg}}}  & {\scriptsize{\emph{2m1z}}}\\
            \midrule
            \small{PG} &\footnotesize{2} &\footnotesize{2}\\
            \small{DIPG} &\footnotesize{2}&\footnotesize{2}\\
            \small{PBT-CE} &\footnotesize{2} &\footnotesize{3}\\
            \small{TrajDiv} &\footnotesize{3} &\footnotesize{1} \\
            \small{{\name}} &\footnotesize{\textbf{4}} &\footnotesize{\textbf{4}} \\
            \bottomrule
            \vspace{-3mm}
          \end{tabular}
    \end{minipage}
    \vspace{-6mm}
\end{figure}

\vspace{-3pt}
\subsection{Continuous control in \textit{MuJoCo}}
\vspace{-1pt}

We evaluate {\name} in the continuous control domain, including \textit{Half-Cheetah}, \textit{Hopper}, \textit{Walker2d} and \textit{Humanoid}, and compare it with baseline methods including PG, DIPG, DvD~\citep{ParkerHolder2020EffectiveDI}, SMERL~\citep{kumar2020one}
and population-based training with our cross-entropy objective (PBT-CE). All the methods are run over $5$ iterations (a population size of $5$, or have a latent dimension of $5$) across 3 seeds.
We adopt \emph{Population Diversity}, a determinant-based diversity criterion proposed in \citet{ParkerHolder2020EffectiveDI}, to evaluate the diversity of derived policies by different methods. Results are summarized in Table~\ref{tab:pd-all}, where {\name} achieves comparable performance in \emph{Hopper} and \textit{Humanoid} and substantially outperforms all the baselines in \textit{Half-Cheetah} and \textit{Walker2d}.
We remark that even with the same intrinsic reward, population-based training (PBT-CE) cannot discover sufficiently novel policies compared with iterative learning ({\name}).
In \textit{Humanoid}, SMERL achieves substantially lower return than other baseline methods and we don't report the population diversity score (more details can be found in App.~\ref{appendix:smerl}).
We also visualize some interesting emergent behaviors {\name} discovered for \emph{Half-Cheetah} and \emph{Hopper} in App.~\ref{appendix:visual}, 
where different strategy modes discovered by {\name} are visually distinguishable while baselines methods often converge to very similar behaviors despite of the non-zero diversity score.

\vspace{-5pt}
\subsection{Stacraft Multi-Agent Challenge}
\vspace{-3pt}

We further apply {\name} to the StarCraftII Multi-Agent Challenge (SMAC)~\citep{starcraft}, which is substantially more difficult due to partial observability, long horizon, and complex state/action space.
We conduct experiments on 2 maps, an easy map \textit{2m\_vs\_1z} and a hard map \textit{2c\_vs\_64zg}, both of which have heterogeneous unit types leading to a multi-modal solution space.
Baseline methods include PG, DIPG, PBT-CE and trajectory diversity (TrajDiv)~\citep{lupu2021trajectory}.
SMERL algorithm and DvD algorithm are not included because they were originally designed for continuous control domain and not suitable for SMAC (see App.~\ref{appendix:smerl} and App.~\ref{appendix:smac-eval}).
Instead, we include the TrajDiv algorithm~\citep{lupu2021trajectory} as an additional baseline, which was designed for cooperative multi-agent games.
We compare the number of visually distinct policies by training a population of 4 (or for 4 iterations), as shown in Table~\ref{tab:smac-results}.
While PBT-based algorithms tend to discover policies with slight distinctions, {\name} can effectively discover different \emph{winning} strategies demonstrating intelligent behaviors in just a few iterations consistently across repetitions.
We remark that there may not exist an appropriate quantitative diversity metric for such a sophisticated MARL game in the existing literature (see App.~\ref{appendix:smac-eval}).
Visualizations and discussions can be found in App.~\ref{appendix:visual}.

\vspace{-7pt}
\section{Conclusion}
\label{sec:conclusion}
\vspace{-5pt}



We propose \emph{Reward-Switching Policy Optimization (\name)}, a simple, generic, and effective iterative learning algorithm that can continuously discover novel strategies.
{\name} tackles a novelty-constrained optimization problem via adaptive switching between extrinsic and intrinsic rewards used for policy learning. 
Empirically, {\name} can successfully tackle a wide range of challenging RL domains under both single-agent and multi-agent settings. We leave further theoretical justifications and sample efficiency improvements as future work.

\bibliography{iclr2022_conference}
\bibliographystyle{iclr2022_conference}

\newpage
\appendix
\section{GIF demonstrations on MuJoCo and SMAC}
\label{appendix:web}
See \url{https://sites.google.com/view/rspo-iclr-2022}.

\section{Additional Results}
\label{appendix:additional-results}
\subsection{Visualization of Discovered Strategies}
\label{appendix:visual}
\subsubsection{MuJoCo}
\begin{figure}[ht]
    \centering
    \begin{minipage}[c]{0.53\textwidth}
        \centering
        \begin{subfigure}[b]{\textwidth}
            \centering
            \includegraphics[width=\textwidth]{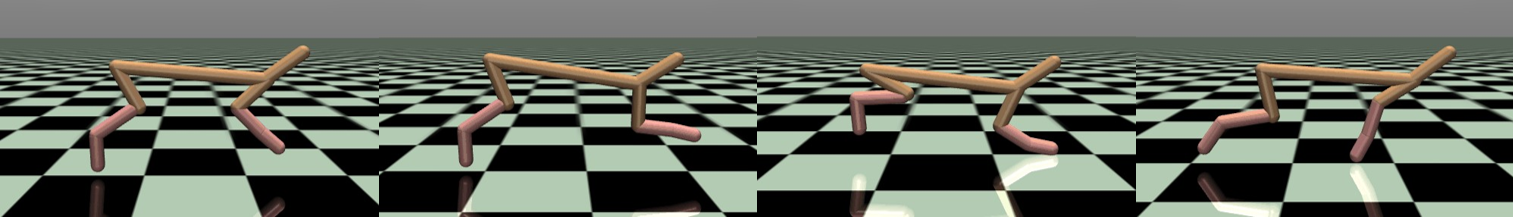}
            \caption{Normal running}
            \label{fig:half-cheetah-normal}
        \end{subfigure}
        \begin{subfigure}[b]{\textwidth}
            \centering
            \includegraphics[width=\textwidth]{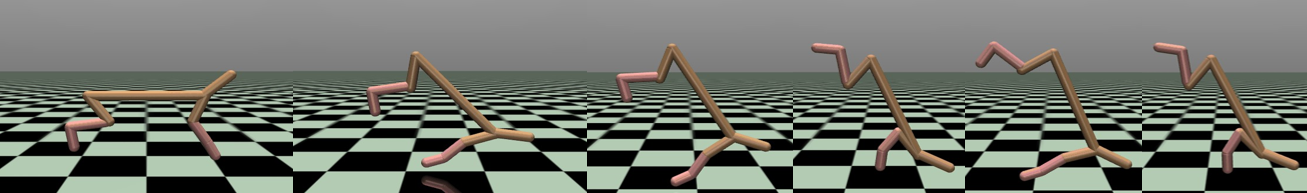}
            \caption{Handstand running}
            \label{fig:half-cheetah-handstand}
        \end{subfigure}
        \begin{subfigure}[b]{\textwidth}
            \centering
            \includegraphics[width=\textwidth]{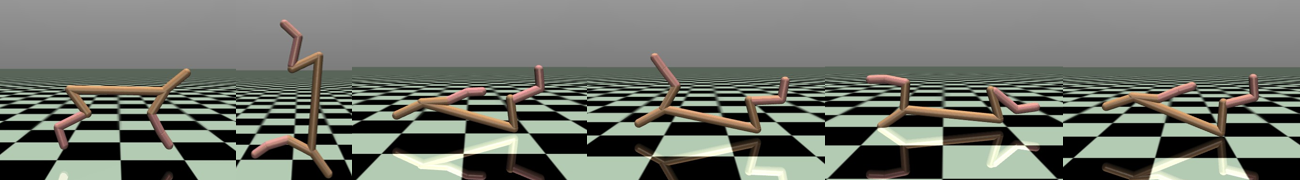}
            \caption{Upside-down running}
            \label{fig:half-cheetah-flip}
        \end{subfigure}
        \caption{\textit{Half-Cheetah} behaviors.}
        \label{fig:half-cheetah-strats}
    \end{minipage}
    \hfill
    \begin{minipage}[c]{0.45\textwidth}
        \begin{subfigure}[t]{0.48\textwidth}
            \centering
            \includegraphics[width=\textwidth]{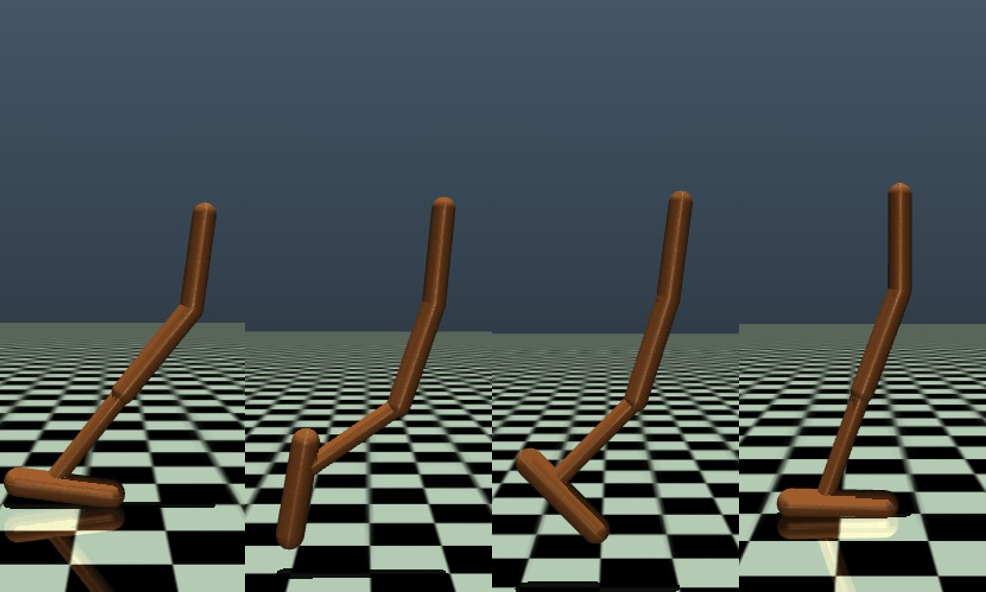}
            \caption{Normal hopping}
            \label{fig:hopper-normal}
        \end{subfigure}
        \hfill
        \begin{subfigure}[t]{0.48\textwidth}
            \centering
            \includegraphics[width=\textwidth]{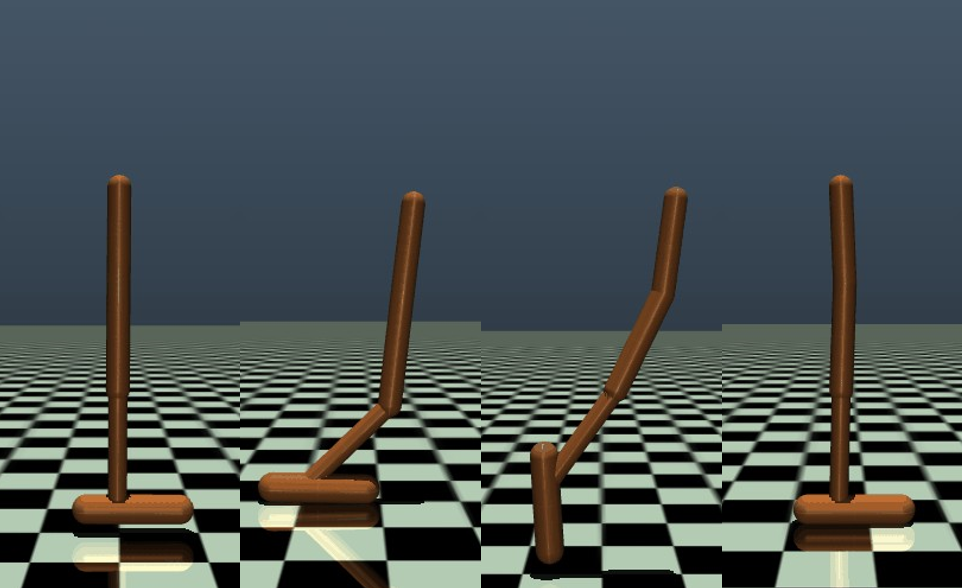}
            \caption{Charged hopping}
            \label{fig:hopper-charged}
        \end{subfigure}
        \begin{subfigure}[t]{0.48\textwidth}
            \centering
            \includegraphics[width=\textwidth]{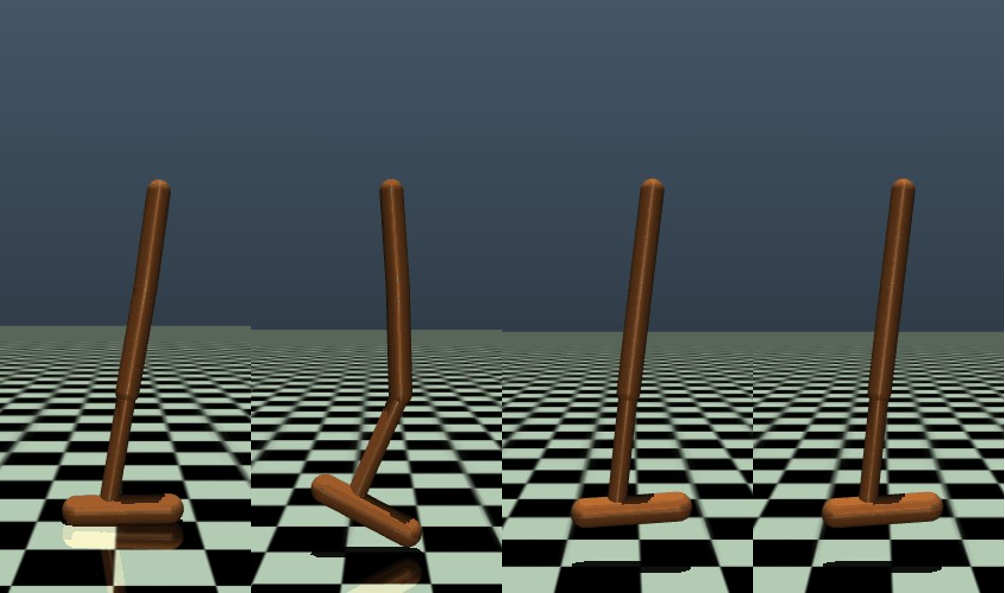}
            \caption{Small-step hopping}
            \label{fig:hopper-small}
        \end{subfigure}
        \hfill
        \begin{subfigure}[t]{0.48\textwidth}
            \centering
            \includegraphics[width=\textwidth]{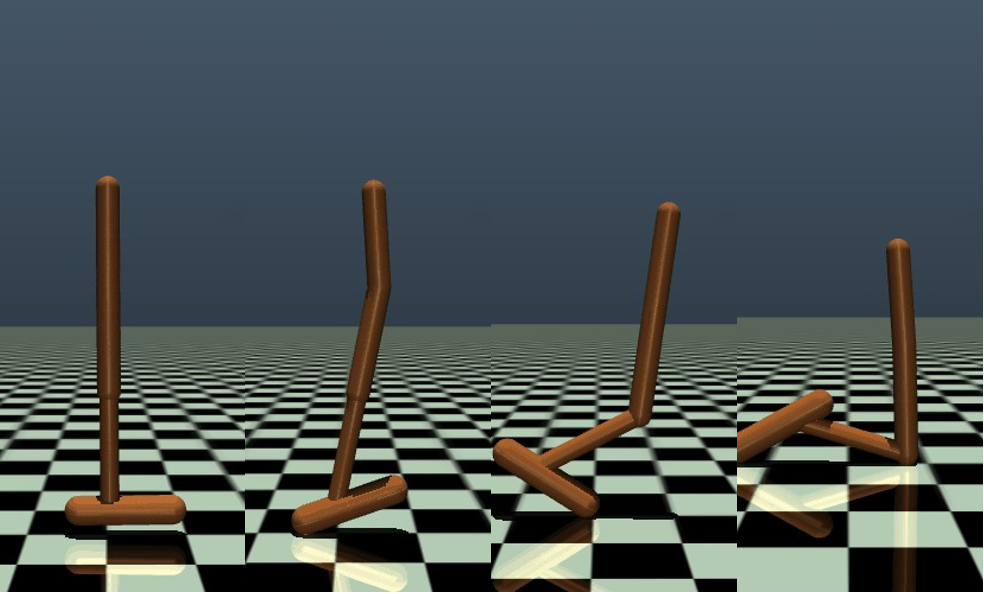}
            \caption{Kneeling}
            \label{fig:hopper-kneel}
        \end{subfigure}
        \caption{\textit{Hopper} behaviors.}
        \label{fig:hopper-strats}
    \end{minipage}
    \vspace{3mm}
    \begin{minipage}[c]{0.49\textwidth}
        \centering
        \begin{subfigure}[b]{\textwidth}
             \centering
            \includegraphics[width=\textwidth,height=0.319\textwidth]{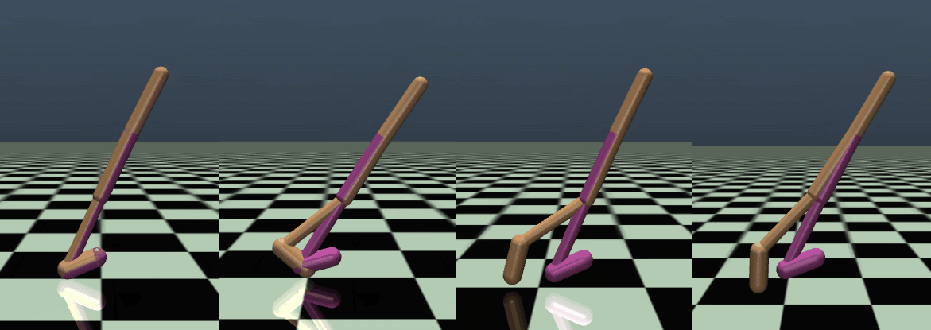}
            \caption{Normal running}
            \label{fig:walker-normal}
        \end{subfigure}
        \begin{subfigure}[b]{\textwidth}
             \centering
            \includegraphics[width=\textwidth,height=0.319\textwidth]{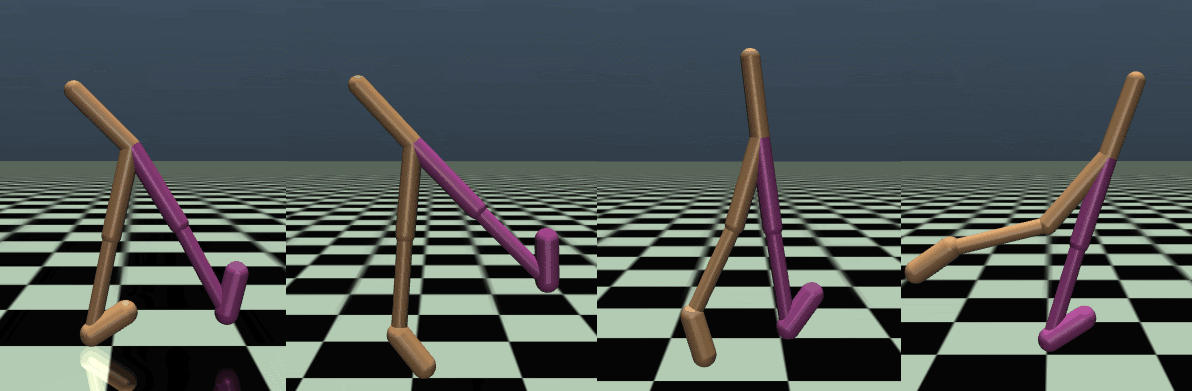}
            \caption{Jumping}
            \label{fig:walker-jumping}
        \end{subfigure}
        \begin{subfigure}[b]{\textwidth}
             \centering
            \includegraphics[width=\textwidth,height=0.319\textwidth]{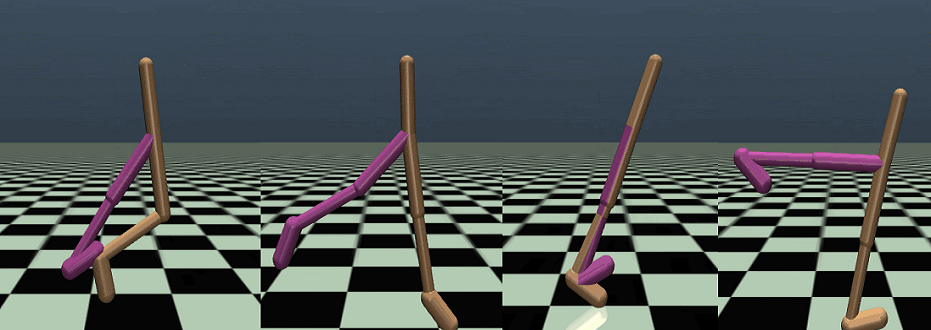}
            \caption{Striding running}
            \label{fig:Walker-striding}
        \end{subfigure}
        \caption{\textit{Walker} behaviors.}
        \label{fig: walker-strats}
    \end{minipage}
    \hfill
    \begin{minipage}[c]{0.49\textwidth}
        \centering
        \begin{subfigure}[b]{\textwidth}
             \centering
            \includegraphics[width=\textwidth]{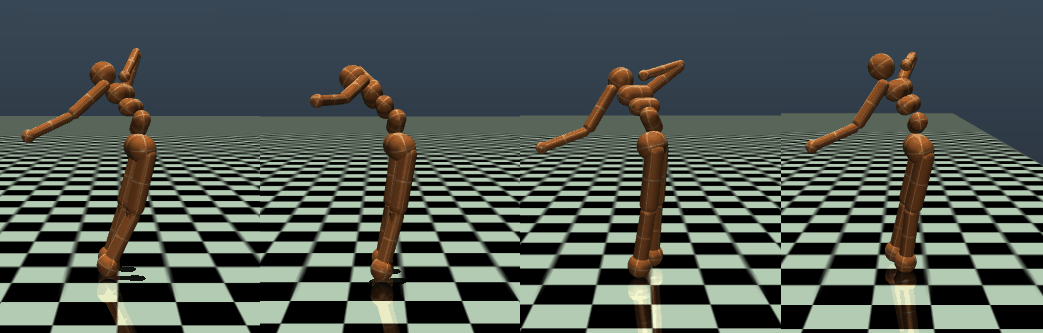}
            \caption{Two feet mincing}
            \label{fig:humanoid-mincing}
        \end{subfigure}
        \begin{subfigure}[b]{\textwidth}
             \centering
            \includegraphics[width=\textwidth]{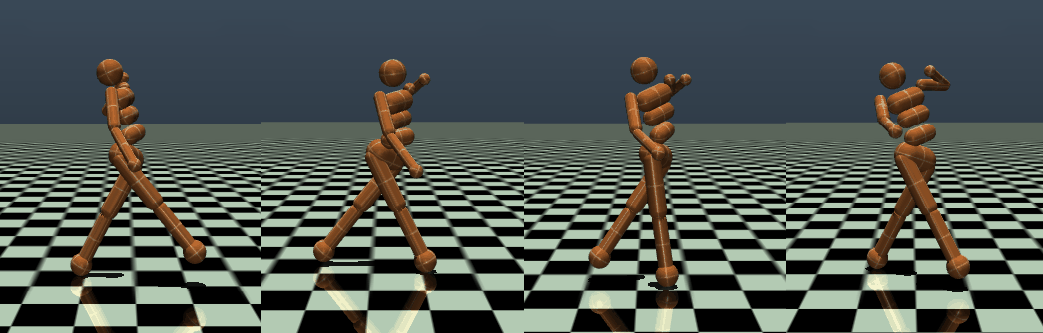}
            \caption{Striding}
            \label{fig:humanoid-striding}
        \end{subfigure}
        \begin{subfigure}[b]{\textwidth}
             \centering
            \includegraphics[width=\textwidth]{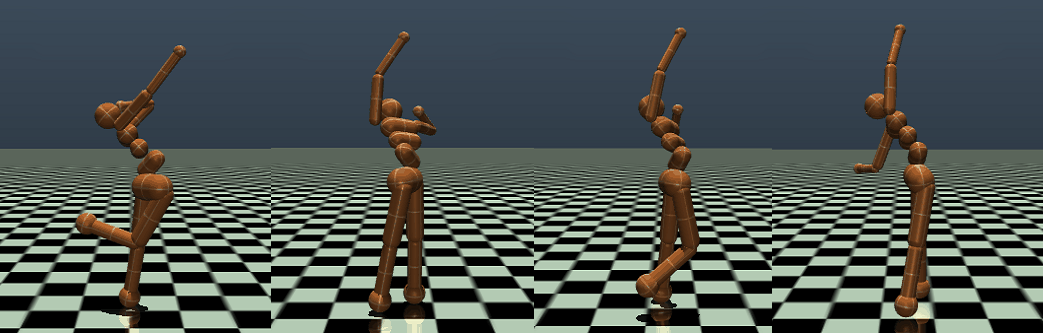}
            \caption{Hand-helped balancing}
            \label{fig:humanoid-hand}
        \end{subfigure}
        \caption{\textit{Homanoid} behaviors.}
        \label{fig:humanoid-strats}
    \end{minipage}
    \vspace{-5mm}
\end{figure}

\subsubsection{SMAC}

We present screenshots of emergent strategies in the SMAC environment in Fig.~\ref{fig:2-64} and Fig.~\ref{fig:2-1} for \textit{2c\_vs\_64zg} and \textit{2m\_vs\_1z} respectively. We expect different emergent strategies to be both \emph{visually distinguishable} and \emph{human-interpretable}. 
The strategies induced by baseline methods and {\name} are summarized in Table~\ref{tab:2/64-baseline-strategies} and Table~\ref{tab:2/1-baseline-strategies}.

On the hard map \textit{2c\_vs\_64zg}, agents need to control the 2 colossi to fight against 64 zergs. The colossi have a wider attack range and can step over the cliff.
Fig.~\ref{fig:2-64-0} shows an aggressive strategy where the two colossi keep staying together on the left side of the cliff to fire at all the coming enemies.
Fig.~\ref{fig:2-64-1} shows a particularly intelligent strategy where the colossi make use of the terrain to play hit-and-run. When the game starts, the colossi stand on the cliff to snipe distant enemies to make them keep wandering around under the cliff. Hence, as the game proceeds, we can observe that the enemies are clearly partitioned while the colossi always maintain a smart fire position on the cliff.
Fig.~\ref{fig:2-64-2} shows a mirror strategy similar to Fig.~\ref{fig:2-64-0} to aggressively clean up incoming enemies from the right side.
Fig.~\ref{fig:2-64-3} shows a conservative strategy that the colossi stand still in the corner to keep minimal contact with the enemies, and thus minimize damages received from enemies.
Fig.~\ref{fig:2-64-4} shows another smart strategy: one colossi blocks all incoming enemies on the mountain pass as a fire attractor, while the other one hides behind the fire attractor and snipes enemies distantly. We can see from the last two frames that the distant sniper does not lose any health points in late stages.
In Fig.~\ref{fig:2-64-5}, one colossi (\#1) actively take advantage of the terrain to walk along the cliff, such that enemies on the plateau must run around to attack it. In the mean time, the colossi helps its teammate by sniping distant enemies. Finally, they separately clean up all the remaining enemies for the win.

On the easy map \textit{2m\_vs\_1z}, agents need to control 2 marines to defeat a Zealot. The marines can shoot the Zealot in a distant position but the Zealot can only perform close-range attacks. In Fig.~\ref{fig:2-1-0}, the marines swing horizontally to keep an appropriate fire distance from the Zealot. In Fig.~\ref{fig:2-1-1}, the marines perform a parallel hit-and-run from top to bottom as the Zealot approaches. In Fig.~\ref{fig:2-1-2}, the right-side marine stands still, and the left-side marine swings vertically to distract the Zealot. In Fig.~\ref{fig:2-1-3}, the two marines alternatively perform hit-and-run from bottom to top to distract the Zealot.
\begin{table}[ht]
\caption{Strategies induced by baseline methods and {\name} in SMAC map \textit{2c\_vs\_64zg}.}
\label{tab:2/64-baseline-strategies}
\centering
\begin{tabular}{cc}
\toprule
Algorithm       & Strategies\\
\midrule 
PG   & left wave cleanup, right wave cleanup\\
DIPG   & cliff walk, corner\\
PBT-CE &left wave cleanup, right wave cleanup\\
TrajDiv & fire attractor and distant sniper, cliff walk, right wave cleanup\\
\midrule 
{\name}&
\begin{tabular}{@{}c@{}}
left wave cleanup, cliff sniping and smart blocking, right wave cleanup\\
corner, fire attractor and distant sniper, cliff walk
\end{tabular}\\
\bottomrule
\end{tabular}
\end{table}

\begin{table}[ht]
\caption{Strategies induced by baseline methods and {\name} in SMAC map \textit{2m\_vs\_1z}.}
\label{tab:2/1-baseline-strategies}
\centering
\begin{tabular}{cc}
\toprule
Algorithm       & Strategies\\
\midrule 
PG   & parallel hit-and-run, alterative distraction\\
DIPG   & parallel hit-and-run, one-sided swinging\\
PBT-CE & one-sided swinging, parallel-hit-and-run, swinging\\
TrajDiv & parallel hit-and-run\\
\midrule 
{\name} & one-sided swinging, parallel-hit-and-run, swinging, alterative distraction \\
\bottomrule
\end{tabular}
\end{table}

\subsection{Quantitative Evaluation}
\subsubsection{MuJoCo}
\begin{table}
\caption{Final evaluation performance of \name{} averaged over 32 episodes in MuJoCo continuous control domain. Averaged over 3 random seeds with standard deviation in the brackets.}
\label{tab:mujoco-eval}
\centering
\begin{tabular}{cccccc}
\toprule
Environment & Iter \#1 & Iter \#2 & Iter \#3 & Iter \#4 & Iter \#5 \\
\midrule 
\textit{Half-Cheetah}&	2343 (882)&	1790 (950)	&1710 (1394)&	859 (498)&	1194 (640)\\
\textit{Hopper}&	1741 (56)	&1690 (29)&	1428 (322)&	1349 (275)&	841 (296)\\
\textit{Walker2d}&	1668 (284)&	1627 (228)&	1258 (538)&	1275 (342)&	1040 (318)\\
\textit{Homanoid} & 3146 (26) & 3148 (140) & 3118 (84) & 3072 (76) & 2707 (860)\\
\bottomrule
\end{tabular}
\end{table}

The final performance of MuJoCo environments is presented in Table~\ref{tab:mujoco-eval}.
As mentioned in Appendix~\ref{sec:mujoco-details},
in our implementation we fix the episode length to $512$ so that the diverse intrinsic rewards can be easily computed, which may harm sample efficiency and the evaluation score. Moreover, we use a hidden size of $64$ which is usually smaller than previous works (which is $256$ in the SAC paper~\citep{haarnoja2018soft}). Hence, these results may not be directly compared with other numbers in the existing literature. However, the policy in Iter \#1 is obtained by vanilla PPO and can therefore be used to assess the relative performances of other runs.
Note that even if the presented scores are not the state-of-the-art, visualization results show that our algorithm successfully learns the basic locomotion and diverse gaits in these environments. The results indeed demonstrate diverse local optima that are properly discovered by {\name}, including many interesting emergent behaviors that may have never been reported in the literature (check the website in Appendix~\ref{appendix:web} for details), which accords with our initial motivation.

\subsubsection{SMAC}
\label{appendix:smac-eval}
\begin{table}
\caption{Final evaluation winning rate of \name{} averaged over 32 episodes in SMAC. }
\label{tab:smac-eval}
\centering
\begin{tabular}{ccccccc}
\toprule
Map & Iter \#1 & Iter \#2 & Iter \#3 & Iter \#4 & Iter \#5 &Iter\#6\\
\midrule 
\textit{2c\_vs\_64zg}&	100\%&	84.4\%	&96.9\%&	100\%&	87.5\%&100\%\\
\textit{2m\_vs\_1z}&	100\%	&100\%&	100\%&	100\%&	N/A&N/A\\
\bottomrule
\end{tabular}
\end{table}

Final evaluation winning rate of SMAC is presented in Table~\ref{tab:smac-eval}. The final performance of RSPO on the easy map \textit{2m\_vs\_1z} matches the state-of-the-art~\citep{yu2021surprising}. The median evaluation winning rate and standard deviation on the hard map \textit{2c\_vs\_64zg} is 98.4\%(6.4\%), which is slightly lower than the state-of-the-art 100\%(0). We note that the policies discovered by our algorithm are \emph{both diverse and high-quality winning strategies (local optima)} showing intelligent emergent behaviors.

\begin{table}
\caption{\emph{Population Diversity} on the hard map \textit{2c\_vs\_64zg} in SMAC.}
\label{tab:smac-dvd}
\centering
\begin{tabular}{ccc}
\toprule
Algorithm  & \# Distinct Strategies & Population Diversity\\
\midrule 
PG   & 1& 0.981\\
PBT-CE   & 2 & 1.000\\
DvD  & 3 & 1.000\\
\midrule 
{\name}&4 & 1.000\\
\bottomrule
\end{tabular}
\end{table}

We note that population diversity, which we use for quantitative evaluation for \textit{MuJoCo}, may not be an appropriate metric for such a sophisticated MARL game with a large state space and a long horizon. Diversity via Determinant or \emph{Population Diversity}~\citep{ParkerHolder2020EffectiveDI} is originally designed for the continuous control domain. It mainly focuses on the continuous control domain and directly adopts the action as action embedding, while it remains unclear how to embed a discrete action space. For SMAC, we adopt the logarithm probability of categorical distribution as the action embedding and evaluate {\name} and selected baselines using \emph{Population Diversity} on the hard map \textit{2c\_vs\_64zg}. We further train DvD with a population size of 4 as an additional baseline. The results are shown in Table~\ref{tab:smac-dvd}.
The population diversity scores of baseline methods and {\name} both reach the maximum value of $1.000$. However, if we visualize all the learned policies, actually many policies induced by PBT-CE or DvD cannot be visually distinguished by humans. 
Moreover, policies induced by PG are visually identical but still achieve a population diversity score of 0.981. This indicates that high population diversity scores might not necessarily imply a diverse strategy pool in complex environments like SMAC. We hypothesize that this is due to complex game dynamics in SMAC. For example, a unit performing micro-strategies of attack-then-move and move-then-attack are visually the same for humans but will have very different action probabilities in each timestep. Such subtle changes in policy outputs can significantly increase the population diversity scores. Since high scores may not directly reflect more diverse policies, it may not be reasonable to explicitly optimize population diversity as an auxiliary loss in SMAC. Therefore, we omit the results of DvD in SMAC in the main body of our paper.

To the best of our knowledge, a commonly accepted policy diversity metric for complex MARL games remains an open question in the existing literature. In our practice, rendering and visualizing the evaluation trajectories remains the best approach to distinguish different learned strategies. We emphasize that qualitatively, we are so far the first paper that ever reports such a visually diverse collection of winning strategies on a hard map in SMAC. Please check our website for policy visualizations (see Appendix~\ref{appendix:web}).

\subsection{Sensitivity Analysis}
\label{appendix:sensitivity}

We have performed a sensitivity analysis over $\alpha$, $\lambda^{int}_{\tB}$ and $\lambda^{int}_{\tR}$ since they are the critical to the performance {\name}. The default values used in our experiments can be found in Table~\ref{tab:rspo-hp}.

\begin{figure}[t]
    \centering
    \begin{minipage}[c]{0.48\textwidth}
        \centering
        \includegraphics[width=\textwidth]{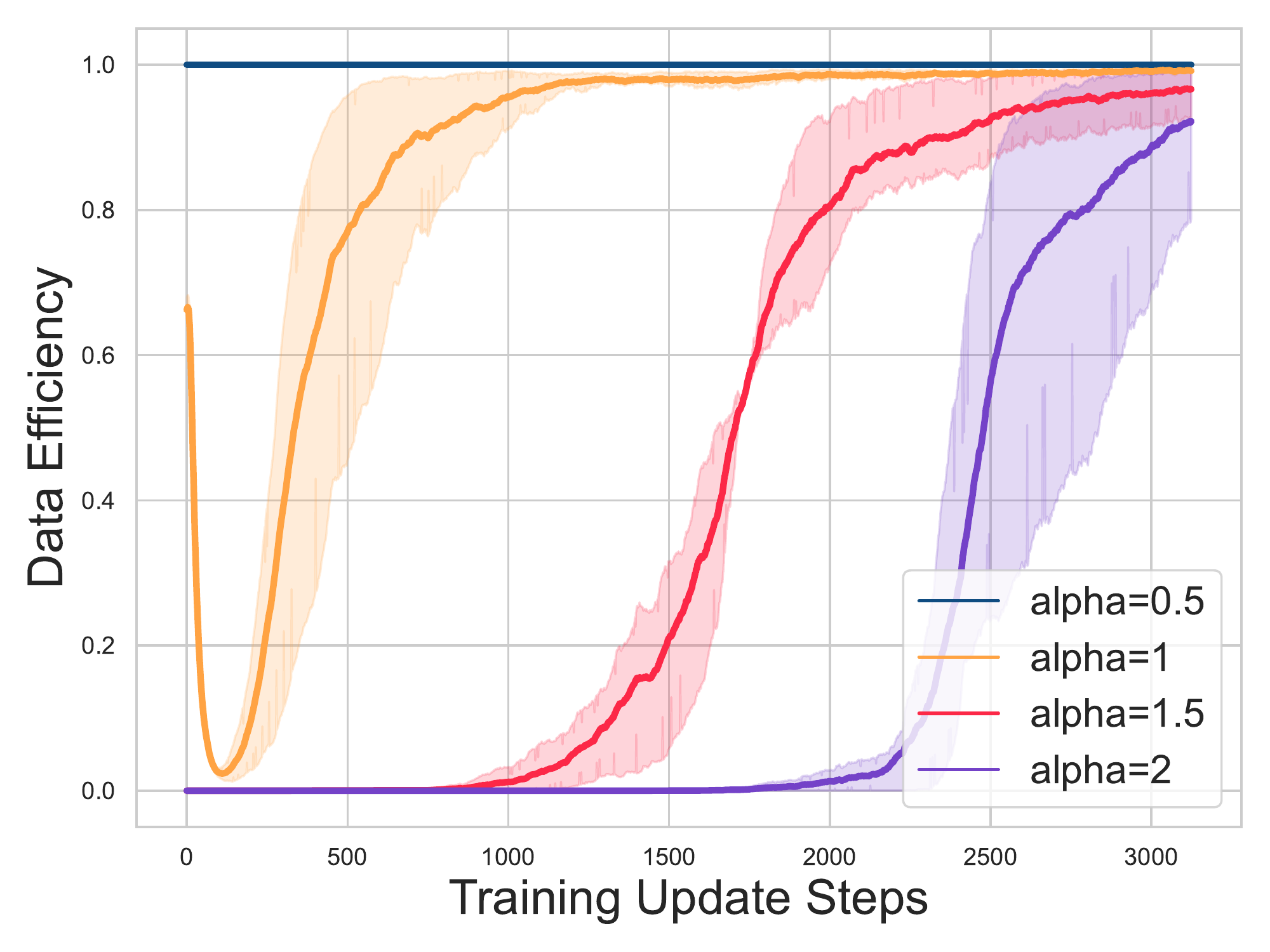}
        \caption{Data efficiency with different $\alpha$ in \textit{Humanoid}.}
        \label{fig:alpha-humanoid}
    \end{minipage}
    \hfill
    \begin{minipage}[c]{0.48\textwidth}
        \centering
        \includegraphics[width=\textwidth]{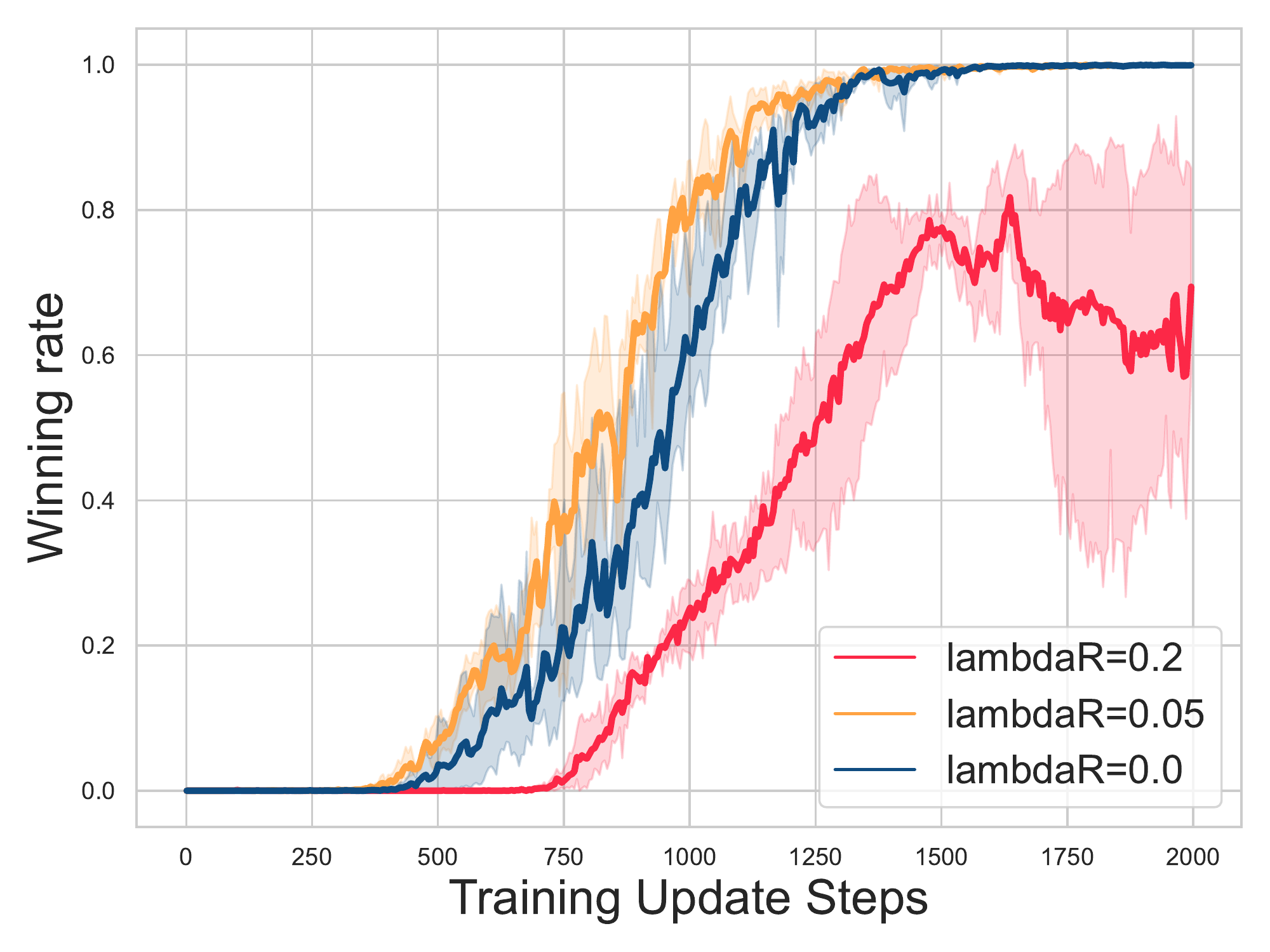}
        \caption{Learning curve with different $\lambda^{int}_{\tR}$ on the \textit{2m\_vs\_1z} map in SMAC.}
        \label{fig:lambdaR-smac}
    \end{minipage}
\end{figure}

\begin{table}[]
    \centering
    \begin{tabular}{cc|cc}
    \toprule
         $\alpha$ & \textit{Population Diversity} &$\lambda^{int}_{\tB}$ & \textit{Population Diversity}\\
        \midrule
         0.5 & 0.604 & 0.5 & 0.640\\
         1.0 & 0.676 & 1.0 & 0.621\\
         1.5 & 0.687 & 5.0 & 0.697\\
         2.0 & 0.748 & 10.0 & 0.697\\
         \bottomrule
    \end{tabular}
    \caption{\textit{Population Diversity} scores of the first 2 policies with different hyperparameters in \textit{Humanoid}. We have scaled the denominator of the RBF kernel in the \textit{Population Diversity} matrix by a factor $10$, such that the difference can be demonstrated more clearly.}
    \label{tab:sensitivity-humanoid}
\end{table}

$\alpha$ is the most important hyperparameter in {\name} because it determines what trajectories in a batch to be accepted. We focus on the \emph{data efficiency}, i.e., the proportion of accepted trajectories in a batch.
In the sensitivity analysis, we run the second iteration of {\name} in \textit{Humanoid} with $\alpha=0.5,1.0,1.5,2$ respectively and compute the population diversity score of the 2 resulting policies. The result is shown in Fig.~\ref{fig:alpha-humanoid} and the left part of Table~\ref{tab:sensitivity-humanoid}. The result accords with our heuristic to adjust $\alpha$: with a small $\alpha$ ($\alpha=0.5$), {\name} may accept all the trajectories at the beginning and lead to a similar policy after convergence, which is no better than the PG baseline; with a large $\alpha$ ($\alpha=1.5$ and $\alpha=2$), RSPO may reject too many trajectories at the early stage of training and spend quite a lot of time on exploration, which sacrifices training time for the gain in the diversity score. In practice, we suggest starting with $\alpha=1$ and adjusting it such that the acceptance rate can drop at the start of training and then quickly and smoothly converge to $1$, as shown in Fig.~\ref{fig:alpha-humanoid} ($\alpha=1$) and Fig.~\ref{fig:monster-hunt-efficiency}. $\alpha$ should be decreased if too much data is rejected at the beginning of training and increased if data efficiency always stays high in the early stage of training.

$\lambda^{int}_{\tB}$ and $\lambda^{int}_{\tR}$ determines the scale of intrinsic rewards. In our sensitivity analysis, $\lambda^{int}_{\tB}$ is analyzed in the Humanoid environment and $\lambda^{int}_{\tR}$ is analyzed on the \textit{2m\_vs\_1z} map in SMAC. Similarly, we run the second iteration of {\name} with $\lambda^{int}_{\tB}=0.5,1,5,10$ in \textit{Humanoid} and with $\lambda^{int}_{\tR}=0,0.05,0.2$ on \textit{2m\_vs\_1z}. The results are shown in Fig.~\ref{fig:lambdaR-smac} and the right part of Table~\ref{tab:sensitivity-humanoid}. With value and advantage normalization in PPO, the scale of intrinsic rewards may not significantly affect performance. Specifically, the diversity scores in \textit{Humanoid} do not vary too much, and the induced policies on \textit{2m\_vs\_1z} are all visually distinct from the reference one. However, if the scale is much larger than extrinsic rewards, it may cause learning instability, as shown in Fig.~\ref{fig:lambdaR-smac}. On the opposite side, if the intrinsic rewards are turned off, {\name} may slow down convergence (Fig.~\ref{fig:lambdaR-smac}), fail to discover non-trivial local optima due to lack of exploration (Table.~\ref{tab:monster-hunt-strategies}) or get stuck during exploration due to low data efficiency (Fig.~\ref{fig:monster-hunt-efficiency}). We suggest starting with $\lambda^{int}_{\tB}=1$ and $\lambda^{int}_{\tR}=0$, and adjusting them such that the intrinsic rewards lead to a fast and smooth convergence.

\subsection{Additional Study with an Evolutionary Method}
\label{appendix:pga-map-elites}

Note that the main paper focuses on the discussion of RL-based solutions which require minimal domain knowledge of the solution structure. Evolutionary methods, as another popular line of research, have also shown promising results in a variety of domains and are also able to discover interesting diverse behaviors~\citep{cully2015robots, Hong2018DiversityDrivenES}. However, evolutionary methods typically assume an effective human-designed set of characteristic features for effective learning.

Here, for complete empirical study, we also conduct additional experiments w.r.t. a very recent evolutionary-based algorithm, PGA-MAP-Elites~\citep{nilsson2021policy} which integrates MAP-Elites~\citep{cully2015robots} into a policy gradient algorithm TD3~\citep{fujimoto2018addressing}.
The PGA-MAP-Elites algorithm requires a human-defined behavioral descriptor (BD) to map a neural policy into a low-dimensional (discretized) space for behavior clustering. 
We run PGA-MAP-Elites on the \textit{4-goals} and the \textit{Escalation} environment, where the behavioral descriptors (BDs) can be precisely defined to the best of our efforts. We remark that for the remaining cases, including the \emph{Monster-Hunt} environment, MuJoCo control domain, and SMAC scenarios, behaviors of interests always involve strong \emph{temporal characteristics}, which makes the design of a good BD particularly non-trivial and remain a challenging open question for the existing literature.

In particular, we define a 4-dimensional descriptor for the \textit{4-Goals} environment, i.e., a one-hot vector indicating the ID of the nearest landmark. For the \textit{Escalation} environment, we use a 1-dimensional descriptor for the \textit{Escalation} environment, which is a 0-1-normalized value of the cooperation steps within the episode. We set the number of behavior cells (niches) equal to the iteration number in {\name}, specifically $7$ for \textit{4-Goals} and $10$ for \textit{Escalation}.

\begin{table}[t]
    \centering
    \caption{The number of distinct strategies discovered by PGA-MAP-Elites and {\name} and the highest achieved rewards in \textit{4-Goals} Hard. Numbers are averaged over 3 seeds.}
    \label{tab:pga-map-elites-cnt}
    \begin{tabular}{lcccc|c}
    \toprule
        & Easy& Medium& Hard & \textit{Escalation}
         & Reward in \textit{4-Goals} Hard \\
         \midrule
        PGA-MAP-Elites&4&1.6&1.2&10&0.98\\
        {\name}&4&4&4&7.7&1.30\\
        \bottomrule
    \end{tabular}
\end{table}

Results are shown in Table~\ref{tab:pga-map-elites-cnt}, where PGA-MAP-Elites performs much worse on the \emph{4-Goals} scenario while outperforms RSPO on the \emph{Escalation} environment due to the informative BD.
Based on the results, we would like to discuss some characteristics of evolutionary algorithms and {\name} below:
\begin{enumerate}
\item 
    \emph{We empirically observe that many policies produced by PGA-MAP-Elites are immediately archived without becoming converged}, particularly in \textit{4-Goals} (Hard) and \emph{Escalation}. When measuring population diversity, these unconverged policies would contribute a lot even though many of them may have unsatisfying behaviors/returns.
    By contrast, the objective of {\name} aims to find diverse \emph{local optima}. This also suggests a further research direction to bridge such a convergence-diversity gap.
\item \emph{The quality of BD can strongly influence the performance of evolutionary methods.} Note that the BD in \emph{Escalation} provides a particularly clear signal on whether a policy reaches a local optimum or not (i.e., each BD niche precisely corresponds to a policy mode) while RSPO directly works on the deceptive extrinsic reward structure without knowing the structure of NEs. This suggests the importance of BD design, which, however, remains an open challenge in general.
\item \emph{An improper BD may lead to a largely constrained behavior space.} The success of PGA-MAP-Elites largely depends on the fact that the BD is known to be able to effectively cover all the local optima of interest. However, for complex environments like SMAC, we do not even know in advance what kind of behaviors will emerge after training. Therefore, an open-ended BD would be desired, which becomes an even more challenging problem --- note that there even has not been any effective diversity measurement on SMAC yet. Therefore, a purely RL-based solution would be preferred. 
    
\item
\emph{Without an informative BD, evolutionary methods typically require a large population size introduced, which can cause practical issues.} For example, maintaining a large unstructured archive can be computationally expensive. It can be also challenging to visually evaluate learned behaviors given a large population.
\end{enumerate}

To sum up, when an effective and informative BD is available, evolutionary methods can be a strong candidate to consider, although it may not fit every scenario of interest, while {\name} can be a generally applicable solution with minimal domain knowledge.
It could be also beneficial to investigate how to incorporate informative domain prior into {\name} framework, which we leave as our future work.

\subsection{The Point-v0 Environment}
\citet{ParkerHolder2020EffectiveDI} develops a continuous control environment which requires the agent to bypass a wall to reach the goal. A big penalty will be given to the agent if it directly runs towards the goal and hits the wall. This environment indeed has a local optimum which can be overcome by diversity-driven exploration. While the authors of the DvD paper argued that ES and NSR will get stuck in the wall, the naive PPO algorithm can directly learns to bypass the wall and escape the local optimum. Hence in our experiment section, we consider much more challenging environments where a much larger number of local optima exist, such as stag-hunt games and SMAC.

\subsection{SMERL}
\label{appendix:smerl}
In SMERL~\citep{kumar2020one}, if the agent can achieve sufficiently high return in the trajectory, the trajectory will be augmented with intrinsic rewards of DIAYN~\citep{Eysenbach2019DiversityIA} for skill differentiation and policy diversification. 
In \textit{Humanoid}, it may be challenging for SMERL to achieve such a high return, which turns off the intrinsic reward for promoting diversity. Hence, we do not report the population diversity score in the main body of our paper.

In stag-hunt games, SMERL keeps producing low returns. Note that since the SMERL algorithm only starts promoting diversity after sufficiently high reward is achieved, all the produced policies by SMERL are visually identical non-cooperative strategies.
What's more, DIAYN~\citep{Eysenbach2019DiversityIA}, which has the same intrinsic reward as SMERL, has been evaluated in~\citet{Tang2021DiscoveringDM} and proved to perform worse than RPG~\citep{Tang2021DiscoveringDM}, while the performance of RPG is further surpassed by {\name}. Hence, we omit the results of SMERL in the stag-hunt games.

We also evaluate SMERL in SMAC with a latent dimension of 5, which only induces 1 strategy on each map across all possible latent variables. We hypothesize the reason is that in such a complex MARL environment with a large state space and long horizon, the skill latent variable may be particularly challenging to be determined from game states. Moreover, the latent dimension is usually dominated by the state dimension, which makes latent variables less effective.




\begin{figure}[ht]
    \centering
    \begin{subfigure}[b]{\textwidth}
        \centering
        \includegraphics[width=\textwidth]{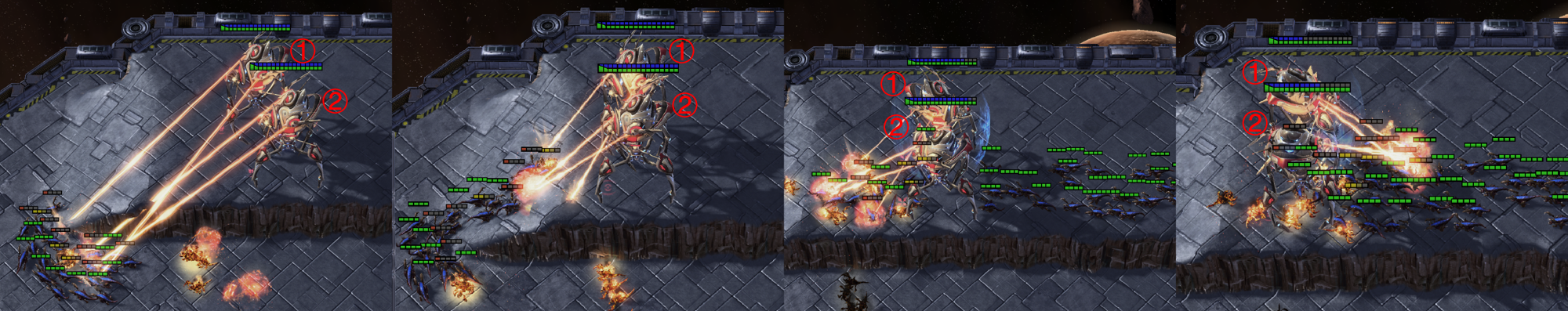}
        \caption{Aggressive left-wave cleanup}
        \label{fig:2-64-0}
    \end{subfigure}
    \begin{subfigure}[b]{\textwidth}
        \centering
        \includegraphics[width=\textwidth]{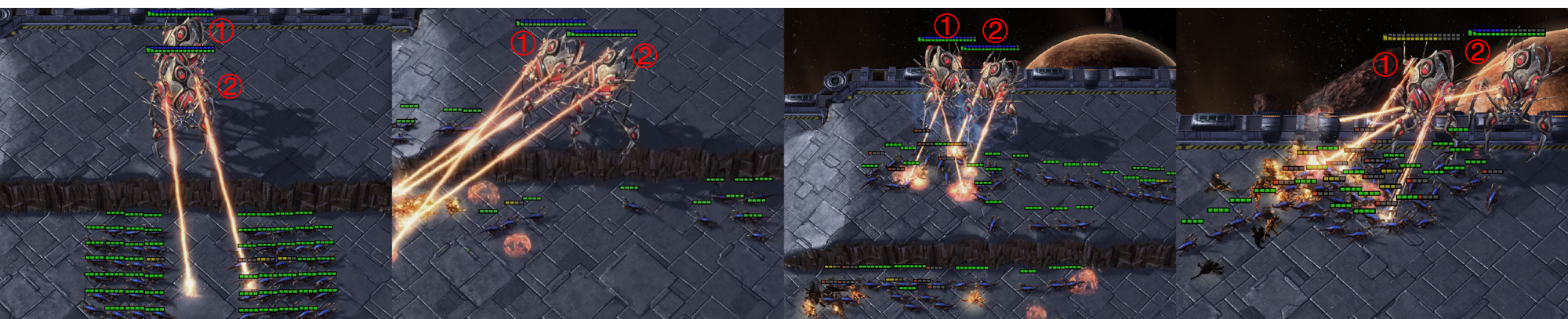}
        \caption{Cliff-sniping and smart blocking}
        \label{fig:2-64-1}
    \end{subfigure}
    \begin{subfigure}[b]{\textwidth}
        \centering
        \includegraphics[width=\textwidth]{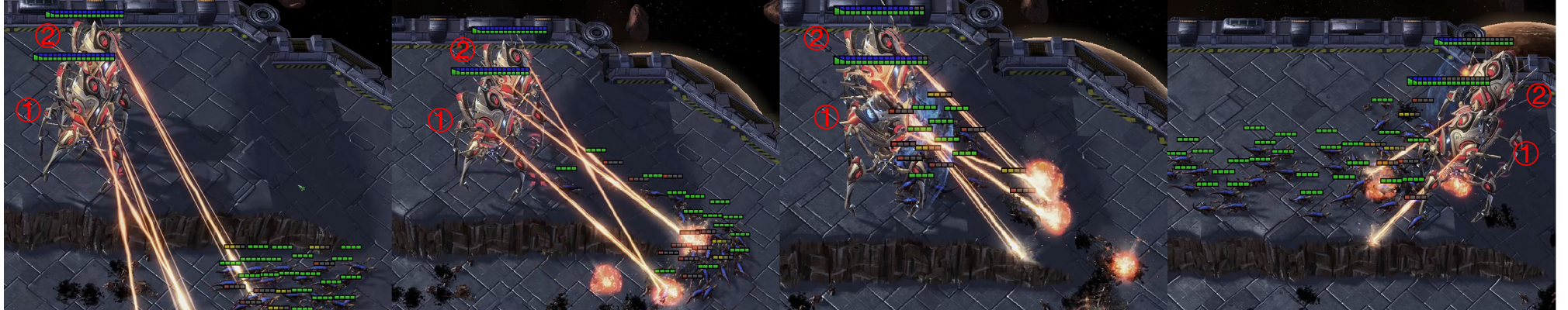}
        \caption{Aggressive right-wave cleanup}
        \label{fig:2-64-2}
    \end{subfigure}
    \begin{subfigure}[b]{\textwidth}
        \centering
        \includegraphics[width=\textwidth]{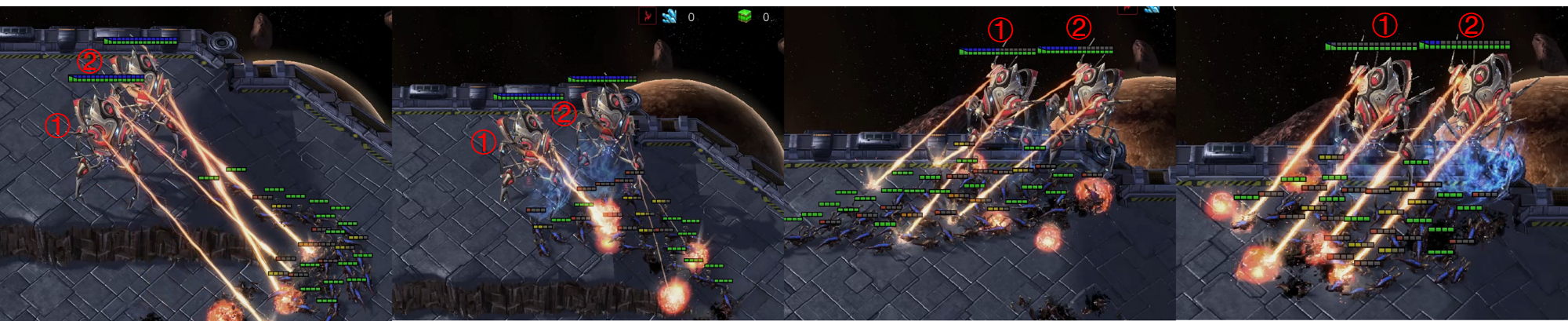}
        \caption{Corner}
        \label{fig:2-64-3}
    \end{subfigure}
    \begin{subfigure}[b]{\textwidth}
        \centering
        \includegraphics[width=\textwidth]{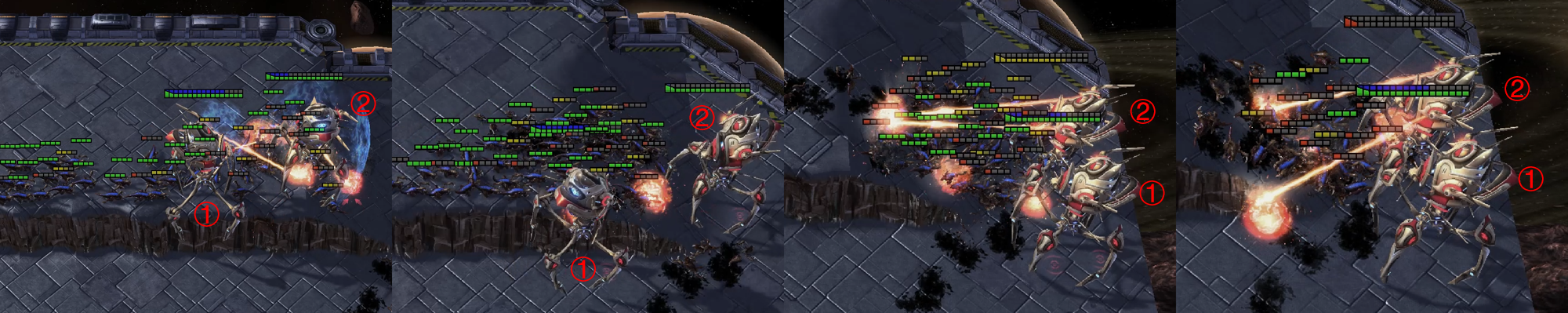}
        \caption{Fire attractor and distant sniper}
        \label{fig:2-64-4}
    \end{subfigure}
    \begin{subfigure}[b]{\textwidth}
        \centering
        \includegraphics[width=\textwidth]{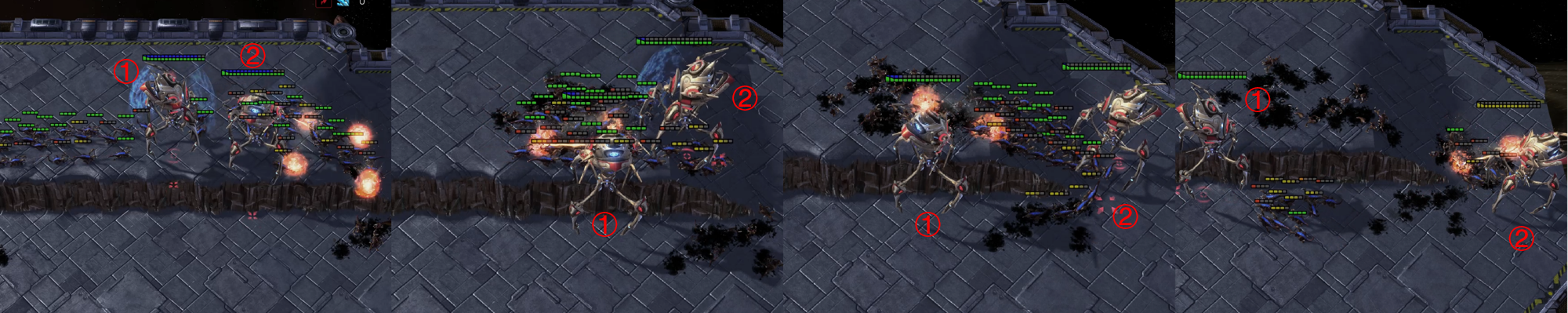}
        \caption{Cliff walk}
        \label{fig:2-64-5}
    \end{subfigure}
    \caption{Diverse strategies discovered by {\name} on the \textit{2c\_vs\_64zg} map in SMAC.}
    \label{fig:2-64}
\end{figure}

\begin{figure}[h]
    \centering
    \begin{subfigure}[b]{\textwidth}
        \centering
        \includegraphics[width=\textwidth]{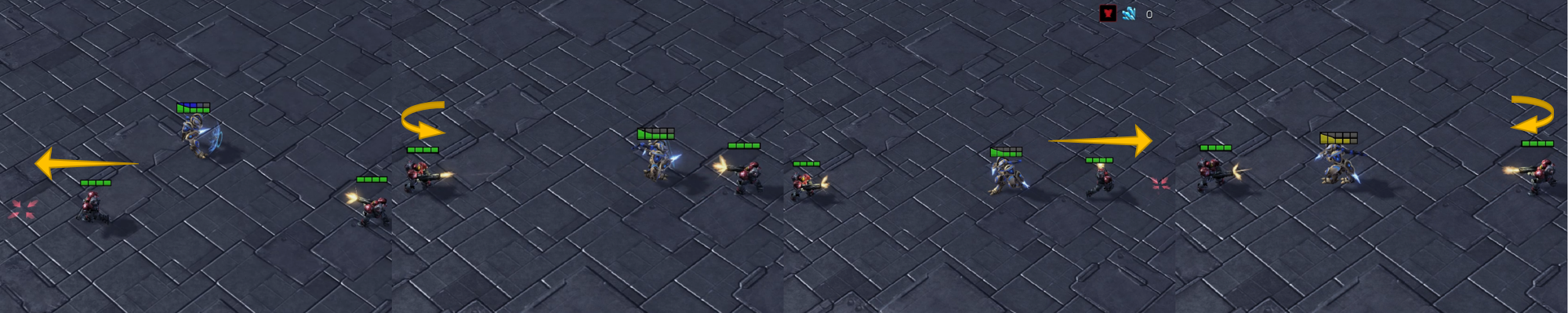}
        \caption{Swinging}
        \label{fig:2-1-0}
    \end{subfigure}
    \begin{subfigure}[b]{\textwidth}
        \centering
        \includegraphics[width=\textwidth]{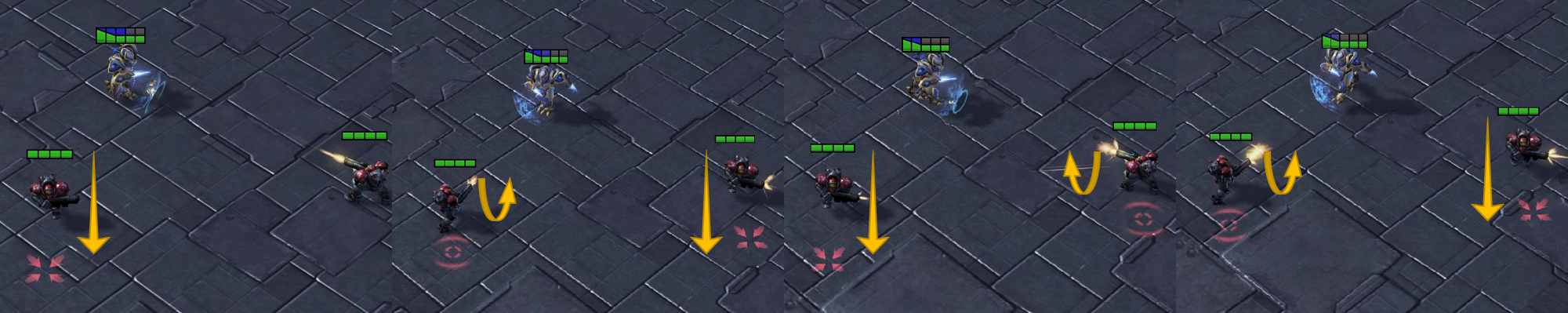}
        \caption{Parallel hit-and-run}
        \label{fig:2-1-1}
    \end{subfigure}
    \begin{subfigure}[b]{\textwidth}
        \centering
        \includegraphics[width=\textwidth]{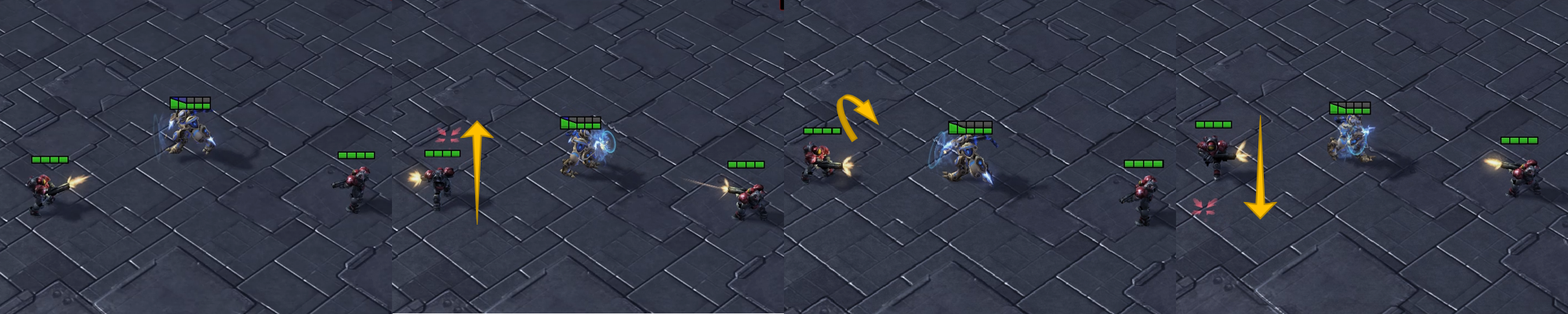}
        \caption{One-sided vertical swinging}
        \label{fig:2-1-2}
    \end{subfigure}
    \begin{subfigure}[b]{\textwidth}
        \centering
        \includegraphics[width=\textwidth]{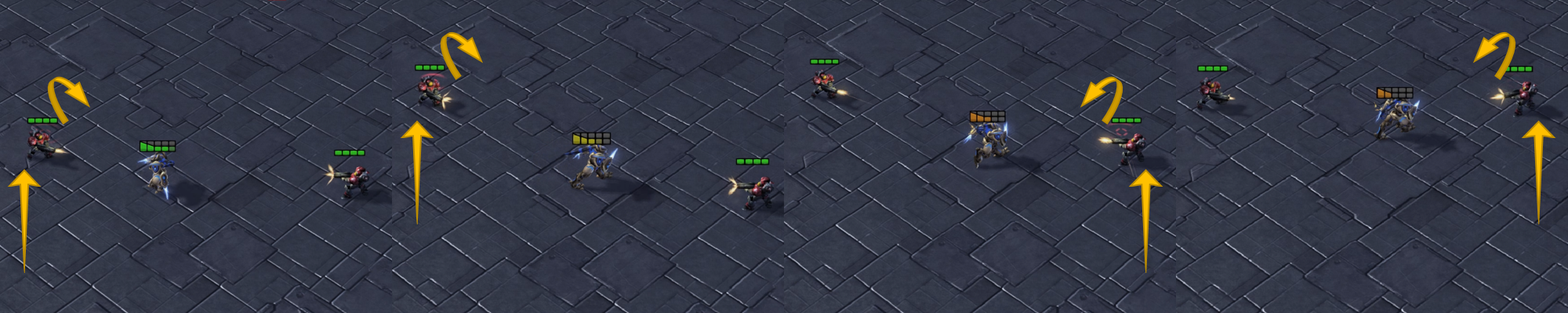}
        \caption{Alternative distraction}
        \label{fig:2-1-3}
    \end{subfigure}
    \caption{Diverse strategies discovered by {\name} on the \textit{2m\_vs\_1z} map in SMAC. The movement direction is indicated by yellow arrows.}
    \label{fig:2-1}
\end{figure}

\section{Environment Details}
\subsection{4-Goals}

The implementation of \textit{4-Goals} is based on \textit{Multi-Agent Particle Environments}\footnote{\href{https://github.com/openai/multiagent-particle-envs}{https://github.com/openai/multiagent-particle-envs}}~\citep{Mordatch2018EmergenceOG}. The size of the agent is 0.02 and the size of the goals are 0.04 (in easy and medium mode). The agent spawns at $(0,0)$. In the easy mode, the four goals spawn at $\{(0, 0.5), (0.5, 0), (0, -0.5), (-0.5, 0)\}$ respectively; In the medium and hard mode, the four goals spawn uniformly randomly with $x\in[-1, 1]$ and $y\in[-1,1]$. At each step, the agent receives an observation vector of the relative positions of the four goals to itself. The game ends when the agent touches any of the goals or when the timestep limit of $16$ is reached.

\subsection{Gridworld Stag-Hunt Games}
\label{appendix:stag-hunt-details}
Both \textit{Monster-Hunt} and \textit{Escalation} are based on a $5\times 5$ grid. In both games, the action space contains 5 discrete actions, including moving up, moving down, moving up, moving down and stay. The policy output is a softmax distribution over these actions. Each movement action moves the agent to one neighboring grid while the stay action makes the agent stay still.
Each episode lasts for $50$ timesteps.
We used vector representations with continuous values for observations, i.e., two real numbers to represent the 2-dimensional positions for each entity. The detailed state representations are presented in the following sub-sections respectively. (Note: the dimension of each feature is shown in the parentheses)

\subsubsection{Monster-Hunt}

\textit{Monster-Hunt} is based on a $5\times 5$ grid. There are three types of entities: 2 agents, 1 monster and 2 apples. All of the entities spawn uniformly randomly on the grid without overlapping. At each step, both agents can choose to move up, down, left, right or stay; The monster always moves for one step towards its closest agent (ties break randomly). If one agent ends up in the same grid with the monster, it receives a penalty of -2; If both agents end up in the same grid with the monster, they both receive of bonus of 5; If any agent ends up in the same grid with an apple, it receives a bonus of 1 (ties break randomly if both agents enter the apple grid at the same time). Whenever any agent ends up in the same grid with a non-agent entity, the entity respawns randomly on the grid. The observation vector of an agent consists of the absolute position of itself and the relative positions of the other entities.
Specifically, total dimension is 10, consisting of self absolute position (2), other agent’s relative position (2), monster’s relative position (2), apple 1’s relative position (2), apple 2’s relative position (2).

\subsubsection{Escalation}

Same as \textit{Monster-Hunt}, \textit{Escalation} is also based on a $5\times 5$ grid. There are two types of entities: 2 agents and 1 light. All of the entities spawn uniformly randomly on the grid without overlapping. When the two agents and the light ends up in the same grid for the first time, they enter \textit{cooperation} stage and the cooperation length $L$ is set to 0. At each step of \textit{cooperation} stage, they first both receive a bonus of $1$, then $L$ increases by 1, the light entity moves to a random adjacent grid and the agents move again. The cooperation stage continues only if both agents move the the same grid as the light. If only one agent moves to the light, it receives a penalty of $-0.9L$. The game ends if the cooperation stage ends. The observation vector of an agent consists of the current cooperation length $L$, the absolute position of itself and the relative positions of the other entities.
Specifically, total dimension is 7, consisting of current cooperation length (1), self absolute position (2), other agent’s relative position (2), light’s relative position (2).

\subsection{MuJoCo}
\label{sec:mujoco-details}
We use the MuJoCo environments from Gym (version 0.17.3) with a moving forward reward. The only modification is that we set the episode length to be $512$ so that the diverse intrinsic rewards can be easily computed.

\subsection{Starcraft Multi-Agent Challenge}
SMAC concentrates on decentralized micromanagement scenarios, where each unit of the game is controlled by an individual RL agent. Detailed description of state/action space and reward function can be found in~\citet{starcraft}.

\section{Implementation Details}
For all experiments, baselines are re-initialized and re-run multiple times with the same number of iterations as {\name} except for population-based methods. Population-based baselines are run for a single trial with a population of policies trained in parallel. We also make sure the population size is the same as the iteration number of {\name}. All the baselines are built upon PPO with the same hyperparameters and run for the same number of total environment steps.

\subsection{{\name}}
\label{sec:app-impl}


\begin{table}[t]
\centering
\begin{threeparttable}
    \caption{PPO hyperparameters for different experiments.}
    \label{tab:ppo-hp}
    \begin{tabular}{ccccc}
        \toprule
        Hyperparameters       & \textit{4-Goals} & \textit{Stag-Hunt Games} & \textit{MuJoCo} & \textit{SMAC} \\
        \midrule 
        Network structure & MLP & MLP & MLP & MLP+RNN\tnote{1} \\
        Hidden size & 64 & 64 & 64 & 64 \\
        Initial learning rate  & 3e-4 & 1e-3 & 3e-4 & 5e-4 \\
        Batch size             & 8192 & \begin{tabular}{@{}c@{}}\textit{Monster-Hunt}: 12800 \\ \textit{Escalation}: 6400 \end{tabular} & 16384 & 1600\tnote{2}\\
        Minibatch size         & 8192 & \begin{tabular}{@{}c@{}}\textit{Monster-Hunt}: 12800 \\ \textit{Escalation}: 6400 \end{tabular} & 512 &  1600\tnote{2} \\
        Adam stepsize ($\epsilon$) & 1e-5 & 1e-5 & 1e-5 & 1e-5 \\
        Discount rate ($\gamma$)   & 0.99 & 0.99 & 0.99 & 0.99 \\
        GAE parameter ($\lambda$)  & 0.95 & 0.95 & 0.95 & 0.95 \\
        Value loss coefficient & 0.5  & 1.0  & 0.5  & 0.5 \\
        Entropy coefficient    & 0.05 & 0.01 & 0.01 & 0.01 \\
        Gradient clipping      & 0.5  & 0.5  & 0.5  & 10.0 \\
        PPO clipping parameter & 0.2  & 0.2  & 0.2  & 0.2 \\
        PPO epochs             & 4    & 4    & 10   &\begin{tabular}{@{}c@{}}\textit{2m\_vs\_1z}: 1 \\ \textit{2c\_vs\_64zg}: 5 \end{tabular}\\
        \bottomrule
    \end{tabular}
    \begin{tablenotes}
    \item[1] Gated Recurrent Unit (GRU)~\citep{gru}
         \item[2] 1600 chunks with length 10.
    \end{tablenotes}
    
\end{threeparttable}

\end{table}

We use PPO with separate policy and value networks as the algorithm backbone of {\name} in all experiments. Both policy and value networks are multi-layer perceptrons (MLP) with a single hidden layer. Specifically for SMAC, we add an additional GRU~\citep{gru} layer with the same hidden size and adopt the paradigm of centralized-training-with-decentralized-execution (CTDE), under which
agents share a policy conditioned on local observations and a value function conditioned on global states. The PPO hyperparameters we use for each experiment is shown in Table~\ref{tab:ppo-hp}.
Even though the trajectory filtering technique induces an unbiased policy gradient estimator, it introduces unnegligible variance in the early stage of training. Therefore, we typically use a large batch size for PPO training to alleviate the defect of trajectory filtering. What's more, by using a large batch size, the advantage normalization and value normalization tricks could empirically improve learning stability. In the late stage of training, more trajectories are accepted and learning smoothly converges to a locally optimal solution as shown in Fig~.\ref{fig:monster-hunt-efficiency}.
We also note that the hidden size is $64$ across the 4 domains, which may hurt final performance especially in the \emph{MuJoCo} environments.

There are three additional hyperparameters for \name{}: the automatic threshold coefficient $\alpha$ mentioned in Section~\ref{sec:rspo}, weight of the behavior-driven intrinsic reward $\lambda^{int}_{\tB}$ and weight of the reward-driven intrinsic reward $\lambda^{int}_{\tR}$. These hyperparameters are shown in Table~\ref{tab:rspo-hp}.

\begin{table}
\caption{\name{} hyperparameters for different experiments.}
\label{tab:rspo-hp}
\centering
\begin{tabular}{ccccc}
\toprule
Hyperparameters       & \textit{4-Goals} & \textit{Stag-Hunt Games} & \textit{MuJoCo} & \textit{SMAC} \\
\midrule 
$\lambda^{int}_{\tB}$   & \textit{N/A} & 0.2 & 1 & \begin{tabular}{@{}c@{}}\textit{2m\_vs\_1z}: 1 \\ \textit{2c\_vs\_64zg}: 5 \end{tabular}\\
$\lambda^{int}_{\tR}$   & \textit{N/A} & 1 & 0 & \begin{tabular}{@{}c@{}}\textit{2m\_vs\_1z}: 0.05 \\ \textit{2c\_vs\_64zg}: 0 \end{tabular}\\
$\alpha$ &0.5&0.6&
\begin{tabular}{@{}c@{}}\textit{H.Cheetah}: 1.1 \\ \textit{Hopper} \& \textit{Walker2d}: 0.9\\\textit{Homanoid}: 1.0 \end{tabular} &
\begin{tabular}{@{}c@{}}\textit{2m\_vs\_1z}: 0.5 \\ \textit{2c\_vs\_64zg}: 0.3 \end{tabular}\\
\bottomrule
\end{tabular}
\end{table}

\paragraph{Smoothed-switching} We mention the \textit{smoothed-switching} technique in Section~\ref{sec:rspo} and remark that it helps to stabilize training when there are a large number of reference policies. Therefore, we apply \textit{smoothed-switching} in experiments with 10+ iterations, specifically in Fig.~\ref{fig:monster-hunt-passive} and Fig.~\ref{fig:escalation-num}. We use the average of switching indicators in a batch as the smoothed indicator.


\subsection{Baseline Methods}
All baseline methods are integrated with PPO and utilize the same hyperparameters as presented in Table~\ref{tab:ppo-hp}.
For population-based methods, all workers have independent data collection processes and buffers. {\name} and all baseline methods except for PG are initialized with the same weights.
If not specially mentioned, the algorithm-specific hyperparameters are inherited from other papers which proposed the algorithm or utilized the algorithm as a baseline.

\paragraph{PG} Policy gradient with restarts (PG) utilize different random seeds for different policies. Samples are collected and consumed independently during training.

\paragraph{PBT-CE} Population-based training with our cross entropy objective (PBT-CE) utilizes the intrinsic reward in Eq.~\ref{eq:ce} as a soft objective without trajectory filtering and reward switching. By comparing PBT-CE with {\name}, we can validate the significance of the iterative learning and reward-switching scheme. 
We performed a grid search over 1e0, 5e-1, 1e-2, 5e-2, 5e-3, 2e-3, and 1e-3 on the Lagrange multiplier in each domain, and only 1e-3 converges.

\paragraph{DIPG} DIPG~\citep{Masood2019DiversityInducingPG} utilizes the Maximum Mean Discrepancy (MMD) of the \emph{state distribution} between the learning policy and reference policies as a diversity driven objective. By contrast, our algorithm mainly focuses on the divergence of \emph{action distribution} and optionally augment with approximate divergence of state distribution.
We performed a grid search over 2e0, 1e0, 5e-1, 1e-1, and 1e-2 on the coefficient of MMD loss in each domain.
The MMD loss coefficient is fixed to $0.1$ in SMAC and $1.0$ in other environments. We refer to the public implementation\footnote{\url{https://github.com/dtak/DIPG-public}}.

\paragraph{MAVEN} We use the open-source implementation\footnote{\url{https://github.com/AnujMahajanOxf/MAVEN}}.

\paragraph{PGA-MAP-Elites} We use the open-source implementation\footnote{\url{https://github.com/ollenilsson19/PGA-MAP-Elites}}.

\paragraph{DvD} Diversity via Determinant (DvD)~\citep{ParkerHolder2020EffectiveDI} algorithm aims to maximize the determinant of the RBF kernel matrix of action embedding. The original paper mainly focuses on the continuous control domain and directly adopts the action as action embedding, while it remains unclear how to embed a discrete action space. 
Regarding implementation, we refer to the public codebase\footnote{\url{https://github.com/jparkerholder/DvD_ES}}. We also utilize the Bayesian Bandits module to automatically adjust the coefficient of the auxiliary loss from $\{0,0.5\}$, same as the original paper.

\paragraph{SMERL} In SMERL~\citep{kumar2020one}, if the agent can achieve sufficiently high return in the trajectory, the trajectory will be augmented with intrinsic rewards of DIAYN~\citep{Eysenbach2019DiversityIA} for skill differentiation. 
SMERL is mainly designed for tackling perturbation in the continuous control domain. 
We use the score obtained in the first iteration of {\name} as the expert return for SMERL. We train SMERL with a latent dimension of 5 with intrinsic reward coefficient $\alpha=10.0$ and trajectory threshold raio $\epsilon=0.1$, same as the original paper. For a fair comparison, we train SMERL for $\text{latent-dim}\times$ more environment frames.

\paragraph{TrajDiv} Trajectory Diversity (TrajDiv)~\citep{lupu2021trajectory} is designed for tackling zero-shot coordination problem in cooperative multi-agent games, specifically Hanabi~\citep{bard2020hanabi}. Trajdiv utilizes the approximate Jensen-Shannon divergence with action discounting kernel as an auxiliary loss in population-based training.
We performed a grid search over 1e-2, 1e-1, 1e0 on the coefficient on the TrajDiv loss and over 1e-1, 5e-1, 9e-1 on the action-kernel discounting factor.
We set the coefficient of trajectory diversity loss to be $0.1$ and action discount factor to be $0.9$.

\paragraph{Ridge Rider} The Ridge Rider method~\citep{ParkerHolder2020RidgeRF} attempts to search for local optima via Hessian information. However, due to the high variance of policy gradients, the Hessian information can not be accurately estimated in complex RL problems. The Ridge Rider method basically diverges across the 4 domains we considered. Hence, we exclude the results of it.

\section{Population Diversity}

\textit{Population Diversity (PD)} is originally proposed in \cite{ParkerHolder2020EffectiveDI} as a metric to measure the group diversity of a population of deterministic policies $\Pi=\{\mu_1,\dots,\mu_M\}$. The original PD is defined as:

\begin{equation}
    \pdiv_{\text{DvD}}(\Pi):=\det(K_\text{SE}(\phi(\mu_i), \phi(\mu_j))_{i,j=1}^M)
\end{equation}

where $\phi$ is a \textit{Behavior Embedding} of $\pi$ defined as a concatenation of the actions taken on $N$ randomly sampled states and $K:\mathbb{R}^d\times\mathbb{R}^d\rightarrow\mathbb{R}$ is a kernel function. In the case of \cite{ParkerHolder2020EffectiveDI}, $K$ is the Squared Exponential (or RBF) kernel, defined as $K_{\text{SE}}(x, y):=\exp\left(-\frac{\|x-y\|^2}{2l^2N}\right)$ where $l$ is a \textit{length scale}. 

In~\cite{ParkerHolder2020EffectiveDI}, only deterministic policies were considered. In our work, we design an modified version of PD that suits a set of stochastic policies $\Pi=\{\pi_1,\dots,\pi_M\}$, defined as follows:

\begin{equation}
    \pdiv(\Pi):=\det(K_\text{JSD}(\pi_i, \pi_j)_{i,j=1}^M)
\end{equation}

where $K_\text{JSD}(x, y):=\exp\left(-\frac{JSD(x\| y)}{2p^2}\right)$ and $JSD$ is the Jensen-Shannon divergence. We provide the following theorem that connects our definition of PD with the original PD in continuous action space.

\begin{theorem}
    Let $\Pi=\{\mu_1,\dots,\mu_M\}$ be a set of deterministic policies in continuous action space. Let $\hat{\Pi}=\{\pi_1,\dots,\pi_M\}$ be the randomized policies of $\Pi$ defined as $\pi_i(a|s)\sim\mathcal{N}(\mu_i(s), \sigma^2)$ where $\sigma > 0$ is an arbitrary deviation. With a suitable choice of $p$, $\pdiv(\hat{\Pi})=\lim_{N\rightarrow\infty}\pdiv_\textrm{DvD}(\Pi)$ if the states in $\phi$ is sampled with the randomized policies.
\end{theorem}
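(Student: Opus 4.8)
The plan is to prove the claimed identity \emph{entrywise} on the two Gram matrices and then pass to the determinant by continuity. Fix $\rho$, the state-visitation distribution of the randomized policies $\hat{\Pi}$; this is the distribution from which the $N$ states used to build the DvD behavior embeddings are drawn, so for a deterministic policy $\mu_i$ the embedding is the concatenation $\phi(\mu_i)=(\mu_i(s_1),\dots,\mu_i(s_N))$ with $s_1,\dots,s_N\sim\rho$ i.i.d. The three ingredients are a law-of-large-numbers argument for the squared-exponential kernel, continuity of the determinant, and a Gaussian computation that rewrites the Jensen--Shannon kernel in terms of mean gaps.

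First I would expand $\|\phi(\mu_i)-\phi(\mu_j)\|^2=\sum_{n=1}^N\|\mu_i(s_n)-\mu_j(s_n)\|^2$, so that $\tfrac1N\|\phi(\mu_i)-\phi(\mu_j)\|^2$ is an empirical mean of i.i.d.\ terms and, by the strong law of large numbers, converges almost surely to $\bar{D}_{ij}:=\mathbb{E}_{s\sim\rho}\big[\|\mu_i(s)-\mu_j(s)\|^2\big]$. Since the squared-exponential kernel already divides by $N$ in the exponent, $K_\text{SE}(\phi(\mu_i),\phi(\mu_j))=\exp\!\big(-\tfrac1{2l^2}\cdot\tfrac1N\|\phi(\mu_i)-\phi(\mu_j)\|^2\big)\to\exp(-\bar{D}_{ij}/(2l^2))$ for every pair $i,j$. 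Because $\det$ is a polynomial (hence continuous) in the matrix entries, I can take the limit inside, obtaining $\lim_{N\to\infty}\pdiv_\textrm{DvD}(\Pi)=\det\big(\exp(-\bar{D}_{ij}/(2l^2))\big)_{i,j}$ almost surely, which is deterministic.

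It then remains to choose $p$ so that $K_\text{JSD}(\pi_i,\pi_j)=\exp(-\bar{D}_{ij}/(2l^2))$ entrywise. Since $\pi_i(\cdot\mid s)=\mathcal{N}(\mu_i(s),\sigma^2 I)$ and $\pi_j(\cdot\mid s)=\mathcal{N}(\mu_j(s),\sigma^2 I)$ share a common isotropic covariance, their Jensen--Shannon divergence depends on $s$ only through the scaled squared mean gap $\|\mu_i(s)-\mu_j(s)\|^2/\sigma^2$, i.e.\ it equals $h\big(\|\mu_i(s)-\mu_j(s)\|^2/\sigma^2\big)$ for a fixed increasing $h$ with $h(x)=x/8+o(x)$ near $0$ (the second-order relation $\mathrm{JSD}\approx\tfrac14\mathrm{KL}$ together with $\mathrm{KL}(\mathcal{N}(\mu_1,\sigma^2I)\|\mathcal{N}(\mu_2,\sigma^2I))=\|\mu_1-\mu_2\|^2/(2\sigma^2)$). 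Averaging over states gives $JSD(\pi_i\|\pi_j)=\mathbb{E}_{s\sim\rho}\big[h(\|\mu_i(s)-\mu_j(s)\|^2/\sigma^2)\big]$, which, in the regime where $\sigma$ dominates the mean separations, equals $\bar{D}_{ij}/(8\sigma^2)$ up to higher-order terms; setting $p^2=l^2/(8\sigma^2)$ then makes $K_\text{JSD}(\pi_i,\pi_j)$ coincide with the limiting $(i,j)$ entry from the previous paragraph, and the two determinants agree, i.e.\ $\pdiv(\hat{\Pi})=\lim_{N\to\infty}\pdiv_\textrm{DvD}(\Pi)$.

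\textbf{Main obstacle.} The delicate point is this last step. The exact Gaussian Jensen--Shannon divergence is bounded by $\log 2$ and is therefore not genuinely proportional to the squared mean gap, so the identification with the normalized SE kernel is exact only after linearizing $h$ — equivalently, asymptotically in $\sigma$ — which is precisely what the phrase ``a suitable choice of $p$'' is absorbing; making that quantifier honest is where the argument needs the most care. A secondary point is to pin down that the state measure used to construct $\phi$ is the same $\rho$ appearing in $JSD(\pi_i\|\pi_j)$: under the natural reading where a single shared batch of on-policy states is reused across all embeddings, the two expectations match term by term and the entrywise correspondence goes through, whereas policy-specific state samples would introduce cross-distribution terms that break it.
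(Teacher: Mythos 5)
Your overall architecture is the same as the paper's: pass to the per-entry limit of the squared-exponential kernel via a law of large numbers on $\tfrac1N\|\phi(\mu_i)-\phi(\mu_j)\|^2$, compute the Gaussian divergence in closed form state-wise, match the two kernels entrywise by a suitable choice of $p$, and move the limit through $\det$ by continuity. The difference — and the genuine gap — is in the divergence step. You interpret $JSD$ as the true mixture-based Jensen--Shannon divergence, correctly observe that it is bounded by $\log 2$ and therefore cannot be exactly proportional to the mean gap, and then rescue the step only by linearizing ($JSD\approx\tfrac14 \KL$, $p^2=l^2/(8\sigma^2)$), valid asymptotically as $\sigma$ dominates the mean separations. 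As written, this proves the claimed identity only up to higher-order terms, i.e.\ it does not establish the exact equality $\pdiv(\hat{\Pi})=\lim_{N\to\infty}\pdiv_\textrm{DvD}(\Pi)$ asserted in the theorem for an arbitrary fixed $\sigma>0$; your own ``main obstacle'' paragraph is an accurate admission that the final step is not closed.

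The paper closes that step because its $JSD$ is not the mixture-based divergence: in the proof it is taken to be $\tfrac12\dkl(\pi_i\|\pi_j)+\tfrac12\dkl(\pi_j\|\pi_i)$, the symmetrized KL. For equal-covariance Gaussians the state-wise KL is exactly $\tfrac{1}{2\sigma^2}\|\mu_i(s)-\mu_j(s)\|^2$ (no boundedness, no remainder), so averaging over states gives exactly $\tfrac{1}{2\sigma^2}\,\mathbb{E}_{s}\bigl[\|\mu_i(s)-\mu_j(s)\|^2\bigr]$, and the choice $p=l/\sigma$ makes $K_\text{JSD}(\pi_i,\pi_j)$ equal the limiting SE entry identically — no linearization, no small-gap regime, no asymptotics in $\sigma$. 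So the obstacle you flag is real for the standard JSD (and your instinct that the exact identity would then fail is right), but it is an artifact of a definitional mismatch with the quantity the theorem actually uses; with the symmetrized-KL reading your argument completes exactly and coincides with the paper's. Your secondary point about the state measure is handled similarly loosely in the paper: it takes the expectation over $s\sim\pi_i\cup\pi_j$ (the pairwise mixture of the randomized policies), which matches the hypothesis that the embedding states are sampled from the randomized policies; your single shared on-policy distribution $\rho$ is a defensible alternative reading, and either way the proof only needs the same state measure to appear in the embedding limit and in the divergence.
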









\begin{proof}

We first observe that for any $i, j\in \{1,\dots,M\}$,
\begin{align}
    \lim_{N\rightarrow\infty}\frac{\|\phi(\mu_i)-\phi(\mu_j)\|^2}{N}
    =&\lim_{N\rightarrow\infty}\frac{1}{N}\sum_{k=1}^N \|\mu_i(s_k)-\mu_j(s_k)\|^2 \nonumber\\
    =&\mathbb{E}_{s\sim\mu_i\cup\mu_j}\left[\|\pi_i(s)-\pi_j(s)\|^2\right]
\end{align}


In the case of stochastic policies, we have

\begin{align}
    \dkl(\pi_i\|\pi_j)=&\mathbb{E}_{s, a\sim \pi_i}\left[\log\frac{\pi_i(a|s)}{\pi_j(a|s)}\right]&& \nonumber\\
    =&\mathbb{E}_{s,a\sim\pi_i}\left[-\frac{1}{2\sigma^2}\left(\|a-\mu_i(s)\|^2-\|a-\mu_j(s)\|^2\right)\right]&& (\text{Since }\pi_i(a|s)\sim\mathcal{N}(\mu(s), \sigma^2))\nonumber\\
    =&\frac{1}{2\sigma^2}\mathbb{E}_{s\sim\pi_i}\left[(\|\mu_i(s)-\mu_j(s)\|^2+\sigma^2)-\sigma^2\right]&& (\text{Since }a\sim\pi_i(s))\nonumber\\
    =&\frac{1}{2\sigma^2}\mathbb{E}_{s\sim\pi_i}\left[\|\mu_i(s)-\mu_j(s)\|^2\right]
\end{align}

Then,

\begin{align}
    JSD(\pi_i\| \pi_j)=&\frac{1}{2}\dkl(\pi_i,\pi_j)+\frac{1}{2}\dkl(\pi_j,\pi_i)\nonumber\\
    =&\frac{1}{4\sigma^2}\mathbb{E}_{s\sim\pi_i}\left[\|\mu_i(s)-\mu_j(s)\|^2\right]+\frac{1}{4\sigma^2}\mathbb{E}_{s\sim\pi_j}\left[\|\mu_j(s)-\mu_i(s)\|^2\right]\nonumber\\
    =&\frac{1}{2\sigma^2}\mathbb{E}_{s\sim \pi_i\cup\pi_j}\left[\|\mu_i(s)-\mu_j(s)\|^2\right]
\end{align}

Therefore, if we choose $p=\frac{l}{\sigma}$, 

\begin{align}
    \pdiv(\hat{\Pi})=&\det(K_\text{JSD}(\pi_i, \pi_j)_{i,j=1}^M)\nonumber\\
    =&\det\left(\exp\left(-\frac{JSD(\pi_i\| \pi_j)}{2p^2}\right)_{i,j=1}^M\right)\nonumber\\
    =&\det\left(\exp\left(-\frac{\sigma^2JSD(\pi_i\| \pi_j)}{2l^2}\right)_{i,j=1}^M\right)\nonumber\\
    =&\det\left(\exp\left(\lim_{N\rightarrow\infty}-\frac{\|\phi(\mu_i)-\phi(\mu_j)\|^2}{2l^2N}\right)_{i,j=1}^M\right)\nonumber\\
    =&\lim_{N\rightarrow\infty}\det\left(\exp\left(-\frac{\|\phi(\mu_i)-\phi(\mu_j)\|^2}{2l^2N}\right)_{i,j=1}^M\right)\nonumber\\
    =&\lim_{N\rightarrow\infty}\pdiv_\textrm{DvD}(\Pi)
\end{align}

\end{proof}

\section{Motivation of Using Cross-Entropy Diversity Measure}
\label{sec:motivation-ce}
Maximizing KL-divergence between policies could be problematic in RL problems.
The accumulative KL-divergence of a trajectory is defined as
\begin{align*}
    D_{KL}(\pi_i,\pi_j)&=\mathbb{E}_{\tau\sim\pi_i}\left[\sum_t \log\frac{\pi_i (a_t|s_t)}{\pi_j (a_t|s_t)}\right]\\
    &=\underbrace{\mathbb{E}_{\tau\sim\pi_i}\left[-\sum_t \log\pi_j (a_t|s_t)\right]}_{\textrm{cross entropy}}-\underbrace{\mathbb{E}_{\tau\sim\pi_i}\left[-\sum_t \log\pi_i (a_t|s_t)\right]}_{\textrm{entropy of $\pi_i$}}\label{eq:kl}.
\end{align*}
The above derivation suggests that optimizing KL-divergence would inherently encourages learning a policy with small entropy. This is usually undesirable.
Therefore, we choose to use cross-entropy instead of KL-divergence.

\section{Derivation of Trajectory Filtering and Reward-Switching}
\label{appendix:barrier}



In this section, we want to validate our claim that solving the trajectory filtering objective in Eq.~(\ref{eq:filter-cpo}) and the reward switching objective in Eq.~(\ref{eq:switch-objective}) is equivalent to solving Eq.~(\ref{eq:cpo}) with an even stronger diversity measure $D_{\textrm{filter}}(\pi_i,\pi_j)$ defined by 
$$
D_{\textrm{filter}}(\pi_i,\pi_j)=\inf_{\tau\sim\pi_i}\left\{-\sum_t \log \pi_j \left(a_t|s_t\right)\right\}.
$$
Note that $D_\textrm{filter}(\pi_i,\pi_j)$ measures \emph{lowest} possible trajectory-wise negative likelihood rather than the \emph{average} value (i.e., cross-entropy) in Eq.~(\ref{eq:ce}).

For simplicity, in the following discussions, we omit the discounted factor $\gamma$ for conciseness. Before presenting the theoretical justifications, we focus on a simplified setting with the following assumptions.

\begin{assumption}
\label{assumption:reward-horizon}
(Non-negative Reward and Fixed Horizon) The MDP has finite states and actions with a fixed horizon length and non-negative rewards, i.e., $r_t\ge0$ at each timestep $t$. 
\end{assumption}

We remark that this is a general condition since for any fixed-horizon MDP with bounded rewards, we can shape the reward function without changing the optimal policies by adding a big constant to each possible reward.

\begin{assumption}
\label{assumption:multiple-optima}
(Multiple Distinct Global Optima) There exist $M$ \emph{distinct} global optimal policies $\pi^*_1,\pi^*_2,\dots,\pi^*_M$, namely, $\pi^*_i\in\arg\max_\pi\, \mathbb{E}_{\tau\sim\pi}\left[\sum_t r_t\right]$ for $1\le i\le M$, and $D_{\textrm{filter}}(\pi_i^*,\pi_j^*)\ge\delta$ for $1\le i\neq j\le M$. 
\end{assumption}

This assumption ensures that the problem is solvable. 

\begin{assumption}
\label{assumption:nontrival-optimum}
(Non-trivial Optimum) 
For each global optimal policy $\pi^*$, we have $\sum_{r_t\in \tau} r_t > 0$ for each possible trajectory $\tau$ under the policy $\pi^\star$. 
\end{assumption}

Since rewards are non-negative, Assumption~\ref{assumption:nontrival-optimum} suggests that the optimal policy will not generate any trivial trajectories. We also remark that similar to Assumption~\ref{assumption:reward-horizon}, this assumption can be generally satisfied by properly shaping the underlying reward function without changing the optimal policies. 

With all the assumptions presented, we first derive the filtering objective in Eq.~(\ref{eq:filter-cpo}) via Theorem~\ref{thm1} and then the switching objective in Eq.~(\ref{eq:switch-objective}) via Theorem~\ref{thm2}.

\begin{theorem}
\label{thm1}
(Filtering Objective)
Consider the constrained optimization problem
\begin{equation}
    \label{eq:c1}
    \pi_{k+1}=\arg\max_\pi\, \mathbb{E}_{\tau\sim\pi}\left[\sum_t r_t\right], \,\, \textrm{subject to }\, \inf_{\tau\sim\pi}\left\{-\sum_t \log \pi_i \left(a_t|s_t\right)\right\}\ge\delta,
    \,\,\forall 1\le i\le k< M.
\end{equation}
Given assumption~\ref{assumption:reward-horizon}, \ref{assumption:multiple-optima} and \ref{assumption:nontrival-optimum},
solving the following unconstrained optimization problem
\begin{equation}
    \label{eq:uc1}
    \hat{\pi}_{k+1}=\arg\max_\pi\, 
    \mathbb{E}_{\tau\sim\pi}\left[
    \Phi_k(\tau)\sum_t r_t
    \right]
\end{equation}
is equivalent to solving Eq.~(\ref{eq:c1}),
where $\Phi_k(\tau):=\prod_{i=1}^k \phi_i(\tau)$ and $\phi_i (\tau)$ is defined by
\begin{equation}
    \phi_i (\tau)=
    \left\{
    \begin{array}{lr}
             1, &\textrm{if}\;\;-\sum_t \log \pi_i(a_t|s_t) \ge \delta,\,\,(s_t,a_t)\sim\tau \\
             0, &\textrm{otherwise}
    \end{array}
    \right. .
\end{equation}
\end{theorem}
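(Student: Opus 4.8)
The plan is to prove that Eq.~(\ref{eq:c1}) and Eq.~(\ref{eq:uc1}) have exactly the same set of maximizers, by a sandwich argument on three quantities: the constrained optimal value $V^{\mathrm c}$ of Eq.~(\ref{eq:c1}), the optimal value $\tilde V$ of the filtering objective $\tilde J(\pi):=\mathbb{E}_{\tau\sim\pi}[\Phi_k(\tau)\sum_t r_t]$ in Eq.~(\ref{eq:uc1}), and the global value $V^{\star}:=\max_\pi J(\pi)$ with $J(\pi):=\mathbb{E}_{\tau\sim\pi}[\sum_t r_t]$. Because the horizon is fixed and the MDP is finite (Assumption~\ref{assumption:reward-horizon}), each policy has only finitely many positive-probability trajectories, so the constraint $\inf_{\tau\sim\pi}\{-\sum_t\log\pi_i(a_t|s_t)\}\ge\delta$ is equivalent to $\phi_i(\tau)=1$ for \emph{every} such trajectory and all $i\le k$, i.e.\ to $\Phi_k\equiv 1$ on the support of $\pi$; this identification is where finiteness does its work, and I would state it up front.

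First, the two easy bounds. Since $r_t\ge 0$ and $\Phi_k(\tau)\in\{0,1\}$, we have $\tilde J(\pi)\le J(\pi)\le V^{\star}$ for all $\pi$, so $\tilde V\le V^{\star}$; and if $\pi$ is feasible for Eq.~(\ref{eq:c1}) then $\Phi_k\equiv 1$ on its support, hence $\tilde J(\pi)=J(\pi)$, and taking a constrained optimum yields $\tilde V\ge V^{\mathrm c}$. Next I would use Assumption~\ref{assumption:multiple-optima}: under the iterative scheme the references $\pi_1,\dots,\pi_k$ can be identified with $k$ of the $M$ distinct global optima, so (after relabeling) $\pi^\star_{k+1}$ satisfies $D_{\mathrm{filter}}(\pi^\star_{k+1},\pi_i)\ge\delta$ for all $i\le k$ and is feasible for Eq.~(\ref{eq:c1}) with value $V^{\star}$; hence $V^{\mathrm c}\ge V^{\star}$, and combining the inequalities gives $V^{\mathrm c}=\tilde V=V^{\star}$.

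Finally I would transfer optimizers in both directions. If $\pi$ solves Eq.~(\ref{eq:c1}) it is feasible, so $\tilde J(\pi)=J(\pi)=V^{\mathrm c}=\tilde V$ and $\pi$ solves Eq.~(\ref{eq:uc1}). Conversely, let $\hat\pi$ solve Eq.~(\ref{eq:uc1}); then $V^{\star}=\tilde V=\tilde J(\hat\pi)\le J(\hat\pi)\le V^{\star}$, so both inequalities are equalities: $J(\hat\pi)=V^{\star}$ (thus $\hat\pi$ is a global optimum) and $\mathbb{E}_{\tau\sim\hat\pi}[(1-\Phi_k(\tau))\sum_t r_t]=0$. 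Assumption~\ref{assumption:nontrival-optimum} forces $\sum_t r_t>0$ along every positive-probability trajectory of $\hat\pi$, so the vanishing of the non-negative quantity $(1-\Phi_k(\tau))\sum_t r_t$ forces $\Phi_k(\tau)=1$ on the support; hence $\phi_i(\tau)=1$ for all $i\le k$, $\hat\pi$ is feasible for Eq.~(\ref{eq:c1}), and, with value $V^{\star}=V^{\mathrm c}$, it is optimal for it.

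The main obstacle is precisely this last direction: a maximizer of the filtering objective could \emph{a priori} place probability on filtered ($\Phi_k=0$) trajectories at no cost to $\tilde J$, so feasibility is not automatic. The argument closes only because Assumption~\ref{assumption:multiple-optima} pins $V^{\mathrm c}$ to the global value $V^{\star}$ (forcing a maximizer of $\tilde J$ to be a genuine global optimum of $J$), after which Assumption~\ref{assumption:nontrival-optimum} together with reward non-negativity rules out zero-reward trajectories and hence any $\Phi_k=0$ trajectory in the support. I would emphasize that dropping any one of the three assumptions breaks this chain, and note as a corollary that the recovered solution is strictly novel, $D_{\mathrm{filter}}(\hat\pi,\pi_i)\ge\delta$, which is the "stronger diversity constraint" claimed in the surrounding text.
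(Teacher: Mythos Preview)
Your proof is correct and follows essentially the same approach as the paper: both arguments hinge on showing (via Assumption~\ref{assumption:multiple-optima} and the implicit induction that earlier references are themselves global optima) that the constrained optimum attains the global value $V^\star$, and then using Assumption~\ref{assumption:nontrival-optimum} together with reward non-negativity to force any filtering maximizer to be feasible. Your presentation via the three-value sandwich $V^{\mathrm c}=\tilde V=V^\star$ and explicit two-direction optimizer transfer is somewhat tidier than the paper's case split on $\pi\in\mathcal{S}$ versus $\pi\notin\mathcal{S}$, but the logical content is identical.
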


\begin{proof}
When $k=1$, the optimization problems in Eq.~(\ref{eq:c1}) and Eq.~(\ref{eq:uc1}) are trivially equivalent. We assume the optimization in the first iteration leads to a global optimum. For any $k>1$, given previously discovered optimal policies $\{\pi_i|1\le i\le k\}$, we will show that Eq.~(\ref{eq:uc1}) must have an equivalent solution to Eq.~(\ref{eq:c1}), which is a new global optimum. By induction, the proposition holds.

Consider a fixed $k$, we use $\mathcal{S}$ to denote the subspace of feasible policies satisfying all the distance constraints, i.e., $\mathcal{S}=\{\pi\,|\, \forall 1\le i\le k,\;D_{\textrm{filter}}(\pi,\pi_i)\ge \delta\}$. Note that by Assumption~\ref{assumption:multiple-optima}, We would like to prove Theorem.~\ref{thm1} by showing that the optimal value of Eq.~(\ref{eq:uc1}) over $\pi\in\mathcal{S}$ is greater than the value over $\pi\notin\mathcal{S}$. Consequently, the solution of Eq.~(\ref{eq:uc1}) should satisfy the constraints in Eq.~(\ref{eq:c1}).

For $\pi\in\mathcal{S}$,
\begin{equation}
    \max_{\pi\in\mathcal{S}} \,\mathbb{E}_{\tau\sim\pi\left[
    \Phi_k(\tau)\sum_t r_t
    \right]}
=\max_{\pi\in\mathcal{S}} \,\mathbb{E}_{\tau\sim\pi}\left[\sum_t r_t\right]=\max_{\pi\, \mathbb{E}_{\tau\sim\pi}\left[\sum_t r_t\right]}.
\end{equation}

For $\pi\notin\mathcal{S}$, we define
\begin{equation}
\label{eq:tau_set}
    T=\left\{\tau\sim\pi\left|-\sum_t \log \pi_i \left(a_t|s_t\right)\ge\delta\right.,\,\,\forall 1\le i\le k\right\}
\end{equation}
as the constraint-satisfying set of  $\tau$ under policy $\pi$. The right-hand side of Eq.~(\ref{eq:uc1}) can be written as
\begin{align}
\label{eq:derivation1}
\begin{aligned}
    \max_{\pi\notin\mathcal{S}}\,\textrm{RHS}
    &=\max_{\pi\notin\mathcal{S}}\,\mathbb{E}_{\tau\sim\pi}\left[\Phi_k(\tau)\sum_t r_t\right]&&\\
    &=\max_{\pi\notin\mathcal{S}}\,\mathbb{P}_\pi(\tau\in T)\mathbb{E}_{\tau\sim\pi}\left[\Phi_k(\tau)\left.\sum_t r_t\right|\tau\in T\right]
    +\mathbb{P}_\pi(\tau\notin T)\mathbb{E}_{\tau\sim\pi}\left[\Phi_k(\tau)\left.\sum_t r_t\right|\tau\notin T\right]&&\\
    &=\max_{\pi\notin\mathcal{S}}\,\mathbb{P}_\pi(\tau\in T)\mathbb{E}_{\tau\sim\pi}\left[1\cdot \left.\sum_t r_t\right|\tau\in T\right]
    +\mathbb{P}_\pi(\tau\notin T)\mathbb{E}_{\tau\sim\pi}\left[0\cdot \left.\sum_t r_t\right|\tau\notin T\right]&&\\
    &=\max_{\pi\notin\mathcal{S}}\,\mathbb{P}_\pi(\tau\in T)\mathbb{E}_{\tau\sim\pi}\left[ \left.\sum_t r_t\right|\tau\in T\right].&&\\
\end{aligned}
\end{align}
Here $\mathbb{E}_{\tau\sim\pi}\left[\cdot\left|\tau\in T\right.\right]$ denotes the expectation conditioned on the event $\{\tau\in T\}$ and $\mathbb{P}_\pi(\tau\in T)=\mathbb{E}_{\tau\sim\pi}\left[\Phi_k(\tau)\right]$.

Next, we want to prove that
\begin{equation}
\label{eq:ineq1}
    \max_{\pi\notin\mathcal{S}}\,\textrm{RHS of Eq.~(\ref{eq:uc1})}\\
    <
    \max_{\pi\notin\mathcal{S}}\mathbb{E}_{\tau\sim\pi}\left[\sum_t r_t\right].
\end{equation}
On one side, if the optimal value of the right-hand side in Eq.~(\ref{eq:uc1}) is not a global optimum, Eq.~(\ref{eq:ineq1}) holds because $r_t\ge 0$ by Assumption~\ref{assumption:reward-horizon}. On the other side, we consider the solution of the right-hand side in Eq.~(\ref{eq:uc1}) is a globally optimal policy $\pi^*\notin\mathcal{S}$. According to the definition of $T$, $\mathbb{P}_{\pi^*}(\tau\notin T)>0$. By Assumption~\ref{assumption:nontrival-optimum}, for $\forall \tau\sim\pi^*$, $\sum_\tau r_t>0$. Further, for $\pi^*\notin\mathcal{S}$, $\mathbb{P}_\pi(\tau\notin T)\mathbb{E}_{\tau\sim\pi}\left[\left.\sum_t r_t\right|\tau\notin T\right]>0$. Thus, 
\begin{align}
\label{eq:tmpeq}
\begin{aligned}
    \max_{\pi\notin\mathcal{S}}\,\textrm{RHS of Eq.~(\ref{eq:uc1})}&<\max_{\pi\notin\mathcal{S}}\,\mathbb{P}_\pi(\tau\in T)\mathbb{E}_{\tau\sim\pi}\left[ \left.\sum_t r_t\right|\tau\in T\right]
    +\mathbb{P}_\pi(\tau\notin T)\mathbb{E}_{\tau\sim\pi}\left[\left.\sum_t r_t\right|\tau\notin T\right]\\
    &=\max_{\pi\notin\mathcal{S}}\mathbb{E}_{\tau\sim\pi}\left[\sum_t r_t\right]
    =\max_{\pi}\mathbb{E}_{\tau\sim\pi}\left[\sum_t r_t\right]
    =\max_{\pi\in\mathcal{S}}\mathbb{E}_{\tau\sim\pi}\left[\sum_t r_t\right].
\end{aligned}
\end{align}

Therefore, we can conclude that the maximum value of the objective function in Eq.~(\ref{eq:uc1}) should be obtained when $\pi\in\mathcal{S}$, because for $\pi\in\mathcal{S}$, $\phi_i (\tau)=1$ for all $1\le i\le k$, the constrained objective and the unconstrained objective have the same form. Finally, by induction, the solution of Eq.~(\ref{eq:uc1}) must be a solution of Eq.~(\ref{eq:c1}).
\end{proof}

\begin{theorem}
\label{thm2}
(Switching Objective)
Consider the constrained optimization problem in Eq.~(\ref{eq:c1}).
Given Assumption.~\ref{assumption:reward-horizon}, \ref{assumption:multiple-optima} and \ref{assumption:nontrival-optimum},
for any $\delta>0$, there exists some $\lambda>0$ such that
solving the following unconstrained optimization problem
\begin{equation}
    \label{eq:uc2}
    \hat{\pi}_{k+1}=\arg\max_\pi\, 
    \mathbb{E}_{\tau\sim\pi}\left[\Phi_k(\tau)\sum_t r_t
    +\lambda\sum_{i=1}^{k}\left(1-\phi_i (\tau)\right)\left(-\sum_t \log \pi_i(a_t|s_t)\right)\right]
\end{equation}
is equivalent to solving Eq.~(\ref{eq:c1}),
where $\Phi_k(\tau):=\prod_{i=1}^k \phi_i(\tau)$ and $\phi_i (\tau)$ is defined by
\begin{equation}
    \phi_i (\tau)=
    \left\{
    \begin{array}{lr}
             1, &\textrm{if}\;\;-\sum_t \log \pi_i(a_t|s_t) \ge \delta,\,\,(s_t,a_t)\sim\tau \\
             0, &\textrm{otherwise}
    \end{array}
    \right ..
\end{equation}
\end{theorem}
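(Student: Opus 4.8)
The plan is to follow the same template as the proof of Theorem~\ref{thm1}: argue by induction on $k$ and show that every maximizer of the unconstrained objective in Eq.~(\ref{eq:uc2}) already satisfies the constraints of Eq.~(\ref{eq:c1}). The base case $k=1$ is immediate since $\Phi_0\equiv 1$ and the penalty sum is empty; for the inductive step I assume $\pi_1,\dots,\pi_k$ are distinct global optima with $D_{\textrm{filter}}(\pi_i,\pi_j)\ge\delta$, let $\mathcal{S}=\{\pi\mid D_{\textrm{filter}}(\pi,\pi_i)\ge\delta,\ \forall\, 1\le i\le k\}$ be the feasible set, and set $R^\star=\max_\pi \mathbb{E}_{\tau\sim\pi}[\sum_t r_t]$. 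On $\mathcal{S}$ every sampled trajectory has $\phi_i(\tau)=1$ for all $i$, so $\Phi_k(\tau)=1$ and the penalty term vanishes identically; hence the objective of Eq.~(\ref{eq:uc2}) restricted to $\mathcal{S}$ equals $\mathbb{E}_{\tau\sim\pi}[\sum_t r_t]$, whose maximum over $\mathcal{S}$ is $R^\star$ by Assumption~\ref{assumption:multiple-optima}.

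Next I would bound the objective for $\pi\notin\mathcal{S}$. Write $T$ for the constraint-satisfying trajectory set of Eq.~(\ref{eq:tau_set}); on $T$ the penalty term is zero, and on the complement of $T$ each active summand equals $-\sum_t\log\pi_i(a_t\mid s_t)<\delta$ (this is precisely the condition $\phi_i(\tau)=0$), so the whole penalty term is at most $k\delta\cdot\mathbb{P}_\pi(\tau\notin T)$. The crucial observation is that a trajectory outside $T$ cannot be reward-free: if $-\sum_t\log\pi_i(a_t\mid s_t)<\delta<\infty$ for some reference policy $\pi_i$, then every factor $\pi_i(a_t\mid s_t)$ is positive, so $\tau\in\mathrm{supp}(\pi_i)$; since $\pi_i$ is a global optimum, Assumption~\ref{assumption:nontrival-optimum} gives $\sum_t r_t>0$ along $\tau$, and because the MDP is finite with a fixed horizon (Assumption~\ref{assumption:reward-horizon}) there is a uniform $r_{\min}>0$ lower-bounding $\sum_t r_t$ over the finitely many trajectories in $\bigcup_{i\le k}\mathrm{supp}(\pi_i)$, hence over all $\tau\notin T$. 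Therefore $\mathbb{E}_{\tau\sim\pi}[\Phi_k(\tau)\sum_t r_t]\le \mathbb{E}_{\tau\sim\pi}[\sum_t r_t]-r_{\min}\,\mathbb{P}_\pi(\tau\notin T)\le R^\star - r_{\min}\,\mathbb{P}_\pi(\tau\notin T)$, and combining the two bounds the full objective of Eq.~(\ref{eq:uc2}) at $\pi$ is at most $R^\star+\mathbb{P}_\pi(\tau\notin T)\,(\lambda k\delta-r_{\min})$.

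I would then fix $\lambda\in(0,\,r_{\min}/(k\delta))$ — this is where $\lambda$ depends on $\delta$ and on the MDP: for any $\pi\notin\mathcal{S}$ one has $\mathbb{P}_\pi(\tau\notin T)>0$, so the objective is strictly below $R^\star$, while it equals $R^\star$ somewhere on $\mathcal{S}$. Hence every maximizer $\hat\pi_{k+1}$ of Eq.~(\ref{eq:uc2}) lies in $\mathcal{S}$; restricted there the objective is just $\mathbb{E}_{\tau\sim\pi}[\sum_t r_t]$, so $\hat\pi_{k+1}$ attains $R^\star$, i.e.\ it is a global optimum, and it satisfies all $\delta$-constraints — exactly the conditions defining a solution of Eq.~(\ref{eq:c1}). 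By induction (and Assumption~\ref{assumption:multiple-optima} guaranteeing a feasible new optimum exists at each step up to $M$) the claim follows.

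The step I expect to be the main obstacle is the reward lower bound on constraint-violating trajectories: a priori a policy could place probability mass on \emph{zero-reward} trajectories that happen to violate a diversity constraint, collect the bounded-but-positive cross-entropy bonus there, and thereby overshoot $R^\star$ no matter how small $\lambda$ is. The resolution — that a finite negative log-likelihood against $\pi_i$ forces the trajectory into $\mathrm{supp}(\pi_i)$, where Assumption~\ref{assumption:nontrival-optimum} together with finiteness supply the uniform $r_{\min}$ — is precisely what makes the small-$\lambda$ choice work, and it is also the only place where Assumption~\ref{assumption:nontrival-optimum} is genuinely needed here (Theorem~\ref{thm1} used it only to exclude a degenerate global optimum, whereas the penalty term forces us to exclude degenerate \emph{near}-optima as well). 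A minor secondary point is ensuring a maximizer exists (finiteness of the MDP / compactness of the policy class) so that the phrase "every maximizer lies in $\mathcal{S}$" is meaningful.
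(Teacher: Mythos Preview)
Your proof is correct and follows the same inductive template as the paper—split on the feasible set $\mathcal{S}$, show the switching objective attains $R^\star$ on $\mathcal{S}$, and is strictly below $R^\star$ for every $\pi\notin\mathcal{S}$ once $\lambda$ is small—but your mechanism for producing the gap is different and in fact more robust. The paper bounds the penalty crudely by $\lambda k\delta$ (dropping the factor $\mathbb{P}_\pi(\tau\notin T)$), defines $\Delta:=R^\star-\max_{\pi\notin\mathcal{S}}\mathbb{E}_\pi\bigl[\Phi_k(\tau)\sum_t r_t\bigr]$, and takes $\lambda\le\Delta/(k\delta)$; the positivity of $\Delta$ is argued assuming this maximum over the open set $\{\pi\notin\mathcal{S}\}$ is attained, yet mixing any feasible global optimum with a vanishing uniform component stays outside $\mathcal{S}$ while driving the filtered return to $R^\star$, so that supremum is actually $R^\star$ and $\Delta$ collapses to zero. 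You instead retain $\mathbb{P}_\pi(\tau\notin T)$ on both the lost reward and the penalty, and replace the policy-level gap by a \emph{trajectory}-level one: any $\tau\notin T$ has finite NLL against some reference $\pi_i$, so $\pi_i(a_t\mid s_t)>0$ along $\tau$, and since the initial-state and transition factors are already positive (as $\tau$ was sampled from $\pi$), $\tau\in\mathrm{supp}(\pi_i)$; Assumption~\ref{assumption:nontrival-optimum} applied to the fixed reference policies plus finiteness of the MDP then yield a uniform $r_{\min}>0$. The resulting pointwise bound $R^\star+(\lambda k\delta-r_{\min})\,\mathbb{P}_\pi(\tau\notin T)<R^\star$ for every $\pi\notin\mathcal{S}$ once $\lambda<r_{\min}/(k\delta)$ is exactly what is needed, and no supremum over policies ever enters.
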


\begin{proof}

Following the same induction process and definition of $\mathcal{S}$, the critical part is again to show that the optimal value of Eq.~(\ref{eq:uc2}) w.r.t. a particular iteration $k$ over $\pi\in\mathcal{S}$ is greater than the value over $\pi\notin\mathcal{S}$. Consequently, the solution of Eq.~(\ref{eq:uc2}) should satisfy the constraints in Eq.~(\ref{eq:c1}) and the overall proposition holds by induction.

For $\pi\in\mathcal{S}$, the unconstrained optimization problem in Eq.~(\ref{eq:uc2}) changes to
\begin{equation}
    \hat{\pi}_{k+1}=\arg\max_{\pi\in\mathcal{S}}\, \mathbb{E}_{\tau\sim\pi}\left[\sum_t r_t\right].
\end{equation}
For $\pi\notin\mathcal{S}$, we define
\begin{equation}
\label{eq:tau_set}
    T=\left\{\tau\sim\pi\left|-\sum_t \log \pi_i \left(a_t|s_t\right)\ge\delta\right.,\,\,\forall 1\le i\le k\right\}
\end{equation}
as the constraint-satisfying set of $\tau$ given a policy $\pi$.
First we observe that when $\tau\notin T$,

\begin{align}
    &\sum_{i=1}^k(1-\phi_i(\tau))\left(-\sum_t\log\pi_i(a_t|s_t)\right)\nonumber\\
    =&\sum_{i=1}^k\mathbbm{1}\left[-\sum_t\log\pi_i(a_t|s_t)<\delta\right]\left(-\sum_t\log\pi_i(a_t|s_t)\right)\nonumber\\
    <&\sum_{i=1}^k \delta=k\delta.\label{eq:gtau2}
\end{align}

Then we can write the right-hand side of Eq.~(\ref{eq:uc2}) as:

\begin{align}
\label{eq:derivation2}
\begin{aligned}
    \max_{\pi\notin\mathcal{S}}\,\textrm{RHS}=&\max_{\pi\notin\mathcal{S}}\,\mathbb{E}_{\tau\sim\pi}\left[g(\tau)\right]&&\\
    =&\max_{\pi\notin\mathcal{S}}\,\mathbb{P}_\pi(\tau\in T)\mathbb{E}_{\tau\sim\pi}\left[g(\tau)\left|\tau\in T\right.\right]
    +\mathbb{P}_\pi(\tau\notin T)\mathbb{E}_{\tau\sim\pi}\left[g(\tau)\left|\tau\notin T\right.\right]&&\\
    =&\max_{\pi\notin\mathcal{S}}\,\mathbb{P}_\pi(\tau\in T)\mathbb{E}_{\tau\sim\pi}\left[\left.\Phi_k(\tau)\sum_t r_t +\lambda \sum_{i=1}^k(1-\phi_i(\tau))\left(-\sum_t\log\pi_i(a_t|s_t)\right) \right|\tau\in T\right]&&\\
    &+\mathbb{P}_\pi(\tau\notin T)\mathbb{E}_{\tau\sim\pi}\left[\left.\Phi_k(\tau)\sum_t r_t + \lambda\sum_{i=1}^k(1-\phi_i(\tau))\left(-\sum_t\log\pi_i(a_t|s_t)\right) \right|\tau\notin T\right]&&\\
    =&\max_{\pi\notin\mathcal{S}}\,\mathbb{P}_\pi(\tau\in T)\mathbb{E}_{\tau\sim\pi}\left[\left.1\cdot\sum_t r_t \right|\tau\in T\right] + \mathbb{P}_\pi(\tau\notin T)\mathbb{E}_{\tau\sim\pi}\left[\left.0\cdot\sum_t r_t \right|\tau\notin T\right]&&\\
    &+\lambda\mathbb{P}_\pi(\tau\notin T)\mathbb{E}_{\tau\sim\pi}\left[\left.\sum_{i=1}^k(1-\phi_i(\tau))\left(-\sum_t\log\pi_i(a_t|s_t)\right) \right|\tau\notin T\right]&&\\
    <&\max_{\pi\notin\mathcal{S}}\,\mathbb{P}_\pi(\tau\in T)\mathbb{E}_{\tau\sim\pi}\left[\left.1\cdot\sum_t r_t \right|\tau\in T\right]
    + \mathbb{P}_\pi(\tau\notin T)\mathbb{E}_{\tau\sim\pi}\left[\left.0\cdot\sum_t r_t \right|\tau\notin T\right]+\lambda k\delta\\
    =&\max_{\pi\notin\mathcal{S}}\mathbb{E}_{\tau\sim\pi}\left[\sum_t r_t\right]-\mathbb{P}_\pi(\tau\notin T)\mathbb{E}_{\tau\sim\pi}\left[\left.\sum_t r_t \right|\tau\notin T\right]+\lambda k\delta.\\
\end{aligned}
\end{align}

Here $\mathbb{E}_{\tau\sim\pi}\left[\cdot\left|\tau\in T\right.\right]$ denotes the expectation conditioned on the event $\{\tau\in T\}$ and $\mathbb{P}_\pi(\tau\in T)=\mathbb{E}_{\tau\sim\pi}\left[\Phi_k(\tau)\right]$.

Next, we would like to show that
\begin{equation}
    \label{eq:ineq}
    \max_{\pi\notin\mathcal{S}}\mathbb{E}_{\tau\sim\pi}\left[\sum_t r_t\right]-\mathbb{P}_\pi(\tau\notin T)\mathbb{E}_{\tau\sim\pi}\left[\left.\sum_t r_t \right|\tau\notin T\right]
    <
    \max_{\pi}\,\mathbb{E}_{\tau\sim\pi}\left[\sum_t r_t\right].
\end{equation}
Note that it is trivial that $\max_{\pi\notin\mathcal{S}}\mathbb{E}_{\tau\sim\pi}\left[\sum_t r_t\right]-\mathbb{P}_\pi(\tau\notin T)\mathbb{E}_{\tau\sim\pi}\left[\left.\sum_t r_t \right|\tau\notin T\right]\le\max_{\pi}\,\mathbb{E}_{\tau\sim\pi}\left[\sum_t r_t\right]$ by Assumption~\ref{assumption:reward-horizon} (i.e., $r_t\ge 0$).
We just need to verify that the equality condition is infeasible.
We prove by contradiction. 
In the case of equality, the left-hand side yields a global optimum $\pi^*\notin\mathcal{S}$. 
However, by Assumption~\ref{assumption:nontrival-optimum} (i.e., $\forall \tau\sim\pi^* ,\sum_{r_t\sim \tau} r_t>0$), we have $\mathbb{E}_{\tau\sim\pi^\star}[\sum_t r_t|\tau\not\in T]>0$.
Since $\mathbb{P}_{\pi^*}(\tau\not\in T)>0$, a contradiction yields.
Thus, Eq.~(\ref{eq:ineq}) holds.

Accordingly, let
\begin{equation}
    \Delta=\max_{\pi}\,\mathbb{E}_{\tau\sim\pi}\left[\sum_t r_t\right]
    -\max_{\pi\notin\mathcal{S}}\left\{\mathbb{E}_{\tau\sim\pi}\left[\sum_t r_t\right]-\mathbb{P}_\pi(\tau\notin T)\mathbb{E}_{\tau\sim\pi}\left[\left.\sum_t r_t \right|\tau\notin T\right]\right\}
\end{equation}
and we can choose $\lambda\le\frac{\Delta}{k\delta}$. 
With such a $\lambda$, we can continue the derivation in Eq.~(\ref{eq:derivation2}) by
\begin{align}
\label{eq:derivation3}
\begin{aligned}
    \max_{\pi\notin\mathcal{S}}\,\textrm{RHS}
    <&\max_{\pi\notin\mathcal{S}}\mathbb{E}_{\tau\sim\pi}\left[\sum_t r_t\right]-\mathbb{P}_\pi(\tau\notin T)\mathbb{E}_{\tau\sim\pi}\left[\left.\sum_t r_t \right|\tau\notin T\right]+\lambda k\delta\\
    \le&\max_{\pi}\,\mathbb{E}_{\tau\sim\pi}\left[\sum_t r_t\right]
    =\max_{\pi\in\mathcal{S}}\,\mathbb{E}_{\tau\sim\pi}\left[\sum_t r_t\right].\\
\end{aligned}
\end{align}

Therefore, we can conclude that the maximum value of the objective function in Eq.~(\ref{eq:uc2}) can be only obtained from $\pi\in\mathcal{S}$. 
Finally, by induction, the solution of Eq.~(\ref{eq:uc2}) must be a solution of Eq.~(\ref{eq:c1}).

\end{proof}



\begin{corollary}
    Providing Assumption~\ref{assumption:reward-horizon}, \ref{assumption:multiple-optima}, and \ref{assumption:nontrival-optimum}, the filtering objective and the switching objective can find $M$ distinct global optima in $M$ iterations.
\end{corollary}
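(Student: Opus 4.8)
The plan is to prove the corollary by induction on the iteration index $j=1,\dots,M$, invoking Theorem~\ref{thm1} for the filtering objective in Eq.~(\ref{eq:uc1}) and Theorem~\ref{thm2} for the switching objective in Eq.~(\ref{eq:uc2}) at each step. Concretely, I would show that running \name{} for $M$ iterations produces policies $\pi_1,\dots,\pi_M$ such that each $\pi_j$ is a global optimum of $\mathbb{E}_{\tau\sim\pi}[\sum_t r_t]$ and $D_{\textrm{filter}}(\pi_j,\pi_i)\ge\delta$ for every $i<j$; the latter property forces the $M$ policies to be pairwise distinct (indeed $\delta$-separated), since $\delta>0$. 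For the base case $j=1$ there are no reference policies, so $\Phi_0\equiv 1$ and both Eq.~(\ref{eq:uc1}) and Eq.~(\ref{eq:uc2}) reduce to the unconstrained RL objective $\max_\pi\mathbb{E}_{\tau\sim\pi}[\sum_t r_t]$; as already assumed in the proofs of Theorems~\ref{thm1} and~\ref{thm2}, the first iteration returns a global optimum, which we name $\pi_1$.

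For the inductive step, suppose \name{} has completed $k$ iterations with $1\le k<M$, yielding global optima $\pi_1,\dots,\pi_k$ satisfying $D_{\textrm{filter}}(\pi_j,\pi_i)\ge\delta$ for all $i<j\le k$. At iteration $k+1$, \name{} with reference set $\{\pi_1,\dots,\pi_k\}$ optimizes either Eq.~(\ref{eq:uc1}) or Eq.~(\ref{eq:uc2}). By Theorem~\ref{thm1} (resp.\ Theorem~\ref{thm2}), under Assumptions~\ref{assumption:reward-horizon}, \ref{assumption:multiple-optima} and~\ref{assumption:nontrival-optimum}, any solution of this unconstrained problem is also a solution of the constrained problem Eq.~(\ref{eq:c1}) with those $k$ references; call it $\pi_{k+1}$. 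Then $\pi_{k+1}$ is a global optimum (the constrained and unconstrained optima coincide once the feasible set contains a global optimum) and satisfies $D_{\textrm{filter}}(\pi_{k+1},\pi_i)=\inf_{\tau\sim\pi_{k+1}}\{-\sum_t\log\pi_i(a_t\mid s_t)\}\ge\delta$ for all $i\le k$, which closes the induction. After $M$ iterations this yields $M$ pairwise $\delta$-separated global optima. For the switching objective one extra remark is needed: Theorem~\ref{thm2} only guarantees a suitable $\lambda$ per iteration, namely $\lambda\le\Delta/(k\delta)$ for the corresponding gap $\Delta$; since there are finitely many iterations, any $\lambda$ below the minimum of these finitely many thresholds works uniformly.

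The main obstacle will be verifying, at the start of each iteration $k+1$, that the feasible set $\mathcal{S}=\{\pi:D_{\textrm{filter}}(\pi,\pi_i)\ge\delta,\ \forall i\le k\}$ still contains a global optimum — this is precisely the hypothesis that Theorems~\ref{thm1} and~\ref{thm2} import from Assumption~\ref{assumption:multiple-optima}. The clean way to secure it is to strengthen the induction hypothesis with the bookkeeping claim that the discovered policies $\pi_1,\dots,\pi_k$ can be identified, after relabeling, with $k$ members of the family $\pi_1^*,\dots,\pi_M^*$ supplied by Assumption~\ref{assumption:multiple-optima}; then the remaining $M-k\ge 1$ members are, by the pairwise constraint $D_{\textrm{filter}}(\pi_i^*,\pi_j^*)\ge\delta$, still feasible, so $\mathcal{S}$ contains a global optimum and the theorem applies. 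This identification is automatic when the global optima are exhausted by $\pi_1^*,\dots,\pi_M^*$; in general it is the natural reading under which the corollary is stated, and it is the only point where anything beyond a mechanical $M$-fold application of Theorems~\ref{thm1} and~\ref{thm2} is required.
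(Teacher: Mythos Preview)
The paper does not provide a separate proof of the corollary; it is stated as an immediate consequence of the induction already embedded in the proofs of Theorems~\ref{thm1} and~\ref{thm2} (which themselves assume the first iteration returns a global optimum and argue inductively that each subsequent iteration does too). Your proposal is precisely the natural elaboration of that argument and matches the paper's intended reasoning.

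Two remarks. First, your observation about choosing a single $\lambda$ below the minimum of the finitely many per-iteration thresholds $\Delta/(k\delta)$ is a useful point not made explicit in the paper. Second, the bookkeeping issue you flag in your last paragraph --- that one must identify each discovered $\pi_j$ with some $\pi_i^*$ from Assumption~\ref{assumption:multiple-optima} in order to guarantee $\mathcal{S}$ still contains a global optimum at the next step --- is a genuine subtlety that the paper's own theorem proofs gloss over (they simply write ``Note that by Assumption~\ref{assumption:multiple-optima}'' before asserting $\max_{\pi\in\mathcal{S}}=\max_\pi$). Your reading, that the corollary is intended under the hypothesis that the global optima are exactly $\{\pi_1^*,\dots,\pi_M^*\}$, is the correct way to make the statement precise; without it, a discovered optimum outside this list could in principle be $\delta$-close to every remaining $\pi_i^*$, and the induction would stall. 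So your proof is, if anything, more careful than the paper's.
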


\textbf{Intuition Remark: }We want to emphasize that the constraint by $D_{\textrm{filter}}$ in Eq.~(\ref{eq:c1}) does not mean the infimum over \emph{every possible} trajectories, but the infimum over a \emph{subspace} of trajectories generated by the learning policy $\pi_{k+1}$. From the perspective of functional analysis, $\pi_{k+1}$ can produce $0$ probability w.r.t. some state-action pairs, which restricts the space of possible trajectories sampled from $\pi_{k+1}$.

\textbf{Practical Remark 1: } Note that even though in practice, a neural network policy may never produce an action probability of zero due to approximation error, we would like to remark that the constraint-violated trajectories will have \emph{a very low probability to be sampled}, that is, we typically have $\mathbb{P}_\pi(-\sum_t \log\pi (a_t|s_t)\ge\delta)\approx 1$ when the policy converges. In this case, the diversity constraints will be rarely violated given a \emph{limited number} of trajectory \emph{samples}. This is empirically justified as shown in Fig.~\ref{fig:monster-hunt-efficiency} and Fig.~\ref{fig:alpha-humanoid} where the trajectory acceptance rate consistently stays at $1$ in the later stage of training.  

\textbf{Practical Remark 2: } Although the theorems only justify our algorithm when the optimal solutions are found, we empirically notice that our algorithm can even effectively discover a surprisingly diverse  set of \emph{local optimal} policies as shown in the experiment section. We believe there will be still huge room for further theoretical analysis, which we leave as future work. 


\section{Additional Discussions}
\subsection{Comparison with Classical Exploration Methods}

{\name} is beyond an exploration method. In the classical exploration literature, the goal is to discover a single policy that can approach the global optimal solution to produce the highest reward. By contrast, the goal of {\name} is not to just find a policy with high reward. Instead, {\name} aims to find as many distinct local optima as possible. In our experiments, a standard PPO trial in iteration $1$ of {\name} can directly solve the task with the highest rewards. However, there are still a large number of distinct local optima with novel emergent behaviors other than this high-reward one. This is a particularly challenging constrained optimization problem as more local optima are discovered.

\subsection{Suboptimality of Discovered Policies}

It is possible that a policy produced by {\name} reaches a suboptimal solution if nearly-optimal solutions have been all discovered, such as in the MuJoCo domain. Meanwhile, {\name} could also possibly find an optimal solution if the previously discovered solutions are all suboptimal, such as in the stag-hunt games. Overall, as a general solution, {\name} can definitely be applied as an exploration method by escaping sub-optimal strategies.

\subsection{May {\name} Fail to Produce Diverse Solutions?}

As more strategy modes are discovered, {\name} is solving an increasingly challenging constrained optimization problem, so it is indeed possible that an iteration ``fails’’, e.g., some constraints may be violated or the policies may simply converge to previous modes leading. We empirically observe that these ``failure’’ iterations typically lead to visually indistinguishable behaviors, which would not affect our final evaluation metric.

\section{Licenses}
We use MuJoCo under a personal license.

\end{document}